\newcommand{\argmin}{\operatornamewithlimits{argmin}}
\newcommand{\sign}{\mathrm{sgn\,}}
\newcommand{\id}{\mathrm{Id}}
\newcommand{\RR}{{\mathbb{R}}}
\newcommand{\NN}{{\mathbb{N}}}
\newcommand{\CC}{{\mathbb{C}}}
\newcommand{\EE}{{\mathbb{E}}}
\newcommand{\dd}{{\mathrm{d}}}
\newcommand{\iprod}[1]{\langle#1\rangle}
\renewcommand{\aa}{{\bm{a}}}
\newcommand{\xx}{{\bm{x}}}
\newcommand{\yy}{{\bm{y}}}
\newcommand{\ttheta}{{\bm{\theta}}}
\newcommand{\refeq}[1]{\eqref{eq:#1}}
\newcommand{\reffig}[1]{Figure~\ref{fig:#1}}
\newcommand{\refapp}[1]{Appendix~\ref{app:#1}}
\newcommand{\refsec}[1]{\S~\ref{sec:#1}}
\newtheorem{thm}{Theorem}[section]
\newtheorem{prop}[thm]{Proposition}
\newcommand{\subhead}[1]{\noindent\textbf{#1}\;}%
\begin{document}

\title{The global optimum of shallow neural network is attained by ridgelet transform}

\author{
	Sho Sonoda, 
	Isao Ishikawa,
	Masahiro Ikeda,
	Kei Hagihara\\
	RIKEN AIP, Tokyo, Japan\\
	\texttt{\{sho.sonoda, isao.ishikawa, masahiro.ikeda, kei.hagihara\}@riken.jp}\\
	\and
	Yoshihiro Sawano\\
	Tokyo Metropolitan University, Tokyo, Japan\\
	\texttt{ysawano@tmu.ac.jp}
	\and
	Takuo Matsubara, Noboru Murata\\
	Waseda University, Tokyo, Japan\\
	\texttt{takuo.matsubara@suou.waseda.jp}\\
	\texttt{noboru.murata@eb.waseda.ac.jp}
}

\date{January 23, 2019}



\maketitle

\begin{abstract}
We prove that the global minimum of the backpropagation (BP) training problem of neural networks with an arbitrary nonlinear activation is given by the ridgelet transform.
A series of computational experiments show that there exists an interesting similarity between the scatter plot of hidden parameters in a shallow neural network after the BP training and the spectrum of the ridgelet transform.
By introducing a continuous model of neural networks, we reduce the training problem to a convex optimization in an infinite dimensional Hilbert space, and obtain the explicit expression of the global optimizer via the ridgelet transform.
\end{abstract}

\section{Introduction}

Training a neural network is conducted by backpropagation (BP), 
which results in a \emph{high-dimensional and non-convex} optimization problem.
Despite the difficulty of the optimization problem, deep learning has achieved great success in a wide range of applications such as image recognition \citep{Redmon2015}, speech synthesis \citep{Oord2016a}, and game playing \citep{Silver2017}.
The empirical success of deep learning suggests a conjecture that ``all'' local minima of the training problem are close or equal to global minima \citep{Dauphin2014,Choromanska2015a}.
Therefore, radical reviews of the shape of loss surfaces are ongoing \citep{Draxler2018,Garipov2018}.
However, these lines of studies pose strong assumptions such as linear activation \citep{Kawaguchi2016,Hardt2017}, overparameterization \citep{Nguyen2017}, Gaussian data distribution \citep{Brutzkus2017}, and shallow network, i.e. single hidden layer \citep{Li2017, Soltanolkotabi2017, Zhong2017, Du2018, Ge2018, Soudry2018}.

The scope of this study is the shape of the global optimizer itself, rather than the reachability to the global optimum via empirical risk minimization.
By recasting the BP training as a variational problem, i.e. an optimization problem in a function space, 
in the settings of the shallow neural network with an \emph{arbitrary} activation function and the mean squared error,
we present an \emph{explicit expression} of the global minimizer via the \emph{ridgelet transform}.
By virtue of functional analysis, our result is independent of the parameterization of neural networks.

\reffig{ridgelet} presents an intriguing example that motivates our study. 
Both Figures \ref{fig:bp} and \ref{fig:spectrum} 
were obtained from the same dataset shown in \reffig{data},
and they show similar patterns to each other.
However, they were obtained from entirely different procedures: numerical optimization and numerical integration.
In the following, we provide a brief explanation of the experiments. See \refsec{motivation} for more details.

\begin{figure}[t]
\centering
\begin{subfigure}[c]{0.5\textwidth}
\includegraphics[width=\linewidth, trim=1cm 0cm 1cm 1cm, clip]{./exp/nsin02pt_e0010_n1000_tanh_h010_adam_nn1000}
\caption{BP trained parameters}
\label{fig:bp}
\end{subfigure}%
\begin{subfigure}[c]{0.5\textwidth}
\includegraphics[width=\linewidth, trim=1cm 0cm 1cm 1cm, clip]{./exp/nsin02pt_e0010_n1000_tanh_2d}
\caption{ridgelet spectrum}
\label{fig:spectrum}
\end{subfigure}\\
\begin{subfigure}[c]{1.0\textwidth}
\includegraphics[width=\linewidth, trim=0cm 0cm 0cm 0cm, clip]{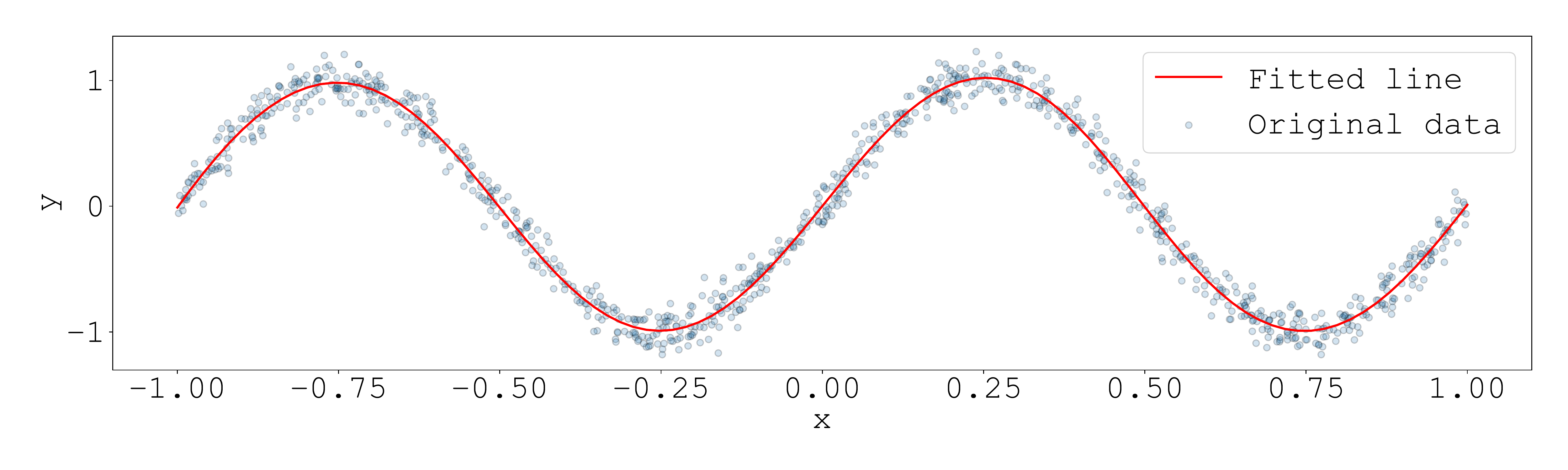}
\caption{dataset and an example of training results}
\label{fig:data}
\end{subfigure}%
\caption{Motivating example: Scatter plot (a) and ridgelet spectrum (b) were obtained from the same dataset (c) and bear an intriguing resemblance to each other, despite the fact that they were obtained from different procedures---numerical optimization and numerical integration.}
\label{fig:ridgelet}
\end{figure}

\reffig{bp} shows the scatter plot of the parameters $(a_j,b_j,c_j)$ in neural networks
$g(x ; \ttheta) = \sum_{j=1}^{p} c_j \sigma( a_j x - b_j  )$
that had been trained with dataset $D = \{ (x_i, y_i) \}_{i=1}^{1,000}$.
The dataset is composed of uniform random variables $x_i$ in $[-1,1]$, and the response variables $\sin 2 \pi x_i + \varepsilon_i$ with Gaussian random noise $\varepsilon_i$.
We trained $n=1,000$ shallow neural networks.
Each network had $p=10$ hidden units with activation $\sigma(z) = \tanh z$.
We employed ADAM for the training.
The scatter plot presents the $np = 10,000$ sets of trained parameters $(a_j, b_j, c_j)$, where $c_j$ is visualized in color.

On the other hand, \reffig{spectrum} shows the spectrum of the (classic) ridgelet transform 
\begin{align}
    R[f](a,b) := \int_{\RR} f(x) \overline{\rho( a x - b )} \dd x,
\end{align}
of $f(x) = \sin 2 \pi x + \varepsilon, \, (x \in [-1,1])$ with a certain ridgelet function $\rho$.
(See \refsec{background.ridgelet} for more details on the ridgelet transform.)
We calculated the spectrum by using numerical integration, and used the dataset $D$.
Therefore, two figures are obtained from the same dataset.

Even though the two figures are obtained from different procedures, both results are $10$-point star shaped.
In other words, the BP trained parameters $(a_j, b_j, c_j)$ concentrate in the high intensity areas in the ridgelet spectrum.
From this interesting similarity, we can conjecture that the global minimizer has a certain relation to the ridgelet transform.

In this study, we investigate the relation between the BP training problem and the ridgelet transform
by reformulating the BP training in a function space,
and show that the ridgelet transform can offer the global minimizer of the BP training problem.

 \section{Preliminaries} \label{sec:background.ridgelet}
 We provide several notation and describe the problem formulation.
 The most important notion is the `BP in the function space,' which plays a key role to formulate our research question.

 \subsection{Mathematical Notation}
 $\overline{z}$ denotes the complex conjugate of a complex number $z$.
 $\widehat{f}$ denotes the Fourier transform $\int f(\xx) e^{-i \xx \cdot \bm{\xi}} \dd \xx$ of a function $f$.
 $\widetilde{f}$ denotes the reflection of a function $f$, i.e. $\widetilde{f}(\xx) = f(-\xx)$.

 $L^2(\mu)$ denotes the Hilbert space equipped with inner product $\iprod{f,g} := \int f(\xx) \overline{g(\xx)} \dd \mu(\xx)$.
 $T^*$ denotes the adjoint operator of a linear operator $T$ on a Hilbert space.

 $\EE_X[ f(X) ]$ denotes the expectation of a function $f(x)$ with respect to the random variable $X$.

 \subsection{Problem Settings}

 \subhead{Neural Network.}
 We consider an $m$-in-$1$-out shallow neural network with 
 an arbitrary activation function $\sigma : \RR \to \CC$:
 \begin{align}
 g(\xx;\ttheta) = \sum_{j=1}^p c_j \sigma( \aa_j \cdot \xx - b_j ), \quad \xx \in \RR^m \label{eq:ordinary.nn}
 \end{align}
 where $p \in \NN$ is the number of hidden units,
 $(\aa_j, b_j) \in \RR^m \times \RR$ are hidden parameters and $c_j \in \CC$ are output parameters.
 By $\ttheta$, we collectively write a set of parameters $\{(\aa_j, b_j, c_j) \}_{j=1}^p$.
 Here, we remark that the $1$-dimensional output assumption is only for simplicity,
 and we can easily generalize our results to the multi-dimensional output case.
 Examples of the activation function are Gaussian, hyperbolic tangent, sigmoidal function and rectified linear unit (ReLU).

 \subhead{Cost Function.}
 We formulate the BP training as the minimization problem of the mean squared error
 \begin{align}
     L(\ttheta) = \EE_X | f(X) - g(X ; \ttheta) |^2 + \Omega(\ttheta), \label{eq:ordinary.bp}
 \end{align}
 with a certain regularization $\Omega$,
 where $f : \RR^m \to \CC$ denotes the ground truth function.
 Here, we remark that this formulation covers any empirical risk function $L_s(\ttheta) = \frac{1}{s} \sum_{i=1}^s | y_i - g(\xx_i ; \ttheta) |^2 + \Omega(\ttheta)$,
 by choosing the data distribution as an empirical distribution.

 \subsection{Integral Representation of Neural Network}
 In order to recast the BP training in the function space,
 we introduce 
 the \emph{integral representation} of a neural network:
 \begin{align}
 S[\gamma](\xx) := \int_{\RR^m \times \RR} \gamma(\aa, b) \sigma( \aa \cdot \xx - b ) \dd \lambda(\aa, b), \label{eq:intrep}
 \end{align}
 where $\gamma : \RR^m \times \RR \to \CC$ are the coefficient function,
 $\sigma : \RR \to \CC$ is the activation function employed in \refeq{ordinary.nn},
 and  
 $\lambda$ is the base measure on $\RR^m \times \RR$.

 \subhead{Brief Description.}
 Formally speaking, $S[\gamma]$ is an infinite sum of hidden units $\sigma(\aa \cdot \xx - b)$.
 In the integral representation, all the hidden parameters $(\aa, b)$ are integrated out,
 and only the output parameter $\gamma(\aa,b)$ is left.
 In other words, $\gamma(\aa,b)$ indicates which $(\aa,b)$ to use by weighting on them.

 \subhead{Function Class.}
 In this study, we assume that $\gamma \in L^2(\lambda)$, and $\lambda$ be a Borel measure.
 As described in Proposition \ref{image of S}, the base measure $\lambda$ controls the expressive power of neural networks, i.e. the capacity of $\{ S[\gamma] \mid \gamma \in L^2(\lambda) \}$.

 \subhead{Important Examples.} Two extreme cases are important: (a) $\lambda$ is the Lebesgue measure, and (b) $\lambda$ is a sum of Dirac measures.
 When $\lambda$ is the Lebesgue measure $\dd \aa \dd b$, then $S[\gamma]$ can express any $L^2$-function \citep{Sonoda2015}. On the other hand, when $\lambda$ is a sum of Dirac measures,
 then $S[\gamma]$ can express any finite neural network \refeq{ordinary.nn}.
 With a slight abuse of notation, write
 \begin{align}
     \gamma_\ttheta \dd \lambda = \sum_{j=1}^p c_j \delta_{(\aa_j, b_j)},
 \end{align}
 for $\ttheta = \{ (\aa_j, b_j, c_j) \}_{j=1}^p$,
 where $\delta_{(\aa, b)}$ denotes the Dirac delta centered at $(\aa, b)$.
 Then, $S[\gamma_\ttheta](\xx) = \sum_{j=1}^p c_j \sigma( \aa_j \cdot \xx - b_j )$.
 In other words, the integral representation is a \emph{reparameterization} of neural networks, and $\gamma_\ttheta$ is the simplest way to connect the integral representation and the ordinary representation.

 \subhead{Advantages.}
 The integral representation has at least two advantages over the `ordinary representation' \refeq{ordinary.nn}.
 The first advantage is that $\gamma_\ttheta$ can expressive any distribution of parameters. By virtue of this flexibility, we can identify the scatter plot \reffig{bp} as a point spectrum, and \reffig{spectrum} as a continuous spectrum.

 The second advantage is that the hidden parameters are integrated out, and that the output parameter is the only trainable parameter.
 Recall that the BP training of ordinary neural networks $g(\xx;\ttheta)$ is a non-convex optimization problem.
 The non-convexity is caused by the hidden parameters $(\aa_j,b_j)$, because they are placed in the nonlinear function $\sigma$.
 At the same time, the non-convexity is never caused by the output parameters $c_j$, because they are placed out of $\sigma$.
 On the other hand, in the integral representation, no trainable parameters are placed in $\sigma$.
 By virtue of this linearity, the BP training of $S[\gamma]$, which is described later in this section, becomes a \emph{convex} optimization problem.

 \subhead{Brief History.} Originally, the integral representation and ridgelet transform have been developed to investigate the expressive power of neural networks \citep{Barron1993, Murata1996, Candes.PhD}, and to estimate the approximation errors \citep{Kurkova2012}.
 Recently, it has been applied to synthesize neural networks without BP training, by approximating the integral transform with a Riemannian sum \citep{Sonoda2014, Bach2014, Bach2015}; to facilitate the inner mechanism of the so-called ``black-box'' networks \citep{Sonoda2018a}, and to estimate the generalization errors of deep neural networks from the decay of eigenvalues \citep{Suzuki2018}.

 \subsection{Ridgelet Transform} \label{sec:ridgelet}
 We placed the explanation of the ridgelet transform soon after the integral representation, because it is natural to understand the ridgelet transform as a right inverse operator for the integral representation operator.

 Let us consider an integral equation $S[\gamma] = f$, where $S$ is an integral representation operator, $f$ is a given function, and $\gamma$ is the unknown function.
 In the context of neural networks, this equation means a prototype of learning.
 Namely, to learn $f$ is to find a solution $\gamma$ from the observation $f$.
 \citet{Murata1996} and \citet{Candes.PhD} discovered that the ridgelet transform provides a particular solution to the equation.

 To be precise, when the base measure $\lambda$ of $S$ is the Lebesgue measure $\dd \aa \dd b$,
 the function $f$ belongs to $L^2(\RR^m)$,
 and there exists a ridgelet function $\rho : \RR \to \CC$ that satisfies the \emph{admissibility condition}
 \begin{align}
 \int_\RR \frac{\widehat{\sigma}(\zeta)\overline{\widehat{\rho}(\zeta)}}{|\zeta|^{m}} \dd \zeta = 1,
 \end{align}
 for the activation function $\sigma$ in $S$,
 then a particular solution to $S[\gamma] = f$ is given by the ridgelet transform
 \begin{align}
     R[f](\aa,b) = \int_{\RR^m} f(\xx) \overline{\rho(\aa \cdot \xx - b)} \dd \xx. \label{eq:classic}
 \end{align}
 This is what we call the \emph{classic ridgelet transform}.

 Here, we remark that the solution is not unique. On the contrary, there are an infinite number of different particular solutions, say $\gamma$ and $\gamma'$, that satisfy $\gamma \neq \gamma'$ but $S[\gamma] = S[\gamma']$. This is immediate from the fact that there are infinitely many different admissible ridgelet functions $\rho$ and $\rho'$.
 Therefore, a single $R$ (specified by $\rho$) is not the exact inverse to $S$, which must satisfy both $SR = \id$ and $RS = \id$; but only a right inverse, which only satisfies $SR = \id$.

 In the context of neural networks, the existence of a solution operator $R[f]$ for any function $f$
 means the \emph{universal approximation property}, 
 because a neural network $S[\gamma]$ can express any function $f$ by just letting $\gamma = R[f]$.

 As demonstrated in \refsec{motivation}, the ridgelet transform can be computed by numerical integration.
 See \citet{Starck2010} and \citet{Sonoda2015} for more details on ridgelet analysis.

 \subsection{BP in the Function Space}
 We rewrite the BP training as the minimization problem of 
 \begin{align}
     L[\gamma] = \EE_X| f(X) - S[\gamma](X) |^2 + \Omega[\gamma],
 \end{align}
 with respect to $\gamma \in L^2(\lambda)$.
 This reformulation formally extends the ordinary formulation \refeq{ordinary.bp}, because $L[\gamma_\ttheta] = L(\ttheta)$.
 In other words, we can understand the ordinary BP problem in the function space, as depicted in \reffig{fgradient}.
 We call the minimization problem of $L[\gamma]$ as the \emph{BP in the function space}.
 As we mentioned above, by virtue of the linearity of $S$, the BP in the function is reduced as a \emph{quadratic programming} problem.

 \begin{figure}
     \centering
     \includegraphics[trim=0cm 0cm 1cm 2cm, width=0.5\textwidth,clip]{./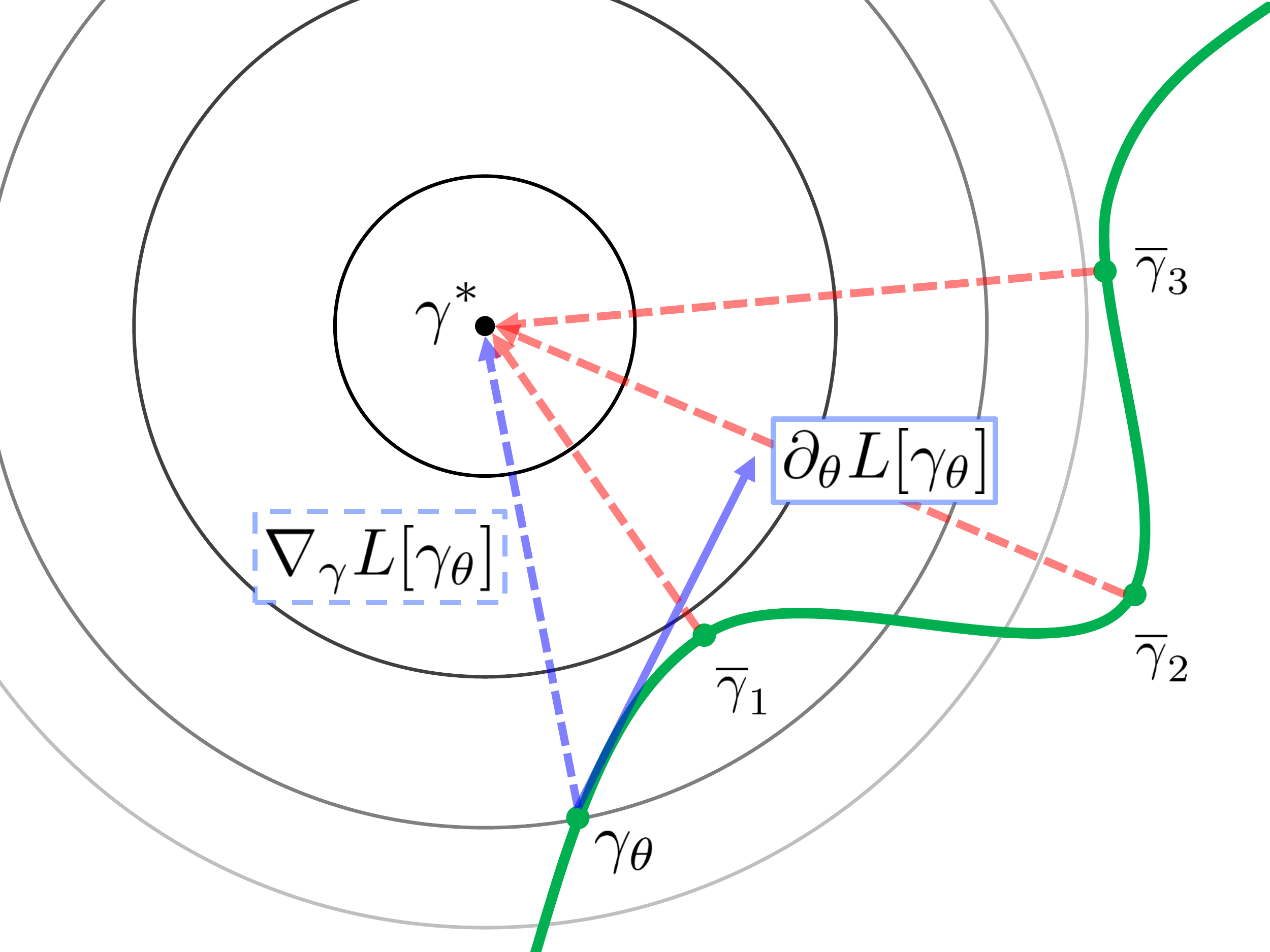}
     \caption{Relation between the minimization problems of $L[\gamma]$ and $L(\ttheta)$.
     The green curve depicts a finite dimensional subspace parameterized by $\ttheta$ and embedded in the ground function space. Since $\gamma_\ttheta$ is restricted to the subspace, the gradient vector (dashed) is also projected to the tangent space (solid), and thus the gradient descent generally goes off in a different direction from the global minimizer $\gamma^*$. If the subspace is curved in the ground function space, there would be multiple local optima such as $\overline{\gamma}_1, \overline{\gamma}_2$ and $\overline{\gamma}_3$.}
     \label{fig:fgradient}
 \end{figure}

 Mathematically speaking, contrary to the finite dimensional optimization problem,
 existence and uniqueness of the solution depend on the properties of $\gamma, f$ and $S$.
 For the sake of simplicity, we consider a simple case $\Omega(\gamma) = \beta \| \gamma \|_{L^2(\lambda)}^2$ with $\beta > 0$.
 In this case, the sufficient condition for the unique existence of the solution is that $S$ is Lipschitz continuous.
 See \refapp{tikhonov} for more details.

 \subhead{Where are the Local Minima?}
 The BP training in the function space, i.e. $\min_\gamma L[\gamma]$, has a unique global minimum because it is a quadratic programming,
 while the BP training in the parameter space, i.e. $\min_\ttheta L(\ttheta)$, generally has a large number of local minima.
 This is not a paradox, but simply a matter of parameterization.

 In order to figure out the paradox, let us consider performing gradient descent for cost functions $L[\gamma]$ and $L(\ttheta)$.
 Namely, for $L[\gamma]$, we use functional gradient (Fr\'{e}chet derivative) $\nabla_\gamma L$; and for $L(\ttheta)$, we use partial derivative $\partial_\ttheta L$.

 Between these two derivatives, a chain-rule holds:
 \begin{align}
 \partial_\ttheta L[ \gamma_\ttheta ]
 &= \int_{\RR^m \times \RR} \nabla_\gamma L[\gamma_\ttheta(\aa,b)] \partial_\ttheta \gamma_\ttheta(\aa, b) \dd \lambda(\aa,b) \nonumber \\
 &= \iprod{\nabla_\gamma L[\gamma_\ttheta], \partial_\ttheta \gamma_\ttheta}_{L^2(\lambda)}.
 \end{align}
 In other words, this is a change-of-coordinate from $\gamma$ to $\ttheta$.

 According to the chain-rule, if the functional gradient vanishes: $\nabla_\gamma L[\gamma_{\ttheta^*}] = 0$ at some $\gamma_{\ttheta^*}$,
 then the partial derivative also vanishes: $\partial_\ttheta L( \ttheta^* ) = 0$.
 However, the converse is not always true.
 As depicted in \reffig{fgradient},
 if the partial derivative vanishes: $\partial_\ttheta L( \overline{\ttheta} ) = 0$ at some $\overline{\ttheta}$,
 then $\gamma_{\overline{\ttheta}}$ is simply a local optimizer such as $\overline{\gamma}_1, \overline{\gamma}_2$ and $\overline{\gamma}_3$.

 \subsection{Main Problem}
 At last, our research problem is formulated as to show
 \begin{align}
     \argmin_{\gamma \in L^2(\lambda)} L[\gamma ; f] = R[f],
 \end{align}
 with a suitable reformulation of ridgelet transform $R$, if needed.

 \section{Details on Motivating Examples} \label{sec:motivation}

 \begin{figure*}
    \begin{subfigure}[c]{0.1\textwidth}
    \includegraphics[width=\linewidth]{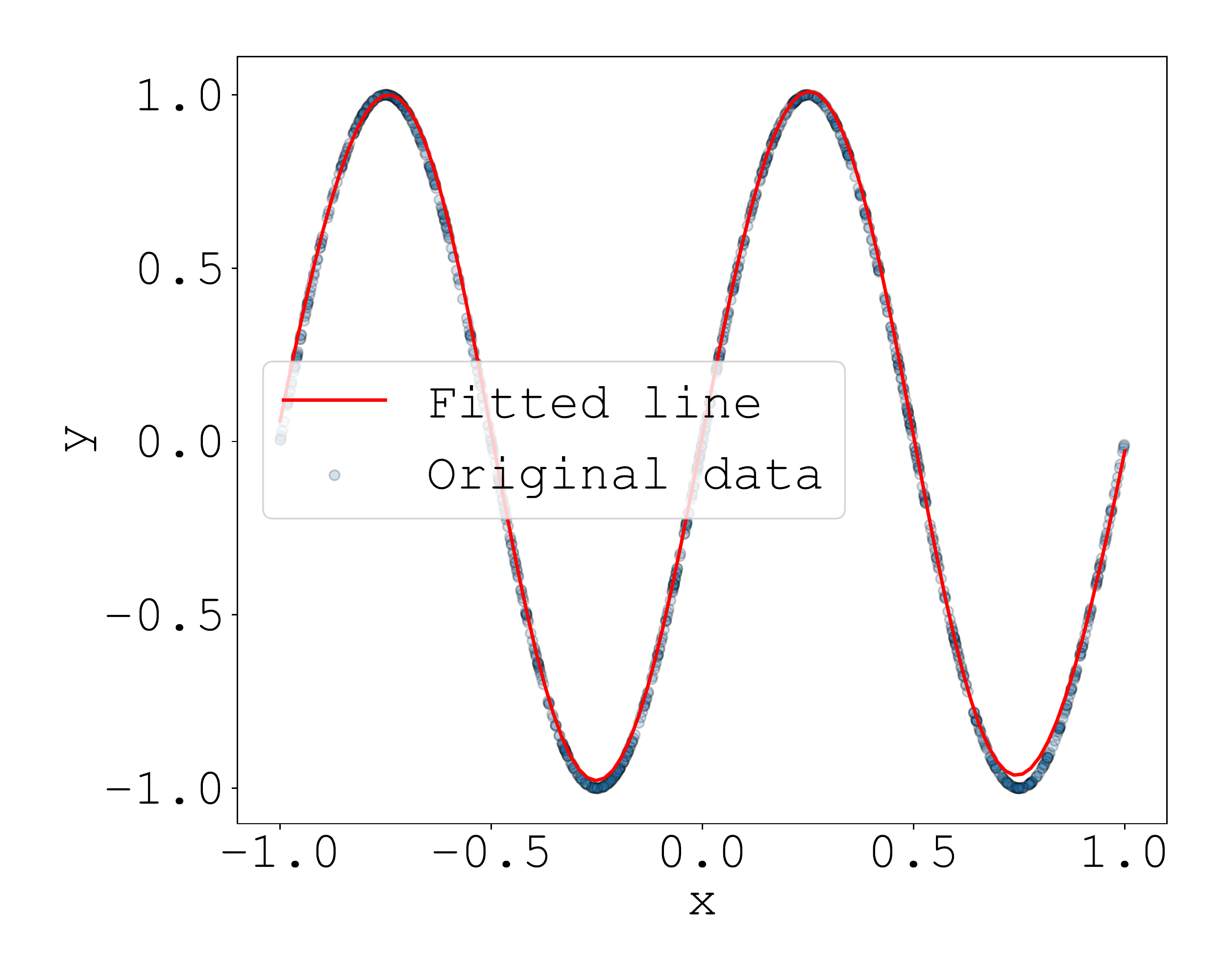}
    \end{subfigure}%
    \begin{subfigure}[c]{0.1\textwidth}
    \includegraphics[width=\linewidth]{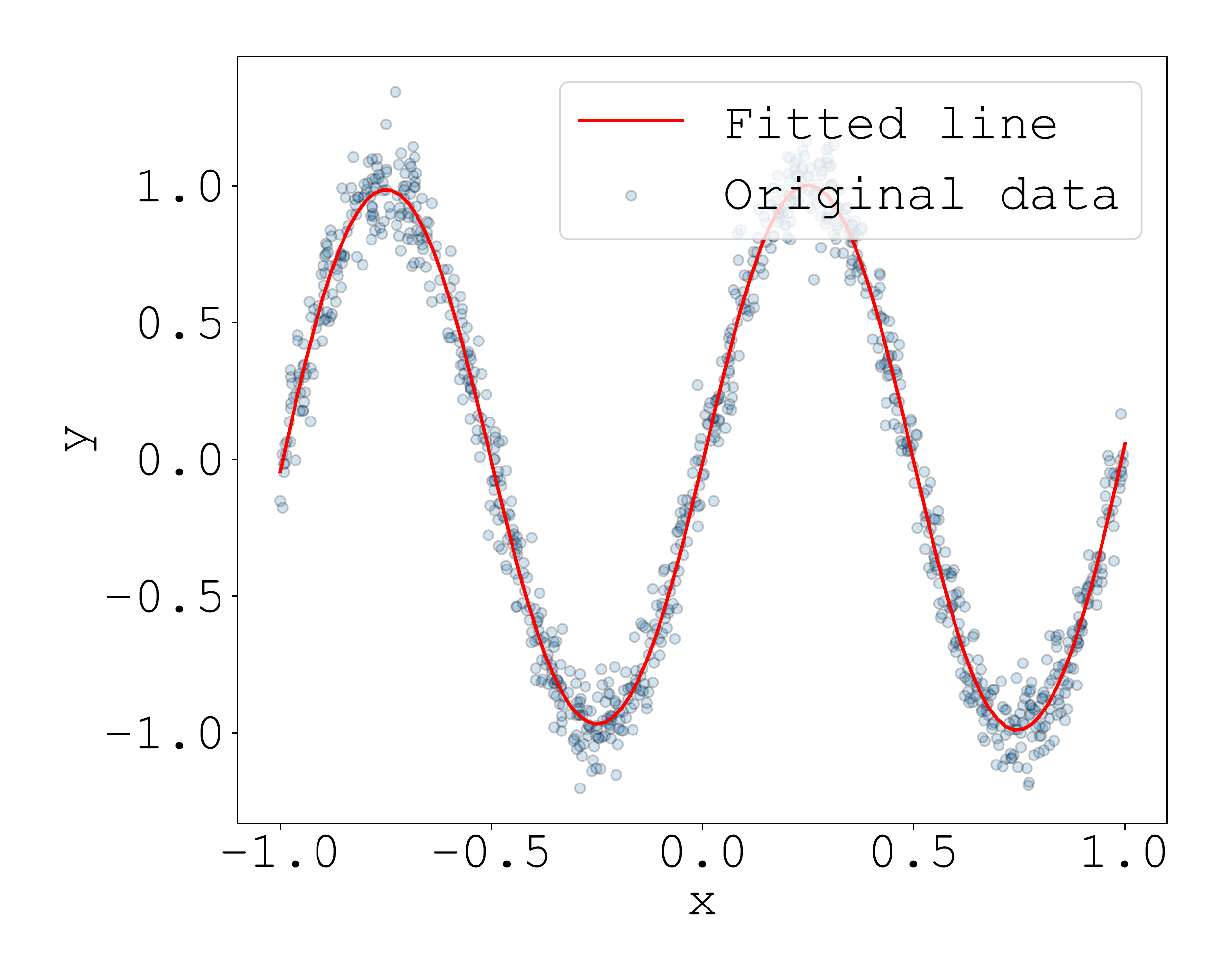}
    \end{subfigure}%
    \begin{subfigure}[c]{0.1\textwidth}
    \includegraphics[width=\linewidth]{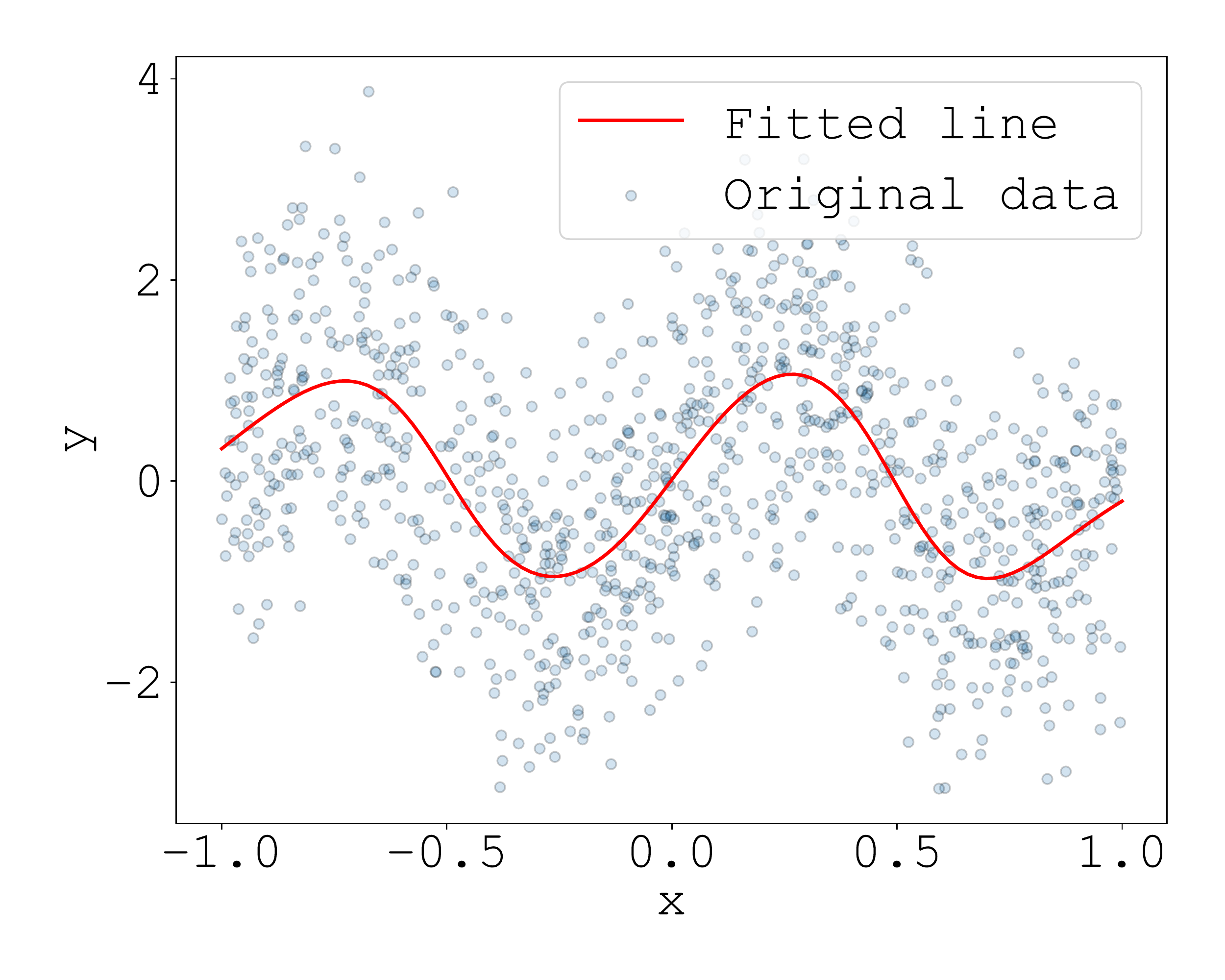}
    \end{subfigure}%
    \begin{subfigure}[c]{0.1\textwidth}
    \includegraphics[width=\linewidth]{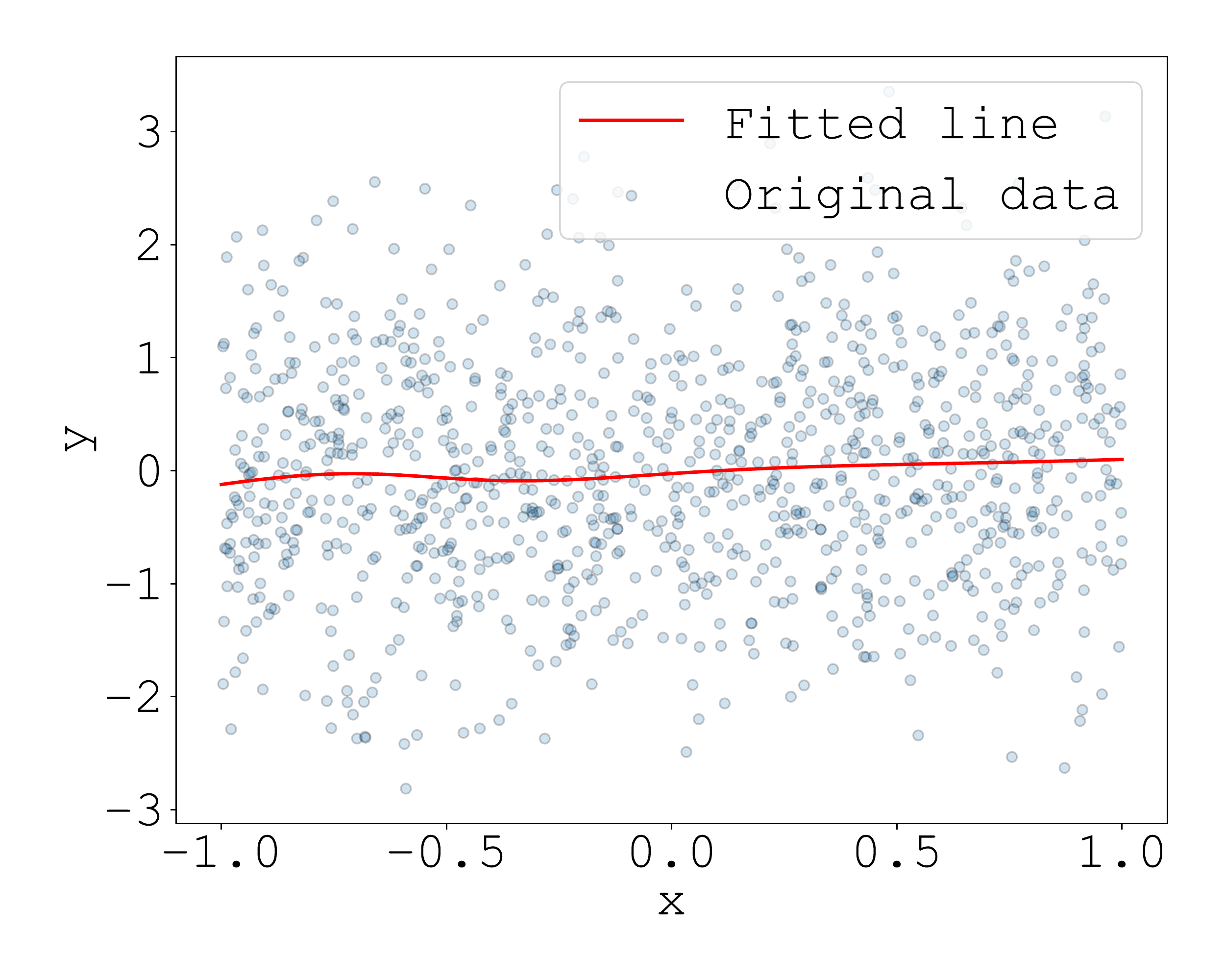}
    \end{subfigure}%
    \begin{subfigure}[c]{0.1\textwidth}
    \includegraphics[width=\linewidth]{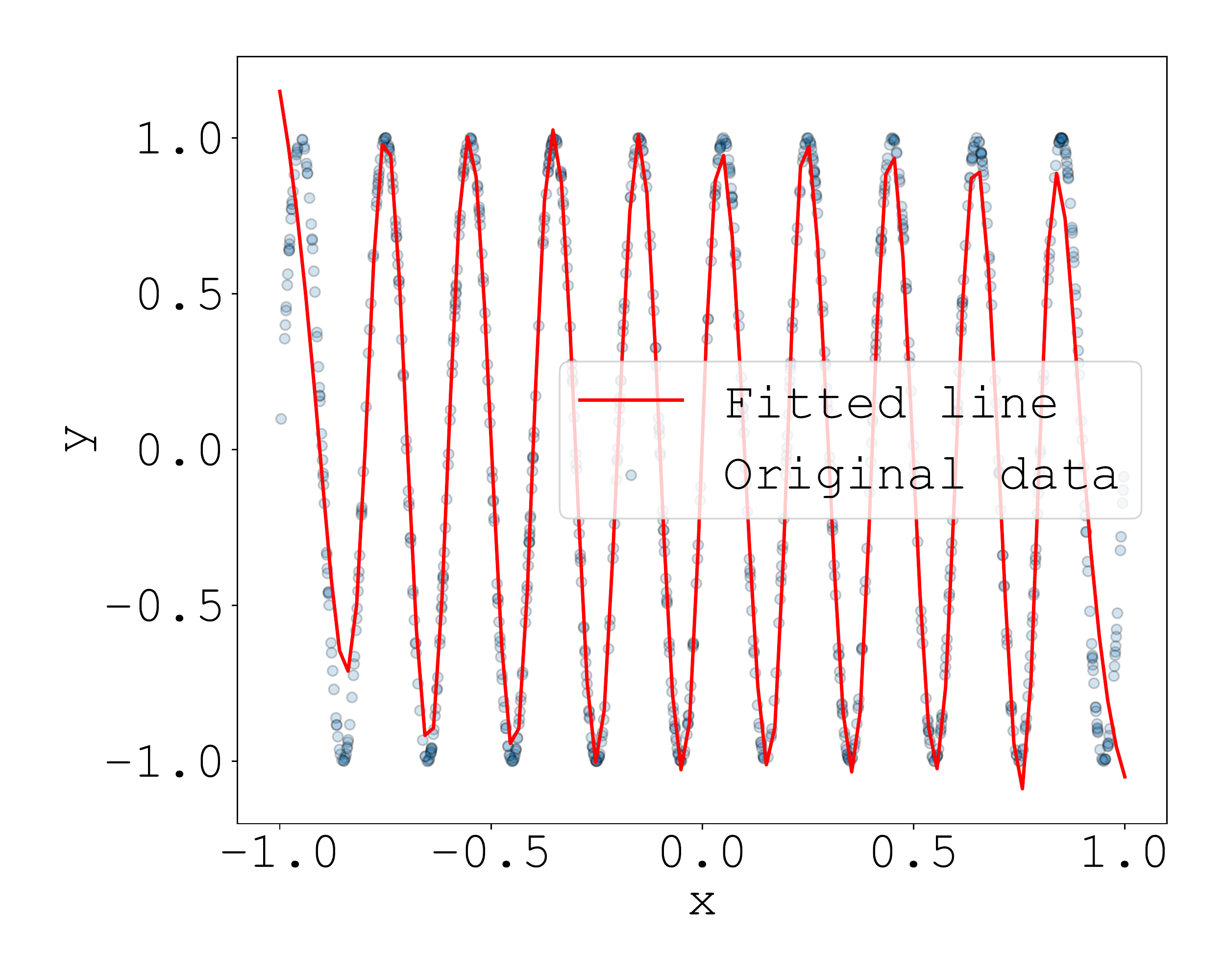}
    \end{subfigure}%
    \begin{subfigure}[c]{0.1\textwidth}
    \includegraphics[width=\linewidth]{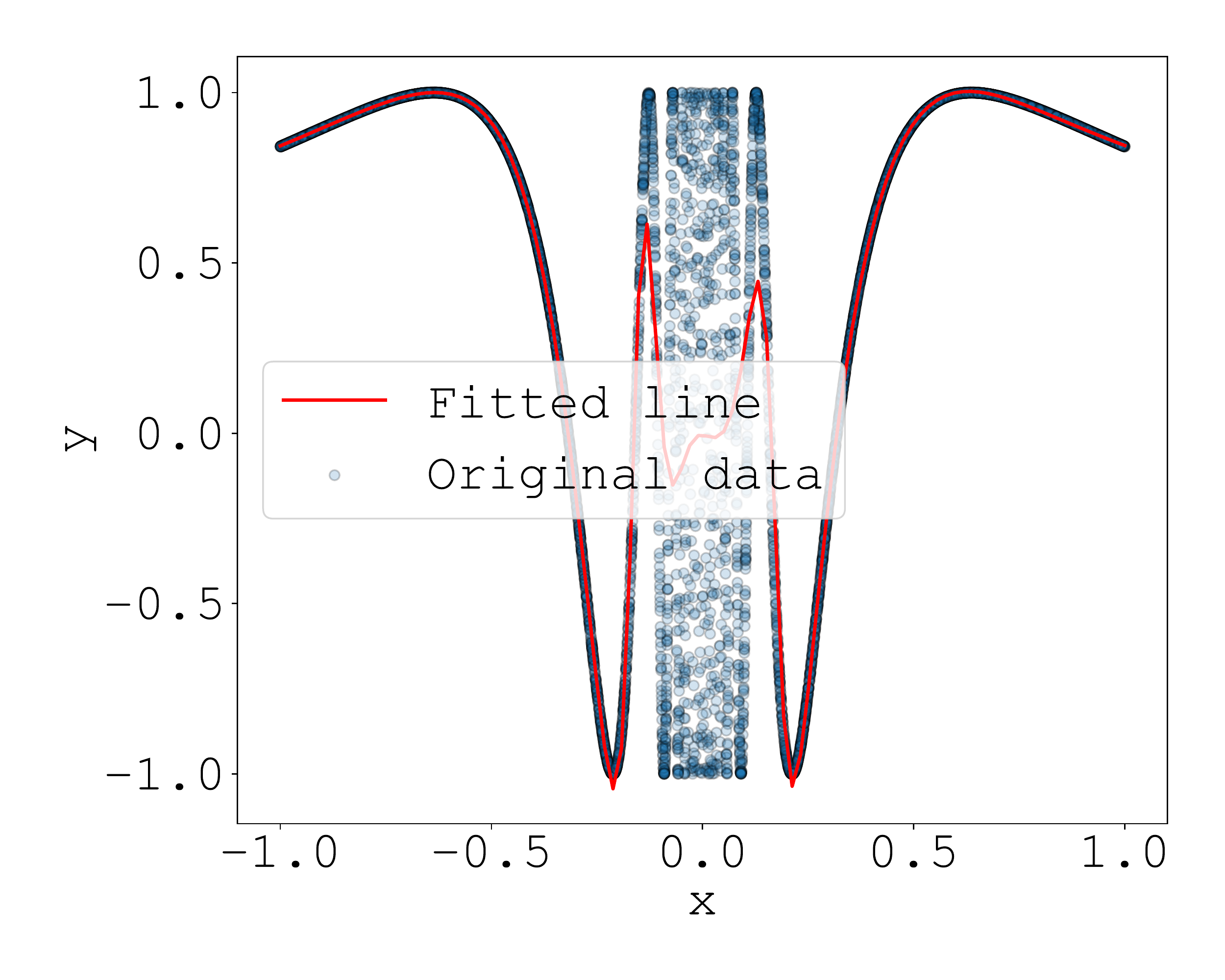}
    \end{subfigure}%
    \begin{subfigure}[c]{0.1\textwidth}
    \includegraphics[width=\linewidth]{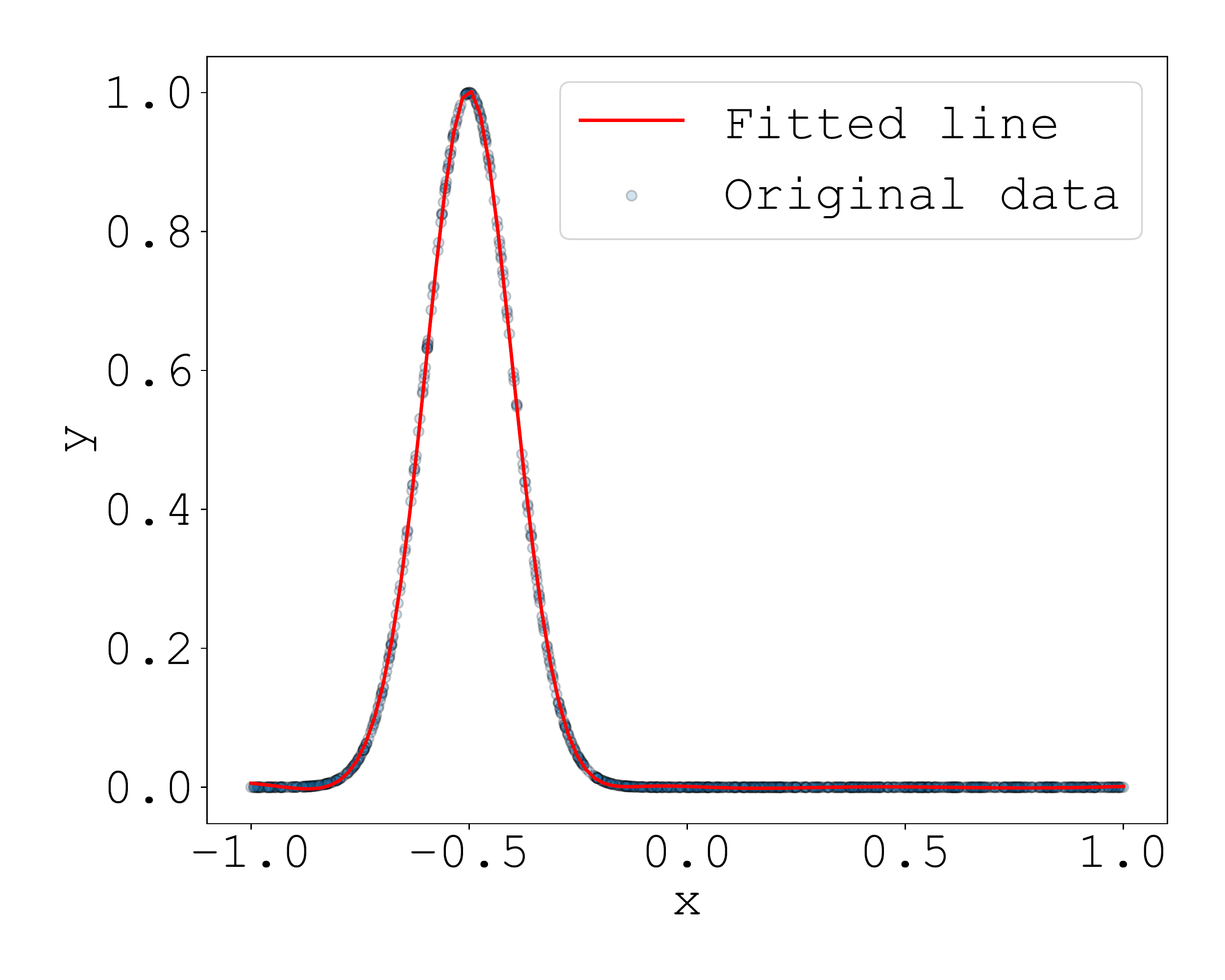}
    \end{subfigure}%
    \begin{subfigure}[c]{0.1\textwidth}
    \includegraphics[width=\linewidth]{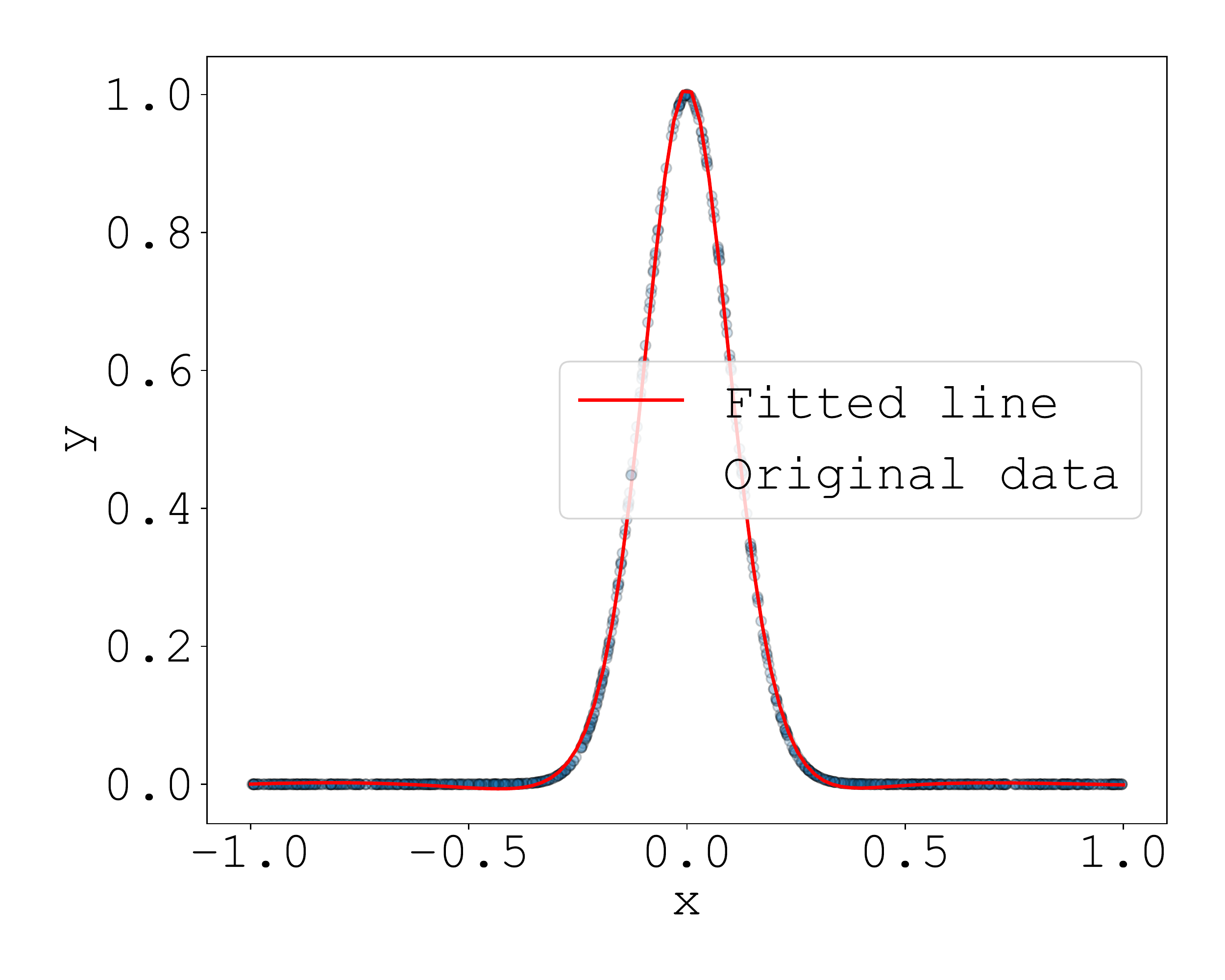}
    \end{subfigure}%
    \begin{subfigure}[c]{0.1\textwidth}
    \includegraphics[width=\linewidth]{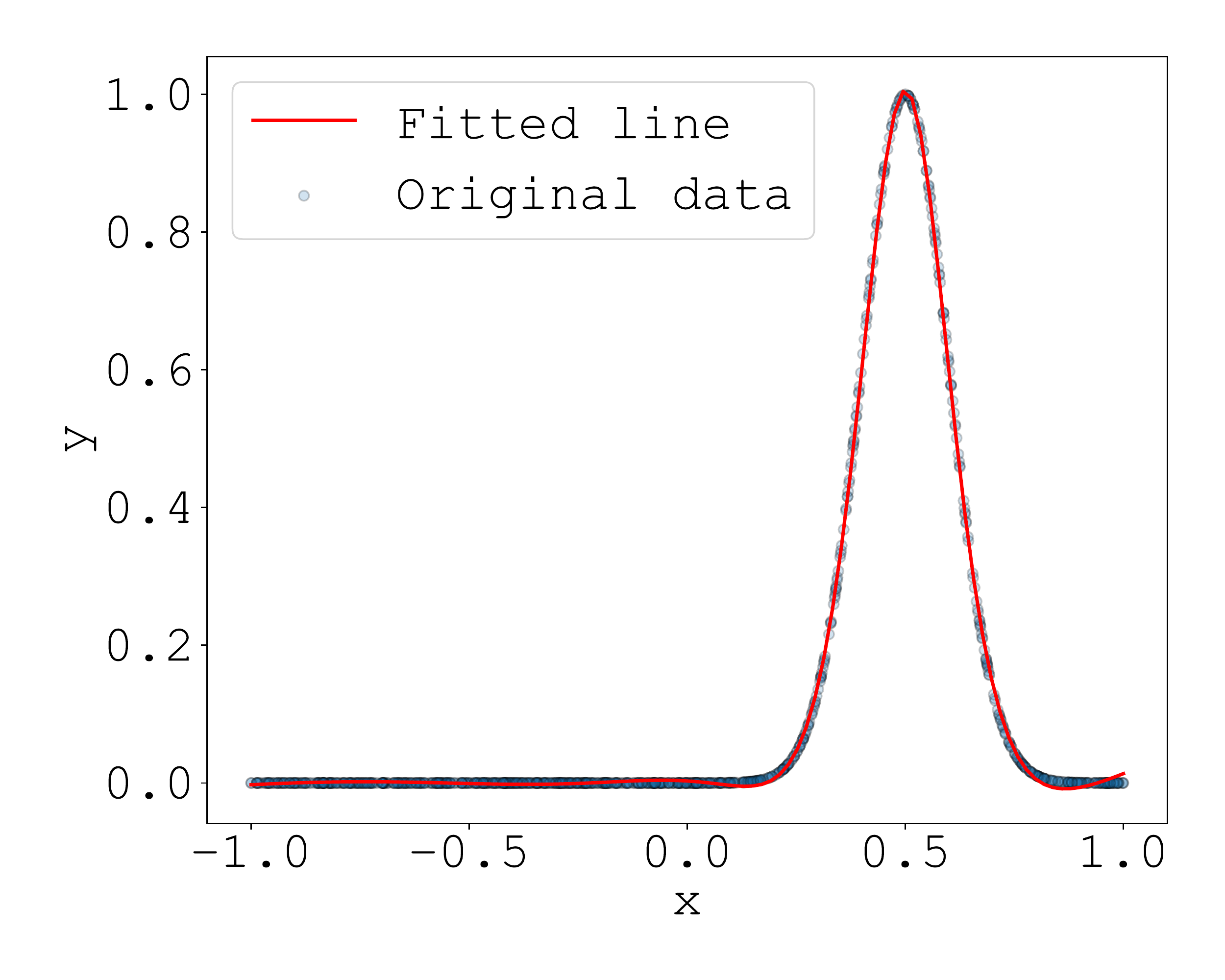}
    \end{subfigure}%
    \begin{subfigure}[c]{0.1\textwidth}
    \includegraphics[width=\linewidth]{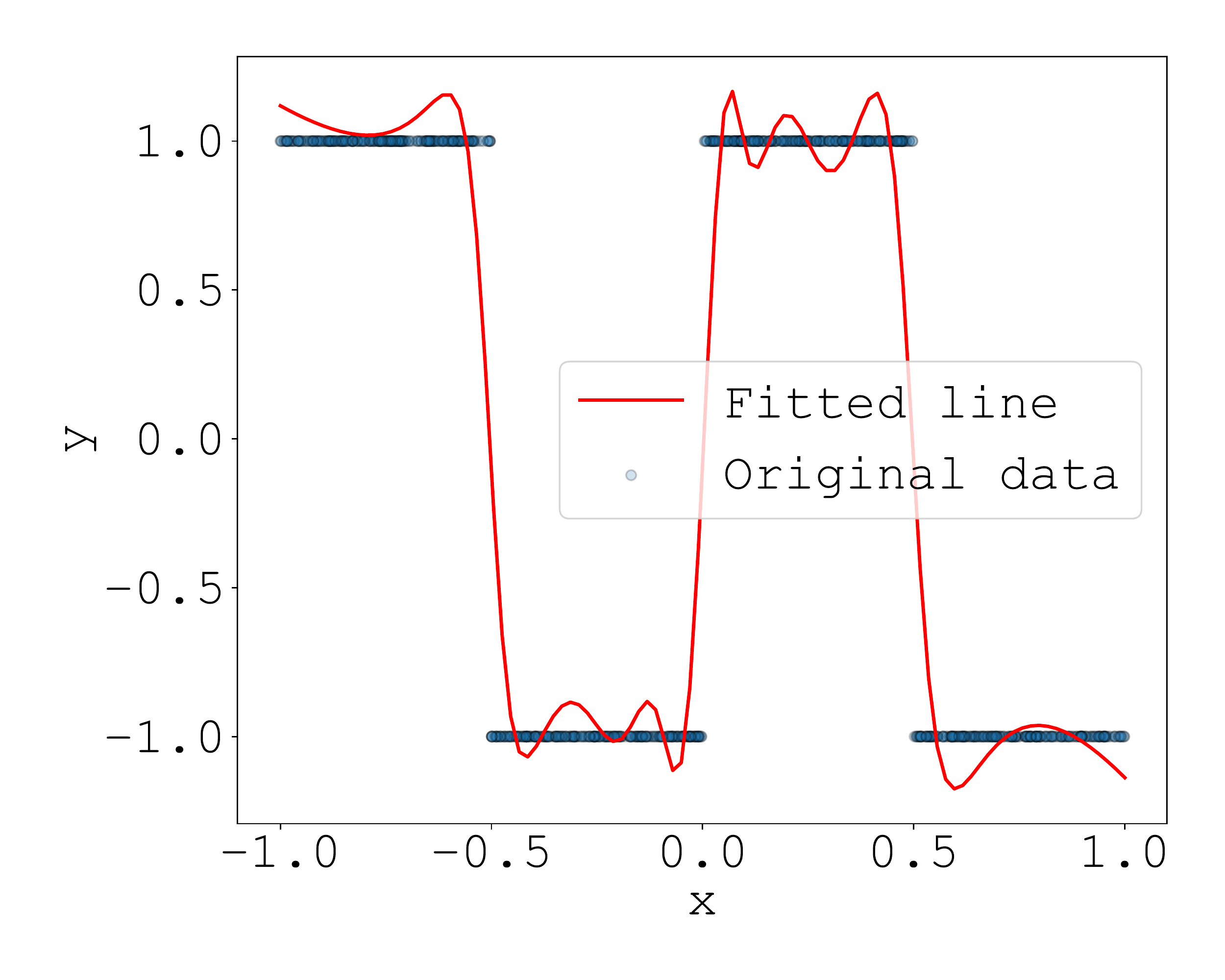}
    \end{subfigure}\\
    \begin{subfigure}[c]{0.1\textwidth}
    \includegraphics[width=\linewidth]{./exp/sin02pt_n1000_tanh_h010_adam_nn1000}
    \end{subfigure}%
    \begin{subfigure}[c]{0.1\textwidth}
    \includegraphics[width=\linewidth]{./exp/nsin02pt_e0010_n1000_tanh_h010_adam_nn1000}
    \end{subfigure}%
    \begin{subfigure}[c]{0.1\textwidth}
    \includegraphics[width=\linewidth]{./exp/nsin02pt_e0100_n1000_tanh_h010_adam_nn1000}
    \end{subfigure}%
    \begin{subfigure}[c]{0.1\textwidth}
    \includegraphics[width=\linewidth]{./exp/gauss_m00_sd01_n1000_tanh_h010_adam_nn1000}
    \end{subfigure}%
    \begin{subfigure}[c]{0.1\textwidth}
    \includegraphics[width=\linewidth]{./exp/sin10pt_n1000_tanh_h100_adam_nn1000}
    \end{subfigure}%
    \begin{subfigure}[c]{0.1\textwidth}
    \includegraphics[width=\linewidth]{./exp/topsin_n10000_tanh_h100_adam_nn1000}
    \end{subfigure}%
    \begin{subfigure}[c]{0.1\textwidth}
    \includegraphics[width=\linewidth]{./exp/rbf_pos-050_n1000_tanh_h010_adam_nn1000}
    \end{subfigure}%
    \begin{subfigure}[c]{0.1\textwidth}
    \includegraphics[width=\linewidth]{./exp/rbf_pos0000_n1000_tanh_h010_adam_nn1000}
    \end{subfigure}%
    \begin{subfigure}[c]{0.1\textwidth}
    \includegraphics[width=\linewidth]{./exp/rbf_pos0050_n1000_tanh_h010_adam_nn1000}
    \end{subfigure}%
    \begin{subfigure}[c]{0.1\textwidth}
    \includegraphics[width=\linewidth]{./exp/sq02pt_n1000_tanh_h100_adam_nn1000}
    \end{subfigure}\\
    \begin{subfigure}[c]{0.1\textwidth}
    \includegraphics[width=\linewidth]{./exp/sin02pt_n1000_tanh_2d}
    \caption{}
    \end{subfigure}%
    \begin{subfigure}[c]{0.1\textwidth}
    \includegraphics[width=\linewidth]{./exp/nsin02pt_e0010_n1000_tanh_2d}
    \caption{}
    \end{subfigure}%
    \begin{subfigure}[c]{0.1\textwidth}
    \includegraphics[width=\linewidth]{./exp/nsin02pt_e0100_n1000_tanh_2d}
    \caption{}
    \end{subfigure}%
    \begin{subfigure}[c]{0.1\textwidth}
    \includegraphics[width=\linewidth]{./exp/gauss_m00_sd01_n1000_tanh_2d}
    \caption{}
    \end{subfigure}%
    \begin{subfigure}[c]{0.1\textwidth}
    \includegraphics[width=\linewidth]{./exp/sin10pt_n1000_tanh_2d}
    \caption{}
    \end{subfigure}%
    \begin{subfigure}[c]{0.1\textwidth}
    \includegraphics[width=\linewidth]{./exp/topsin_n10000_tanh_2d}
    \caption{}
    \end{subfigure}%
    \begin{subfigure}[c]{0.1\textwidth}
    \includegraphics[width=\linewidth]{./exp/rbf_pos-050_n1000_tanh_2d}
    \caption{}
    \end{subfigure}%
    \begin{subfigure}[c]{0.1\textwidth}
    \includegraphics[width=\linewidth]{./exp/rbf_pos0000_n1000_tanh_2d}
    \caption{}
    \end{subfigure}%
    \begin{subfigure}[c]{0.1\textwidth}
    \includegraphics[width=\linewidth]{./exp/rbf_pos0050_n1000_tanh_2d}
    \caption{}
    \end{subfigure}%
    \begin{subfigure}[c]{0.1\textwidth}
    \includegraphics[width=\linewidth]{./exp/sq02pt_n1000_tanh_2d}
    \caption{}
    \end{subfigure}\\
\caption{Experimental results (activation:$\tanh$, optimization:ADAM). (a-c) Sinusoidal Curve (with Noise) $\sigma = 0, 0.1$ and $1.0$, (d) Gaussian Noise, (e) High Frequency Sinusoidal Curve, (f) Topologist's Sinusoidal Curve, (g-i) Gaussian Kernel $\mu = -0.5, 0.0$ and $.5$, (j) Square Wave. See supplementary materials for all the examples with larger images.}
\label{fig:results_tanh_adam}
\end{figure*}

 In \reffig{ridgelet}, we have compared the BP trained parameters and the ridgelet spectrum.
 Here, we explain the details of these experiments and review the results with nine additional datasets and three additional conditions.
 We note that readers are also encouraged to refer supplementary materials for further results.

 \subsection{Datasets}
 We prepared $10$ artificial datasets.
 For the sake of visualization, all the datasets are $1$-in-$1$-out.
 We emphasize that our main results described in \ref{sec:theory} are valid for any dimension.
 In the following, $N(\mu,\sigma^2)$ denotes the normal distribution with mean $\mu$ and variance $\sigma^2$,
 $U(s,t)$ denotes the uniform distribution over the interval $[s,t]$.

 \subhead{Common Settings.} In all the datasets, $x_i \sim U(-1,1)$, and except for `Topologist's Sinusoidal Curve', sample size $s = 1,000$.

 \subhead{Sinusoidal Curve.}
 $y_i = \sin 2 \pi x_i$.
 We prepared this dataset as a basic example.

 \subhead{Sinusoidal Curve with Gaussian Noise.}
 $y_i = \sin 2 \pi x_i + \varepsilon_i, \varepsilon_i \sim N(0,\sigma^2)$ with $\sigma^2 = 0.1^2$ and $1^2$. 
 We prepared these datasets to examine the effect of noise.
 By the linearity of ridgelet transform: $R[f + \varepsilon] = R[f] + R[\varepsilon]$, we can expect that the effect will be cancelled out in average.

 \subhead{Gaussian Noise.} $y_i \sim N(0,1^2)$.
 We prepared this dataset to extract the effect of noise.
 In theory, the ridgelet spectrum is a random process. Therefore, the visualization result is only a single realization of the random process.

 \subhead{High Frequency Sinusoidal Curve.} $y_i = \sin 10 \pi x_i$.
 We prepared this dataset to examine the effect of the change in frequency. Since the hidden parameter $\aa$ reflects the frequency, we can expect that the spectrum $R[f](\aa,b)$ will change in $\aa$.

 \subhead{Topologist's Sinusoidal Curve.} $y_i = \sin 1/x_i$, $s = 10,000$.
 We prepared this dataset to examine the effect of the change in frequency. Compared to sinusoidal curve, it contains an  infinitely wide range of frequencies.

 \subhead{Gaussian Kernel.} $y_i = \exp(|x_i - \mu|^2/2)$ with $\mu=-0.5, 0, 0.5$.
 We prepared these datasets to examine the effect of the change in location. Since the hidden parameter $b$ reflects the location, we can expect that the spectrum $R[f](\aa,b)$ will change in $b$.

 \subhead{Square Wave.} $y_i = \sign( \sin 2 \pi x_i )$.
 We prepared this dataset to examine the effect of discontinuity. By the locality of the ridgelet transform, we can expect that the effect is also localized.

 \subsection{Scatter Plots of BP Trained Parameters}

 Given a dataset $D = \{ (x_i, y_i) \}_{i=1}^s$, we repeatedly trained $n=1,000$ neural networks $g(x ; \ttheta) = \sum_{j=1}^p c_j \sigma( a_j x - b_j )$.
 The training is conducted by minimizing the empirical mean squared error: $L(\ttheta) = \frac{1}{s} \sum_{i=1}^s | y_i - g( x_i ; \ttheta ) |^2$.
 After the training, we obtained $np$ sets of parameters $(a_j, b_j, c_j)$, and plotted them in the $(a,b,c)$-space. ($c$ is visualized in color.)

 We preliminary adjusted the hidden units number $p$ according to the dataset.
 Otherwise, the plots become noisy, typically because the initial parameters were not moved during the training.
 If $p$ is too small, the network underfits, and all the parameters become nothing more than noise in the plot. When a network underfits, some $c_j$ get extremely large.
 On the other hand, if $p$ is too large, a large majority of hidden parameters $(a_j, b_j)$ remain to be updated, which again become noise in the plot. When a parameter $(a_j, b_j)$ remains to be updated, the $c_j$ gets extremely small.
 So, we can judge if the parameters are noise or not, by checking if $c_j$ is either extremely large or extremely small.
 For the sake of visualization, we got rid of those noisy parameters.

 We examined the following settings. See supplementary materials for all the results.

 \subhead{Activation Function.} $\tanh$ and ReLU.

 \subhead{Optimization Method.} LBFGS and ADAM.

 \subsection{Numerical Integration of Ridgelet Spectrum}
 We employed the classical definition of the ridgelet transform given in \refeq{classic}.
 As admissible functions $\rho$, we employed
 \begin{align}
     \rho(z) &= (4 z^2 - 2) F(z) - 2z, \tag{\mbox{for $\sigma = \tanh$}} \\
     \rho(z) &= (12 z - 8 z^3) F(z) + 4 z^2 -4,  \tag{\mbox{for $\sigma = $ ReLU}}
 \end{align}
 where $F(z) := e^{-z^2} \int_0^z e^{w^2} \dd w$ is the Dawson function.
 See \citet{Sonoda2015} for more details on the construction of other admissible functions.

 Given the dataset $D = \{ (x_i, y_i) \}_{i=1}^{s}$, we have conducted a simple Monte Carlo integration at every grid points $(a, b)$:
 \begin{align}
 R[f](a,b) \approx \frac{1}{s} \sum_{i=1}^{s} y_i \overline{\rho( a x_i - b )} \Delta x,
 \end{align}
 where $\Delta x$ is a normalizing constant (because $x_i$ is uniformly distributed).
 For simplicity, we omitted calculating $\Delta x$, and simply scaled $R[f]$ so that it has value in $[-1,1]$.
 We remark that more sophisticated methods for the numerical computation of the ridgelet transform has been developed.
 See \citet{Do2003} and \citet{Sonoda2014} for example.

 \subsection{Experimental Results}
 \reffig{results_tanh_adam} presents the experimental results when the activation function $\sigma$ is $\tanh$ and the optimization method is ADAM.
 As mentioned in \ref{sec:ridgelet}, there are an infinite number of different ridgelet transforms $R$,
 and the presented spectra are calculated by just one particular case of $R$.
 Nevertheless, we can find a visual resemblance in all the cases (a-j).

 In (a-c), the three spectra are also similar to each other.
 As we have expected, the effect of noise has been canceled.
 The three scatter plot get blurred, as the noise level $\sigma^2$ gets increased.
 In (d),  the spectrum presents a single shot of the random field $R[\varepsilon]$. In the scatter plot, parameters accumulated along the line. This is because when $(a,b)$ is on this line, the corresponding base function $\sigma(ax-b)$ tends to be a constant function in the domain $x \in [-1,1]$, as the fitting example depicted in the top.
 In (e-d), the scatter plots are noisy because the training easily fails, as shown in the training example. We can find some sharp peaks and troughs.
 In (g-i), we can observe that the location in the real domain is encoded as the angle in the spectrum.
 In (j), the parameters accumulated in a few sharp lines. These lines encode the locations of discontinuities in the real domain.

 \section{Theory} \label{sec:theory}
 We prove the main theorem. All the proofs are given in \refapp{proofs}.
 We fix a locally integrable activation function $\sigma: \RR \to \CC$ and a probability measure $\mu$ on $\RR^m$.
 We remark that the boundedness is not critical but just for simplicity. We can treat unbounded activation functions with small modifications employed in \citet{Sonoda2015}. We define a positive definite kernel $K$ on $\mathbb{R}^m\times \mathbb{R}$ by
 \begin{align}
 K\!\left((\aa,b),(\aa', b')\right):=\int_{\RR^m}\sigma(\aa \cdot \xx-b)\overline{\sigma(\aa' \cdot \xx-b')}\dd\mu(\xx).
 \end{align}
 Let $\mathcal{P}:=\left\{\text{finite Borel measure on }\mathbb{R}^m\times \mathbb{R}\right\}$ and assume $\lambda \in \mathcal{P}$.  For example, the Dirac measure $\delta_{(\aa, b)}$ with support on $(\aa, b)$ is contained in $\mathcal{P}$.

 \subsection{Integral Representation of Neural Network}
 Here, we investigate the properties of integral representation operator $S : L^2(\lambda) \to L^2(\mu)$.

 \begin{prop}
 \label{S bdd}
 The operator $S$ is a bounded linear operator, more precisely, for any $\gamma\in L^2(\lambda)$, we have $||S[\gamma] ||_{L^2(\mu)}\le ||K||_{L^2(\lambda\otimes\lambda)}^{1/2}\cdot||\gamma||_{L^2(\lambda)}$.
 \end{prop}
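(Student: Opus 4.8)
The plan is to reduce the bound to the Cauchy--Schwarz inequality applied twice: once in $L^2(\mu)$ (hidden inside the definition of $K$) and once in $L^2(\lambda\otimes\lambda)$. Linearity of $S$ is immediate from the linearity of the integral in $\gamma$, so the whole content is the operator-norm estimate.

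First I would expand the squared norm into a triple integral. Writing $|S[\gamma](\xx)|^2=S[\gamma](\xx)\overline{S[\gamma](\xx)}$ and inserting the definition of $S$ twice with dummy variables $(\aa,b)$ and $(\aa',b')$ gives
\[
\|S[\gamma]\|_{L^2(\mu)}^2 = \int_{\RR^m}\!\int\!\int \gamma(\aa,b)\overline{\gamma(\aa',b')}\,\sigma(\aa\cdot\xx-b)\overline{\sigma(\aa'\cdot\xx-b')}\,\dd\lambda(\aa,b)\,\dd\lambda(\aa',b')\,\dd\mu(\xx).
\]
The key step is to interchange the order of integration so that the $\xx$-integral is performed innermost; this collapses the $\sigma\overline{\sigma}$ factor into the kernel $K$ and yields
\[
\|S[\gamma]\|_{L^2(\mu)}^2 = \int\!\int \gamma(\aa,b)\overline{\gamma(\aa',b')}\, K\!\left((\aa,b),(\aa',b')\right)\dd\lambda(\aa,b)\,\dd\lambda(\aa',b').
\]

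This Fubini--Tonelli step is where I expect the only real technical care to be needed, so I would verify absolute integrability of the triple integrand directly. Because $\mu$ is a probability measure and $\lambda\in\mathcal{P}$ is finite, and $\sigma$ is bounded (the simplifying boundedness mentioned in this section), the integrand is dominated by $\|\sigma\|_\infty^2\,|\gamma(\aa,b)|\,|\gamma(\aa',b')|$, whose triple integral is finite since $\int|\gamma|\dd\lambda\le\|\gamma\|_{L^2(\lambda)}\,\lambda(\RR^m\times\RR)^{1/2}<\infty$ by Cauchy--Schwarz against the constant $1$ on the finite measure $\lambda$; Tonelli then licenses the interchange. The same boundedness makes $K$ bounded by $\|\sigma\|_\infty^2$, hence $\|K\|_{L^2(\lambda\otimes\lambda)}<\infty$ automatically, so the asserted bound is not vacuous. (For merely locally integrable $\sigma$ one would instead dominate the inner $\xx$-integral by $\|\sigma(\aa\cdot(\,\cdot\,)-b)\|_{L^2(\mu)}\|\sigma(\aa'\cdot(\,\cdot\,)-b')\|_{L^2(\mu)} = K((\aa,b),(\aa,b))^{1/2}K((\aa',b'),(\aa',b'))^{1/2}$ and impose integrability of the diagonal of $K$, which is the modification alluded to in the text.)

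Finally I would read the last display as a pairing in $L^2(\lambda\otimes\lambda)$ and apply Cauchy--Schwarz. Setting $\Gamma(\aa,b,\aa',b'):=\gamma(\aa,b)\overline{\gamma(\aa',b')}$, the right-hand side equals $\iprod{\Gamma,\overline{K}}_{L^2(\lambda\otimes\lambda)}$ (and is automatically real and nonnegative, being $\|S[\gamma]\|^2$), so it is bounded by $\|\Gamma\|_{L^2(\lambda\otimes\lambda)}\|K\|_{L^2(\lambda\otimes\lambda)}$. The product structure of $\Gamma$ gives, by Tonelli, $\|\Gamma\|_{L^2(\lambda\otimes\lambda)}^2=\int|\gamma(\aa,b)|^2\dd\lambda\cdot\int|\gamma(\aa',b')|^2\dd\lambda=\|\gamma\|_{L^2(\lambda)}^4$, hence $\|\Gamma\|_{L^2(\lambda\otimes\lambda)}=\|\gamma\|_{L^2(\lambda)}^2$. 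Combining, $\|S[\gamma]\|_{L^2(\mu)}^2\le\|\gamma\|_{L^2(\lambda)}^2\,\|K\|_{L^2(\lambda\otimes\lambda)}$, and taking square roots gives exactly the claimed inequality; in particular $S$ maps $L^2(\lambda)$ into $L^2(\mu)$ and is bounded with $\|S\|\le\|K\|_{L^2(\lambda\otimes\lambda)}^{1/2}$.
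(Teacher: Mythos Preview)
Your proof is correct and follows exactly the paper's approach: expand $\|S[\gamma]\|_{L^2(\mu)}^2$ as a triple integral, perform the $\xx$-integral first to produce $K$, and apply Cauchy--Schwarz in $L^2(\lambda\otimes\lambda)$ to the pairing of $\gamma\otimes\overline{\gamma}$ against $K$. The paper's proof is the same three-line computation; your version simply adds the Fubini--Tonelli justification and spells out why $\|\gamma\otimes\overline{\gamma}\|_{L^2(\lambda\otimes\lambda)}=\|\gamma\|_{L^2(\lambda)}^2$, which the paper leaves implicit.
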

 Let $k$ be a positive definite kernel on $\RR^m$ defined by \begin{align}k(\xx,\yy):=\int_{\RR^m}\sigma(\aa\cdot \xx-b) \overline{\sigma(\aa\cdot \yy -b)}\,\dd \lambda(\aa,b). \end{align}
 Let $H$ be the reproducing kernel Hilbert space (RKHS) associated with $k$. We note that $H\subset L^2(\mu)$.  Then we have 
 \begin{prop}
 \label{image of S}
 The image of $S$ is $H$.
 \end{prop}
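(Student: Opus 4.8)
The plan is to recognize $S$ as the canonical \emph{feature-map realization} of the RKHS $H$, so that the equality $\im S = H$ follows from the standard feature-map characterization of reproducing kernel Hilbert spaces. Everything hinges on rewriting point evaluation of $S[\gamma]$ as an inner product in $L^2(\lambda)$.

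First I would introduce, for each $\xx \in \RR^m$, the function $e_\xx \in L^2(\lambda)$ defined by $e_\xx(\aa, b) := \overline{\sigma(\aa \cdot \xx - b)}$. We have $\|e_\xx\|_{L^2(\lambda)}^2 = k(\xx,\xx)$, which is finite under the standing integrability assumptions (automatically so when $\sigma$ is bounded, since $\lambda$ is finite), so $e_\xx$ is a legitimate element of $L^2(\lambda)$. Unwinding the definition of $S$ gives the key identity
\begin{align}
S[\gamma](\xx) = \iprod{\gamma, e_\xx}_{L^2(\lambda)}, \qquad \gamma \in L^2(\lambda),\; \xx \in \RR^m,
\end{align}
and a direct computation yields $\iprod{e_\yy, e_\xx}_{L^2(\lambda)} = k(\xx,\yy)$. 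In other words $\xx \mapsto e_\xx$ is a feature map into $L^2(\lambda)$ that realizes the kernel $k$, and $S$ is precisely the associated synthesis operator $\gamma \mapsto (\xx \mapsto \iprod{\gamma, e_\xx})$.

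Next I would invoke (or, to stay self-contained, reprove) the feature-map theorem. Setting $\gamma = e_\yy$ in the key identity gives $S[e_\yy] = k(\cdot, \yy)$, so every kernel section lies in $\im S$, and by linearity so does the whole span $\hull\{k(\cdot,\yy) : \yy \in \RR^m\}$. Moreover the computation $\iprod{e_\yy, e_{\yy'}}_{L^2(\lambda)} = k(\yy', \yy) = \iprod{k(\cdot,\yy), k(\cdot,\yy')}_H$ shows that $S$ carries $\{e_\yy\}$ onto $\{k(\cdot,\yy)\}$ while preserving inner products on these spanning sets. Let $H_0$ denote the closed linear span of $\{e_\xx : \xx \in \RR^m\}$ in $L^2(\lambda)$. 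Any $\gamma$ splits as $\gamma = \gamma_0 + \gamma_1$ with $\gamma_0 \in H_0$ and $\gamma_1 \perp H_0$, and $\gamma_1$ contributes $\iprod{\gamma_1, e_\xx} = 0$ for every $\xx$; hence $S$ annihilates $H_0^\perp$ and $\im S = S(H_0)$.

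The crux---and the step I expect to require the most care---is upgrading ``$\im S$ contains a dense subset of $H$'' to ``$\im S = H$''. Here the inner-product preservation on spanning sets does the decisive work: it extends by continuity to an isometry $S|_{H_0} : H_0 \to H$, so that $\|S[\gamma_0]\|_H = \|\gamma_0\|_{L^2(\lambda)}$ for all $\gamma_0 \in H_0$. Consequently $S(H_0)$ is the isometric image of the complete space $H_0$, hence itself a complete, and therefore closed, subspace of $H$; since it already contains the dense span of kernel sections, it must equal all of $H$. Combined with $\im S = S(H_0)$ from the previous paragraph, this gives $\im S = H$. The only remaining routine points to check are the measurability and integrability needed for $e_\xx \in L^2(\lambda)$ and for $S[\gamma]$ to be a genuine pointwise-defined function, both of which follow from the finiteness of $\lambda$ and the assumptions already underlying Proposition~\ref{S bdd}.
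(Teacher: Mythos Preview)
Your proposal is correct and follows essentially the same route as the paper: both arguments introduce the feature map $\xx\mapsto \sigma_\xx$ (your $e_\xx=\overline{\sigma_\xx}$) into $L^2(\lambda)$, verify that it realizes the kernel $k$ and that $S$ sends these features to the kernel sections $k(\cdot,\yy)$, and then conclude that $S$ restricts to an isometric isomorphism from the closed span onto $H$. Your write-up is simply more explicit than the paper's terse version---in particular you spell out why the isometric image is closed (hence all of $H$) and why the orthogonal complement is annihilated, which the paper leaves implicit.
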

 Proposition \ref{image of S} means that the representation ability of $S$ is described by the RKHS $H$. If $k$ is a universal kernel (cc-universal in the sense in \citet{Sriperumbudur2010}), the RKHS $H$ can approximate any compact support function in $L^2(\mu)$, which is just the universal approximation property of neural networks in $L^2(\mu)$ for arbitrary probability distribution $\mu$.  Actually, under mild condition, we can make $k$ a universal kernel as follows:
 \begin{thm}
 \label{universal kernel}
 Let $A>0$ be a (large) positive number.  Let $\nu\in L^1(\RR^m)$.  Assume that $\sigma$ is a non-constant periodic function with period $2A$, i.e. $\sigma(x+2A)=\sigma(x)$. We also impose one of the two conditions: (1) ${\rm supp}(\nu)=\RR^m$ or (2) $0\in{\rm supp}(\nu)$ and $\#\{\widehat{\sigma}(n)\neq 0\}=\infty$, where $\widehat{\sigma}(n)=(2A)^{-1}\int_{[-A,A]}\sigma(x)e^{\pi i nx/A}dx$.  Then $\lambda= \nu(\aa)\mathbf{1}_{[-A,A]}(b)\dd \aa\dd b\in \mathcal{P}$ induces a universal kernel $k$.
 \end{thm}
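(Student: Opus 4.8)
The plan is to turn the claim into a statement about the support of the Bochner spectral measure of $k$, which becomes translation invariant because $\sigma$ is periodic and $b$ is integrated over a full period $[-A,A]$. Writing out the reproducing kernel explicitly,
\begin{align}
k(\xx,\yy)=\int_{\RR^m}\nu(\aa)\int_{-A}^{A}\sigma(\aa\cdot\xx-b)\,\overline{\sigma(\aa\cdot\yy-b)}\,\dd b\,\dd\aa ,
\end{align}
I would expand $\sigma$ in its Fourier series on a period, $\sigma(x)=\sum_{n\in\mathbb{Z}}\widehat{\sigma}(n)e^{-\pi i n x/A}$, where $\widehat{\sigma}(n)$ is exactly the coefficient in the statement. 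The cross terms $e^{\pi i(n-n')b/A}$ integrate over $[-A,A]$ to $2A\,\delta_{n,n'}$ by orthogonality, collapsing the double series to a single one. After interchanging sum and integral (justified by Parseval, $\sum_n|\widehat{\sigma}(n)|^2<\infty$, together with $|\widehat{\nu}|\le\|\nu\|_{L^1}$) this yields
\begin{align}
k(\xx,\yy)=2A\sum_{n\in\mathbb{Z}}|\widehat{\sigma}(n)|^2\,\widehat{\nu}\!\left(\tfrac{\pi n}{A}(\xx-\yy)\right),
\end{align}
so $k(\xx,\yy)=\psi(\xx-\yy)$ is translation invariant, and by the same bounds bounded and continuous.

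\textbf{Spectral measure.}
Next I read off the measure $\Lambda$ with $\psi(\uu)=\int_{\RR^m}e^{-i\uu\cdot\omega}\dd\Lambda(\omega)$. The term $n=0$ contributes an atom $2A|\widehat{\sigma}(0)|^2(\int\nu)\,\delta_0$, while each $n\neq0$ contributes, after the change of variables $\omega=\pi n\aa/A$, an absolutely continuous piece of density proportional to $|\widehat{\sigma}(n)|^2|n|^{-m}\nu(A\omega/\pi n)\ge 0$. Hence $\Lambda$ is a finite nonnegative measure, and its support is
\begin{align}
{\rm supp}(\Lambda)=\overline{\{0\}\cup\bigcup_{n\neq 0,\ \widehat{\sigma}(n)\neq 0}\tfrac{\pi n}{A}\,{\rm supp}(\nu)}.
\end{align}

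\textbf{From support to universality.}
I would then invoke the characterization of cc-universality for bounded continuous translation-invariant kernels from \citet{Sriperumbudur2010}, in the working form: it suffices that the only compactly supported finite complex Borel measure $\eta$ with $\int_{\RR^m}|\widehat{\eta}(\omega)|^2\dd\Lambda(\omega)=0$ is $\eta=0$. This integral vanishes iff $\widehat{\eta}=0$ $\Lambda$-almost everywhere, i.e. $\widehat{\eta}=0$ on $\{g>0\}=\bigcup_{n\neq0,\widehat{\sigma}(n)\neq0}(\pi n/A)\{\nu>0\}$, where $g$ is the density above. Since $\eta$ has compact support, $\widehat{\eta}$ is real-analytic by Paley--Wiener, and a real-analytic function vanishing on a set of positive Lebesgue measure vanishes identically; hence $\eta=0$. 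In this way the whole problem reduces to showing that $\{g>0\}$ has positive measure, equivalently that ${\rm supp}(\Lambda)$ is rich enough to be a uniqueness set.

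\textbf{The two cases and the main obstacle.}
In case (1), ${\rm supp}(\nu)=\RR^m$, and since $\sigma$ is non-constant some $\widehat{\sigma}(n)\neq0$ with $n\neq0$; then $(\pi n/A)\{\nu>0\}$ already has full measure and the conclusion is immediate. Case (2) is the delicate one and is where I expect the real work. Here ${\rm supp}(\nu)$ may be small, but $0\in{\rm supp}(\nu)$ forces $\{\nu>0\}$ to carry positive mass in every neighborhood of the origin, while $\#\{\widehat{\sigma}(n)\neq0\}=\infty$ supplies infinitely many unboundedly large dilation factors $\pi n/A$. The plan is to combine these: a single nonzero-mode dilate of a positive-measure sliver of $\{\nu>0\}$ near the origin already has positive measure, which by the analyticity argument suffices for cc-universality; upgrading this to the stronger statement ${\rm supp}(\Lambda)=\RR^m$ requires showing that these accumulating dilates together fill $\RR^m$. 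Controlling that covering when ${\rm supp}(\nu)$ is irregular, and justifying the termwise operations for a merely locally integrable periodic $\sigma$, are the points I expect to demand the most care.
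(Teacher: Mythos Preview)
Your reduction is exactly the paper's: Fourier-expand the periodic $\sigma$, use orthogonality over the full period in $b$ to collapse the double sum, recognise $k(\xx,\yy)=\psi(\xx-\yy)$, and after the change of variables $\omega=\pi n\aa/A$ read off the spectral density $q(\aa)=\sum_{n\neq0}|n|^{-m}|\widehat\sigma(n)|^{2}\,\nu(\aa/n)$ (plus the $n=0$ atom). The paper then simply asserts that hypothesis (1) or (2) forces $\mathrm{supp}(q)=\RR^m$ and cites the full-support criterion of \citet{Sriperumbudur2010}.

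Where you diverge is only in the final invocation. Instead of verifying $\mathrm{supp}(q)=\RR^m$, you pass to the dual characterisation of cc-universality and use that $\widehat\eta$ is real-analytic for compactly supported $\eta$, so that $\{q>0\}$ merely having positive Lebesgue measure already forces $\eta=0$. This is a genuine alternative to the paper's endgame, and it buys something: as you half-notice, it renders the extra hypotheses (1)/(2) superfluous, since a single nonzero Fourier mode together with $\nu\not\equiv0$ already gives $\{q>0\}$ positive measure. The covering worry you raise in case (2)---that the dilates $n\cdot\mathrm{supp}(\nu)$ need not fill $\RR^m$ when $\mathrm{supp}(\nu)$ is irregular near the origin, particularly for $m\ge2$---is a real issue for the paper's full-support route, but it evaporates under your analyticity shortcut. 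So your plan is correct; it coincides with the paper through the spectral computation and is arguably cleaner at the finish.
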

 Note that in a real world problems, we can use a periodic function $\sum_{n=-\infty}^\infty \sigma(x-2nA)\mathbf{1}_{[(n-1)A, (n+1)A]}$ with sufficiently large $A>0$ as an alternative activation function. Thus the condition of periodicity for $\sigma$ is not harmful. In particular, we can deal with ReLU.

 \subsection{Ridgelet Transform}
 Here, we introduce a modified version of the ridgelet transform, which attains the global minimum of the BP training problem.

 Let $\lambda\in\mathcal{P}$ and let $\rho\in L^2(\mu\otimes\lambda)$.  Then we define the {\em Ridgelet transform with respect to $\rho$} as a linear operator $R_\rho: L^2(\mu)\rightarrow L^2(\lambda) $  defined by
 \begin{align}R_\rho[f](\aa,b):=\int_{\mathbb{R}^m}f(\xx)\rho(\xx,(\aa,b))\,\dd\mu(\xx).\end{align}
 We note this definition includes the classic definition when $\rho(\xx, (\aa,b)) = \rho(\aa \cdot \xx - b)$ and $\dd \mu(\xx) = \dd \xx$.
 \begin{prop}
 \label{bdd R}
 The ridgelet transform $R_\rho$ is a bounded linear operator, more precisely, $||R_\rho [f]||_{L^2(\lambda)}\le ||f||_{L^2(\mu)}||\rho||_{L^2(\mu\otimes\lambda)}$.
 \end{prop}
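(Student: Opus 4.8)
The plan is to reduce the bound to a single application of the Cauchy--Schwarz inequality on the inner slice, combined with the Tonelli theorem on the outer integral. Linearity of $R_\rho$ in $f$ is immediate from linearity of the integral, so the only genuine content is the norm estimate.

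First I would fix a parameter $(\aa,b)\in\RR^m\times\RR$ and read off the defining integral as an $L^2(\mu)$ inner product. Since $\iprod{f,g} = \int f(\xx)\overline{g(\xx)}\dd\mu(\xx)$, setting $g(\xx):=\overline{\rho(\xx,(\aa,b))}$ gives $R_\rho[f](\aa,b) = \iprod{f,\overline{\rho(\wdot,(\aa,b))}}$. The Cauchy--Schwarz inequality then yields, for this fixed $(\aa,b)$,
\begin{align}
|R_\rho[f](\aa,b)|^2 \le \|f\|_{L^2(\mu)}^2 \int_{\RR^m} |\rho(\xx,(\aa,b))|^2 \dd\mu(\xx),
\end{align}
where I used that complex conjugation preserves the $L^2(\mu)$ norm of the slice $\rho(\wdot,(\aa,b))$.

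Next I would integrate this pointwise bound over the parameters against $\lambda$. By definition $\|R_\rho[f]\|_{L^2(\lambda)}^2 = \int |R_\rho[f](\aa,b)|^2\dd\lambda(\aa,b)$, so the estimate above gives
\begin{align}
\|R_\rho[f]\|_{L^2(\lambda)}^2 \le \|f\|_{L^2(\mu)}^2 \int_{\RR^m\times\RR}\int_{\RR^m} |\rho(\xx,(\aa,b))|^2\,\dd\mu(\xx)\,\dd\lambda(\aa,b).
\end{align}
Because the integrand $|\rho|^2$ is non-negative and measurable on the product space, the Tonelli theorem lets me recognize the iterated integral as the single product integral $\|\rho\|_{L^2(\mu\otimes\lambda)}^2$. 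Taking square roots delivers the claimed inequality, and boundedness of $R_\rho$ follows at once.

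The one point requiring care is measurability and the legitimacy of these manipulations: I must verify that $(\aa,b)\mapsto R_\rho[f](\aa,b)$ is $\lambda$-measurable and that the slice $\rho(\wdot,(\aa,b))$ lies in $L^2(\mu)$ for $\lambda$-almost every $(\aa,b)$, so that the pointwise Cauchy--Schwarz step is valid almost everywhere. Both facts follow from the standing hypothesis $\rho\in L^2(\mu\otimes\lambda)$: Tonelli guarantees that the parameter slices are square-integrable off a $\lambda$-null set, and the same finiteness justifies passing from the iterated integral to the product integral. I expect this measurability bookkeeping, rather than the inequality itself, to be the only real obstacle.
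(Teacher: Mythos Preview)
Your argument is correct. The paper takes a slightly different organizational route: it first expands $\|R_\rho[f]\|_{L^2(\lambda)}^2$ as a triple integral over $(\xx,\yy,(\aa,b))$, swaps the order so that the $\lambda$-integral sits inside, and then applies Cauchy--Schwarz twice---once on the inner $\lambda$-integral and once on the resulting $\mu$-integral---to reach the same bound. Your version is more economical: by bounding $|R_\rho[f](\aa,b)|$ pointwise with a single Cauchy--Schwarz in $L^2(\mu)$ and then integrating in $\lambda$ with Tonelli, you avoid the second application entirely. Both approaches rest on the same two ingredients (Cauchy--Schwarz and Fubini/Tonelli), so the difference is purely in packaging; your route is arguably the cleaner of the two, and your remarks on the measurability bookkeeping are appropriate and more explicit than what the paper records.
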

 The adjoint $S^*$ of $S$ is described by the ridgelet transform:
 \begin{prop}
 \label{adjoint of R}
 We have $R_{\rho_\sigma}=S^*$, where  $\rho_\sigma(\xx, (\aa,b)):=\sigma(\aa\cdot\xx- b)$.
 \end{prop}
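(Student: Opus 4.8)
The plan is to read off the identity directly from the definition of the adjoint, so that the whole statement collapses to a single application of Fubini's theorem. Recall that $S^*$ is characterised as the unique bounded operator satisfying
\begin{align*}
\langle S[\gamma], f\rangle_{L^2(\mu)} = \langle \gamma, S^*[f]\rangle_{L^2(\lambda)}
\quad\text{for all } \gamma \in L^2(\lambda),\, f \in L^2(\mu).
\end{align*}
Since $S$ and $R_{\rho_\sigma}$ are both already known to be bounded (Propositions \ref{S bdd} and \ref{bdd R}, the latter applied with $\rho = \rho_\sigma$), it suffices to show that $R_{\rho_\sigma}$ realises this pairing, i.e. that $\langle S[\gamma], f\rangle_{L^2(\mu)} = \langle \gamma, R_{\rho_\sigma}[f]\rangle_{L^2(\lambda)}$ for every such $\gamma$ and $f$. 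Uniqueness of the adjoint then forces $S^* = R_{\rho_\sigma}$.

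First I would expand the left-hand side by substituting the definition of $S$:
\begin{align*}
\langle S[\gamma], f\rangle_{L^2(\mu)}
= \int_{\RR^m}\!\left(\int_{\RR^m\times\RR} \gamma(\aa,b)\,\sigma(\aa\cdot\xx - b)\,\dd\lambda(\aa,b)\right)\overline{f(\xx)}\,\dd\mu(\xx).
\end{align*}
The decisive step is to interchange the $\dd\lambda$- and $\dd\mu$-integrations. Once that is justified, the inner $\xx$-integral becomes $\int_{\RR^m}\sigma(\aa\cdot\xx-b)\,\overline{f(\xx)}\,\dd\mu(\xx)$, which is exactly the expression defining $R_{\rho_\sigma}[f](\aa,b)$ with $\rho_\sigma(\xx,(\aa,b))=\sigma(\aa\cdot\xx-b)$, read with the conjugation inherited from the $L^2(\mu)$ inner product. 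Reinserting this into the outer integral produces $\int \gamma(\aa,b)\,\overline{R_{\rho_\sigma}[f](\aa,b)}\,\dd\lambda(\aa,b) = \langle \gamma, R_{\rho_\sigma}[f]\rangle_{L^2(\lambda)}$, as required.

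The main obstacle, and really the only analytic content, is the rigorous justification of the interchange, i.e. absolute integrability of $\iint |\gamma(\aa,b)|\,|\sigma(\aa\cdot\xx-b)|\,|f(\xx)|\,\dd\mu(\xx)\,\dd\lambda(\aa,b)$. I would control this by Cauchy--Schwarz applied twice. For fixed $(\aa,b)$, the inner $\xx$-integral is bounded by $\|f\|_{L^2(\mu)}\,K((\aa,b),(\aa,b))^{1/2}$, using the very definition of $K$. Integrating in $(\aa,b)$ and applying Cauchy--Schwarz again bounds the whole quantity by $\|f\|_{L^2(\mu)}\,\|\gamma\|_{L^2(\lambda)}\,\bigl(\int K((\aa,b),(\aa,b))\,\dd\lambda(\aa,b)\bigr)^{1/2}$, and the last factor equals $\|\rho_\sigma\|_{L^2(\mu\otimes\lambda)}$, which is finite precisely because $\rho_\sigma\in L^2(\mu\otimes\lambda)$ — the standing hypothesis that makes $R_{\rho_\sigma}$ well defined. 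Hence Tonelli guarantees the integrand is absolutely integrable and Fubini licenses the swap. Everything after this is bookkeeping of the complex conjugates across the two inner products, which I would carry out to confirm the kernel $\rho_\sigma$ appears in the exact form stated.
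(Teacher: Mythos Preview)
Your proposal is correct and follows essentially the same route as the paper: both arguments verify the adjoint identity by expanding one of the pairings, swapping the order of integration, and recognising the other pairing. The paper starts from $\langle R_{\rho_\sigma}[f],\gamma\rangle_{L^2(\lambda)}$ while you start from $\langle S[\gamma],f\rangle_{L^2(\mu)}$, and you supply an explicit Cauchy--Schwarz bound to justify Fubini where the paper simply calls the computation ``straightforward'', but the underlying argument is identical.

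One small remark: your phrase ``read with the conjugation inherited from the $L^2(\mu)$ inner product'' glosses over the fact that identifying $\int \sigma(\aa\cdot\xx-b)\,\overline{f(\xx)}\,\dd\mu(\xx)$ with $\overline{R_{\rho_\sigma}[f](\aa,b)}$ tacitly uses $\overline{\sigma}=\sigma$. The paper's own proof makes the same silent assumption (it equates $\int\overline{\gamma}\,\sigma\,\dd\lambda$ with $\overline{S[\gamma]}$), so you are no less rigorous than the original; but if you want the identity to hold verbatim for genuinely complex-valued $\sigma$, the clean fix is to set $\rho_\sigma(\xx,(\aa,b)):=\overline{\sigma(\aa\cdot\xx-b)}$ in the definition.
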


 Let $T: L^2(\lambda)\rightarrow L^2(\lambda)$ be the integral transform with respect to $K$, namely, 
 \begin{align}T[\gamma](\aa,b):=\int_{\RR^m\times \RR}\gamma(\aa',b')K((\aa,b), (\aa',b'))\,\dd\lambda(\aa',b'). \end{align}
 Then we have the following proposition:
 \begin{prop}
 \label{T=SS}
 We have $T=S^*S$.
 \end{prop}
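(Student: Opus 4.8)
The plan is to establish the operator identity $T=S^*S$ by direct computation, reducing it to an application of Fubini's theorem. The strategy is to expand $S^*S[\gamma]$ as an iterated integral, interchange the order of integration, and recognize the resulting inner integral as the kernel $K$ that defines $T$.

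Concretely, I would first combine the definition of the integral representation operator $S$ with the description of its adjoint in Proposition~\ref{adjoint of R}, which gives $S^*[f](\aa,b)=\int_{\RR^m} f(\xx)\,\overline{\sigma(\aa\cdot\xx-b)}\,\dd\mu(\xx)$. Substituting $f=S[\gamma]$ and writing out $S[\gamma]$ yields the double integral
\[
S^*S[\gamma](\aa,b)=\int_{\RR^m}\left(\int_{\RR^m\times\RR}\gamma(\aa',b')\,\sigma(\aa'\cdot\xx-b')\,\dd\lambda(\aa',b')\right)\overline{\sigma(\aa\cdot\xx-b)}\,\dd\mu(\xx).
\]
I would then interchange the $\mu$- and $\lambda$-integrations, after which the inner integral over $\xx$ becomes exactly $\int_{\RR^m}\sigma(\aa'\cdot\xx-b')\,\overline{\sigma(\aa\cdot\xx-b)}\,\dd\mu(\xx)=K((\aa',b'),(\aa,b))$, so that $S^*S[\gamma](\aa,b)=\int_{\RR^m\times\RR}\gamma(\aa',b')\,K((\aa',b'),(\aa,b))\,\dd\lambda(\aa',b')$, which is $T[\gamma](\aa,b)$ up to the orientation of the arguments of the Hermitian kernel $K$.

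The main obstacle is the rigorous justification of the interchange of integrals. To invoke Fubini's theorem I would check absolute integrability of the map $(\xx,(\aa',b'))\mapsto\gamma(\aa',b')\,\sigma(\aa'\cdot\xx-b')\,\overline{\sigma(\aa\cdot\xx-b)}$ with respect to $\mu\otimes\lambda$. This follows from the Cauchy--Schwarz inequality together with the finiteness of the Borel measures $\mu$ and $\lambda\in\mathcal{P}$, the hypothesis $\gamma\in L^2(\lambda)$, and the integrability already exploited in Proposition~\ref{S bdd} (equivalently, $K\in L^2(\lambda\otimes\lambda)$). Once this is secured, the interchange is legitimate and the identity follows; the only remaining care is the bookkeeping of complex conjugates, which is handled by the Hermitian symmetry $K((\aa',b'),(\aa,b))=\overline{K((\aa,b),(\aa',b'))}$ so that the inner integral is matched to the kernel appearing in the definition of $T$. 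As an alternative that avoids pointwise manipulations, I could instead verify $\iprod{S^*S\gamma,\eta}_{L^2(\lambda)}=\iprod{T\gamma,\eta}_{L^2(\lambda)}$ for all $\eta\in L^2(\lambda)$ by the same Fubini argument applied to $\iprod{S\gamma,S\eta}_{L^2(\mu)}$, using the boundedness of $S$ from Proposition~\ref{S bdd}.
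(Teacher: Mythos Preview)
Your proposal is correct and follows essentially the same route as the paper: invoke Proposition~\ref{adjoint of R} to write $S^*S[\gamma]$ as an iterated integral, swap the order of integration, and identify the inner $\mu$-integral with the kernel $K$ defining $T$. The paper's proof is a two-line computation that omits the Fubini justification and the bookkeeping of complex conjugates you spell out, but the argument is the same.
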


 \subsection{Main Result}
 For $f\in L^2(\mu)$ and $\beta > 0$, let $L[\gamma;f,\beta]:=||S[\gamma]-f||_{L^2(\mu)}^2+\beta||\gamma||_{L^2(\lambda)}^2$ be the risk function for the integral representation of neural networks with respect to $\lambda$, then we have the following theorem:
 \begin{thm}
 \label{global minima}
 For $f\in L^2(\mu)$ and $\beta>0$, there exists a function $\rho^*$ on $\RR\times(\RR^m\times\RR)$ such that $R_{\rho^*}[f]$ attains the unique minimum of the minimization problem $\min_{\gamma\in L^2(\lambda)}L(f,\beta;\gamma)$. Moreover $\rho^*$ satisfies
 \begin{align}(\beta + T) \rho^*(\xx,\cdot)=\sigma_\xx,\end{align}
 where $\sigma_\xx(\aa,b)=\sigma(\aa\cdot\xx-b)$.
 \end{thm}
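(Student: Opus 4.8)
The plan is to treat this as a textbook Tikhonov-regularized least-squares problem in the Hilbert space $L^2(\lambda)$, solve it through its normal equation, and then recognize the resolvent-based solution as a ridgelet transform by commuting the resolvent with the integral over $\xx$. First I would settle existence and uniqueness: by \refprop{S bdd} the operator $S$ is bounded, so $\gamma\mapsto L[\gamma;f,\beta]$ is continuous, and the penalty $\beta\|\gamma\|_{L^2(\lambda)}^2$ with $\beta>0$ makes it strongly convex and coercive on $L^2(\lambda)$. Hence a unique global minimizer $\gamma^*$ exists (this is the content referenced in \refapp{tikhonov}).

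Next I would characterize $\gamma^*$ by first-order optimality. Setting the Fr\'echet derivative of $L$ to zero gives the normal equation $(S^*S+\beta\,\id)\gamma^*=S^*f$, which by \refprop{T=SS} becomes $(\beta\,\id+T)\gamma^*=S^*f$. Since $T=S^*S$ is self-adjoint and positive semidefinite and $\beta>0$, the operator $\beta\,\id+T$ is bounded below by $\beta$, hence boundedly invertible with $\|(\beta\,\id+T)^{-1}\|\le 1/\beta$; therefore $\gamma^*=(\beta\,\id+T)^{-1}S^*f$. By \refprop{adjoint of R}, $S^*=R_{\rho_\sigma}$, so $S^*f=\int_{\RR^m}f(\xx)\,\sigma_\xx\,\dd\mu(\xx)$ where $\sigma_\xx(\aa,b)=\sigma(\aa\cdot\xx-b)$ lies in $L^2(\lambda)$ for each fixed $\xx$.

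I would then define the kernel fiberwise by $\rho^*(\xx,\cdot):=(\beta\,\id+T)^{-1}\sigma_\xx$, which by construction solves the claimed identity $(\beta+T)\rho^*(\xx,\cdot)=\sigma_\xx$ and, notably, does not depend on $f$. To ensure $R_{\rho^*}$ is well-defined in the sense of \refprop{bdd R}, I would check $\rho^*\in L^2(\mu\otimes\lambda)$: from $\|\rho^*(\xx,\cdot)\|_{L^2(\lambda)}\le\tfrac{1}{\beta}\|\sigma_\xx\|_{L^2(\lambda)}=\tfrac{1}{\beta}\sqrt{k(\xx,\xx)}$ one gets $\|\rho^*\|_{L^2(\mu\otimes\lambda)}^2\le\tfrac{1}{\beta^2}\int_{\RR^m}k(\xx,\xx)\,\dd\mu(\xx)=\tfrac{1}{\beta^2}\int K((\aa,b),(\aa,b))\,\dd\lambda(\aa,b)$, which is finite under the standing boundedness of $\sigma$ and finiteness of $\lambda\in\mathcal{P}$.

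Finally I would verify $R_{\rho^*}[f]=\gamma^*$ via
\begin{align*}
R_{\rho^*}[f]
&=\int_{\RR^m}f(\xx)\,(\beta\,\id+T)^{-1}\sigma_\xx\,\dd\mu(\xx)\\
&=(\beta\,\id+T)^{-1}\!\int_{\RR^m}f(\xx)\,\sigma_\xx\,\dd\mu(\xx)=(\beta\,\id+T)^{-1}S^*f=\gamma^*.
\end{align*}
The crux is the middle equality, where the bounded operator $(\beta\,\id+T)^{-1}$ is pulled out of the $\mu$-integral, and I expect this interchange to be the main obstacle. I would justify it by regarding $\xx\mapsto f(\xx)\sigma_\xx$ as a Bochner-integrable $L^2(\lambda)$-valued map---integrability following from $\int_{\RR^m}\|f(\xx)\sigma_\xx\|_{L^2(\lambda)}\,\dd\mu(\xx)\le\|f\|_{L^2(\mu)}\big(\int_{\RR^m}k(\xx,\xx)\,\dd\mu\big)^{1/2}<\infty$ by Cauchy--Schwarz---and invoking the fact that bounded linear operators commute with Bochner integration. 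A secondary technical point is the joint measurability of $\rho^*$ on $\RR^m\times(\RR^m\times\RR)$, which I would obtain from measurability of $\xx\mapsto\sigma_\xx$ into $L^2(\lambda)$ together with continuity of the resolvent.
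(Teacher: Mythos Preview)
Your proposal is correct and follows essentially the same route as the paper: define $\rho^*(\xx,\cdot)=(\beta+T)^{-1}\sigma_\xx$, invoke the Tikhonov normal equation (\refprop{minima for tikhonov regularization}) to obtain $\gamma^*=(\beta+T)^{-1}S^*f$, and then identify $R_{\rho^*}[f]$ with $\gamma^*$ using \refprop{adjoint of R} and \refprop{T=SS}. If anything, you are more explicit than the paper on the technical points---Bochner integrability of $\xx\mapsto f(\xx)\sigma_\xx$, the bound showing $\rho^*\in L^2(\mu\otimes\lambda)$, and joint measurability---which the paper subsumes under ``simple computation.''
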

 In the classic case when $\dd \lambda = \dd \aa \dd b$ and $\dd \mu(\xx) = \dd \xx$, the solution $\rho^*$ is given by $\rho_\xx(\aa,b) = (\beta + 1)^{-1}\rho(\aa \cdot \xx -b )$ with any admissible $\rho$, which results in a shrink version of the classic ridgelet transform $(\beta + 1)^{-1} R_{\rho}$. See \refapp{limitcase} for a sketch of proof.

 \section{Conclusion}
 We have shown that the global minimizer of the BP training problem is given by the ridgelet transform.
 In order to treat the scatter plot of hidden parameters, such as \reffig{bp}, we introduced the integral representation of neural networks, and reformulated the BP training problem in the Hilbert space of coefficient functions, i.e. $\min_{\gamma \in L^2(\lambda)} L[\gamma]$.
 As a result, the BP training problem was reduced to the quadratic programming, without harming the generality of activation functions.
 At last, we have successfully discovered a modified version of the ridgelet transform that attains the global minimum. In the classic setting, the modified transform simplifies to a shrink ridgelet transform. By virtue of functional analysis, our formulation is independent of the parameterization of neural networks, which would contribute to the geometric understanding of neural networks.
 Extensions to general risk functions and deep networks, and applications to the analysis of local minima will be our important future works.

 \appendix

 \section{Optimization Problem in Hilbert spaces} \label{app:tikhonov}
 Let $H_0, H_1$ be Hilbert spaces endowded with the inner products $\iprod{\cdot, \cdot}_0$ and $\iprod{\cdot, \cdot}_1$, respectively, and $A : H_0 \to H_1$ be a densely defined closed linear operator.

 For a given $f \in H_1$, we find $g \in H_0$ satisfying
 \begin{align*}
 A g = f.
 \end{align*}
 For this problem, we have the following.
 \begin{prop}
 \label{minima for tikhonov regularization}
 Let $f\in H_1$. Then for every $\beta >0$, we have
 \begin{align*}
 \argmin_{g \in H_0}\left( \| A g - f \|_1^2 + \beta \| g \|_0^2\right)
 = (\beta + A^* A)^{-1} A^* f,
 \end{align*}
 where $A^*:H_1\rightarrow H_0$ denotes the adjoint operator of $A$.
 \end{prop}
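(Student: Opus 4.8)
The plan is to verify directly that the candidate $g^* := (\beta + A^*A)^{-1} A^* f$ is the unique minimizer by a completing-the-square argument, rather than invoking existence abstractly and then separately deriving the optimality condition. First I would record that the functional $J(g) := \| A g - f \|_1^2 + \beta \| g \|_0^2$ is strictly convex and coercive on $\dom(A)$ precisely because of the term $\beta\| g\|_0^2$ with $\beta>0$; this already guarantees at most one minimizer, so the real content of the proposition is the explicit formula.

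Before anything else I would establish that $B := \beta + A^*A$ is boundedly invertible, so that $g^*$ is well defined. Since $A$ is densely defined and closed, von Neumann's theorem gives that $A^*A$ is self-adjoint and non-negative; hence $B$ is self-adjoint with spectrum contained in $[\beta,\infty)$, so $B^{-1}$ exists and $\| B^{-1}\| \le 1/\beta$. I would note that $g^* \in \dom(A^*A) \subseteq \dom(A)$ and that $g^*$ is characterized by the normal equation $(\beta + A^*A)g^* = A^* f$, equivalently $A^*(A g^* - f) = -\beta g^*$.

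The heart of the argument is then an algebraic identity. Writing an arbitrary competitor as $g = g^* + h$ with $h \in \dom(A)$, I would expand
\begin{align*}
J(g^* + h) = J(g^*) + 2\,\mathrm{Re}\,\bigl(\langle A g^* - f, A h\rangle_1 + \beta\langle g^*, h\rangle_0\bigr) + \| A h\|_1^2 + \beta\| h\|_0^2 .
\end{align*}
Moving $A$ across the inner product gives $\langle A g^* - f, A h\rangle_1 = \langle A^*(A g^* - f), h\rangle_0$, so the cross term equals $2\,\mathrm{Re}\,\langle A^*(A g^* - f) + \beta g^*, h\rangle_0$, which vanishes by the normal equation. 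This leaves $J(g^* + h) = J(g^*) + \| A h\|_1^2 + \beta\| h\|_0^2 \ge J(g^*)$, with equality forcing $h = 0$ since $\beta>0$; one identity thus delivers existence, uniqueness, and the formula simultaneously.

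The step needing care — and the main obstacle in the general stated setting — is the adjoint manipulation for an unbounded $A$: passing $A$ across the inner product requires $A g^* - f \in \dom(A^*)$, and even writing $A^* f$ presupposes $f \in \dom(A^*)$. Both hold automatically in the case actually used in the paper, where $A = S$ is bounded (Proposition~\ref{S bdd}) so that $A^*$ is everywhere defined and all inner-product manipulations are unconditional; for the fully general densely-defined closed $A$ I would either restrict to $f \in \dom(A^*)$ or pass to the weak form of the normal equation, keeping the completing-the-square identity intact.
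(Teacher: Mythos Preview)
Your proof is correct and, like the paper's, is a completing-the-square argument; the difference is in how the square is completed. The paper introduces the operator square root $\sqrt{\beta + A^*A}$ and rewrites the objective directly as $\|\sqrt{\beta + A^*A}\,g - \sqrt{\beta + A^*A}^{\,-1}A^*f\|_0^2$ plus a nonnegative remainder, reading off the minimizer from that single identity. You instead fix the candidate $g^*$, expand $J(g^*+h)$, and kill the cross term via the normal equation. Your route is slightly more elementary in that it avoids the spectral calculus needed to form $\sqrt{\beta+A^*A}$, and it makes the uniqueness transparent from the $\beta\|h\|_0^2$ term; the paper's route is a one-line algebraic rewriting but implicitly uses the same invertibility of $\beta+A^*A$ that you spell out. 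Your explicit attention to the domain issues (von~Neumann's theorem for self-adjointness of $A^*A$, and the need for $f\in\dom(A^*)$ in the genuinely unbounded case) is more careful than the paper, which states the proposition for densely defined closed $A$ but in the proof manipulates adjoints without comment; as you note, in the application $A=S$ is bounded so these points are moot there.
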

 \begin{proof}
 A direct computation gives
 \begin{align*}
 \lefteqn{\| A g - f \|_1^2 + \beta \| g \|_0^2}\\
 &= \iprod{Ag, Ag}_1 - 2 \Re \iprod{  Ag, f}_1 + \iprod{f, f}_1 + \beta \iprod{g, g}_0 \\
 &= \iprod{\sqrt{\beta + A^* A}g, \sqrt{\beta + A^* A}g}_0 - 2 \Re \iprod{ \sqrt{\beta + A^* A}g, \sqrt{\beta + A^* A}^{-1} A^* f}_0 + \iprod{f, f}_1 \\
 &= \| \sqrt{\beta + A^* A}g- \sqrt{\beta + A^* A}^{-1} A^* f\|_0^2 + ({\rm nonnegative}).
 \end{align*}
 Therefore, the objective functional attains the minimum at $g^* = (\beta + A^* A)^{-1} A^* f$.
 \end{proof}

 \section{Proofs} \label{app:proofs}

 \subsection{Proposition \ref{S bdd}}
 \label{pf S bdd}
 \begin{proof}
 Let $\gamma\in L^2(\lambda)$. Then we have
 \begin{align*}
 &||S [\gamma] ||_{L^2(\mu)}^2\\
 &=\iiint\gamma(\aa,b)\overline{\gamma(\aa',b')}\sigma(\aa\cdot x-b)\overline{\sigma(\aa'\cdot x-b')} \dd \lambda(\aa,b) \dd \lambda(\aa',b')d\mu(x)\\
 &=\int\gamma(\aa,b) \overline{\gamma(\aa',b')}K((\aa,b),(\aa',b'))  \dd\lambda\otimes \dd\lambda((\aa,b),(\aa',b'))\\
 &\le||\gamma||_{L^2(\lambda)}^2||K||_{L^2(\lambda\otimes\lambda)}.
 \end{align*}
 Here, in the last line, we use the Cauchy-Schwarz inequality.
 \end{proof}

 \subsection{Proposition \ref{image of S}}
 \label{pf image of S}
 \begin{proof}
 Let $\sigma_\xx(\aa,b):=\sigma(\aa\cdot \xx -b)$. Let $H'\subset L^2(\lambda)$ be a closure of the linear subspace generated by $\{\sigma_\xx\}_{\xx\in\RR^m}$. By definition, we have $S [\sigma_\xx]=k(\xx, \cdot)$ and $\langle \sigma_\xx, \sigma_\yy\rangle_{L^2(\lambda)}=k(\xx,\yy)$. Thus, $S$ induces an isomorphism between Hilbert spaces $H'$ and $H$, in particular, the image of $S$ is $H$.
 \end{proof}

 \subsection{Theorem \ref{universal kernel}}
 \label{pf universal kernel}
 \begin{proof}
 Let $I:=[-A,A]$.
 Since $\sigma$ is a periodic function, we have a Fourier series expansion of $\sigma$: $\sigma(x)=\sum_{n\in\mathbb{Z}}\widehat{\sigma}(n)e^{\pi i n/A}$. Then we have
 \begin{align*}
 k(\xx,\yy)
 &= \int_{\RR^m \times I} \sigma( \aa \cdot \xx - b) \sigma( \aa \cdot \yy - b ) \nu(\aa) \dd \aa \dd b  \\
 &= \int_{\RR^m} (\sigma * \widetilde{\sigma})( \aa \cdot (\xx - \yy) ) \nu(\aa) \dd \aa, \\ 
 &= \int_{\RR^m}\sum_{n\in\mathbb{Z}} |\widehat{\sigma}(n)|^2e^{2\pi i n \aa \cdot (\xx - \yy) } \nu(\aa) \dd \aa  \\
 &=|a_0|^2||\nu||_{L^1} +\int_{\RR^m} \underbrace{\sum_{n\in\mathbb{Z}\setminus \{0\}} \frac{|\widehat{\sigma}(n)|^2}{|n|^m} \nu \left( \frac{\aa}{n} \right)}_{=:q(\aa)} e^{ \pi i \aa \cdot (\xx - \yy)/A } \dd \aa. 
 \end{align*}
 Here, $\sigma * \widetilde{\sigma}(x):=\int_{[-A,A]}\sigma(t)\sigma(t-x)dt$. 
 Since we assume ${\rm supp}(\nu)=\RR^m$, or $\#\{n~|~\widehat{\sigma}(n)\neq 0\}=\infty$, the support of the function $q$ is $\RR^m$. Therefore, we see that $k$ is universal (see Section 3.2. of \citep{Sriperumbudur2010}).
 \end{proof}

 \subsection{Proposition \ref{bdd R}}
 \label{pf bdd R}
 \begin{proof}
 Let $f\in L^2(\mu)$. Then we have
 \begin{align*}
 &||R_\rho [f]||_{L^2(\lambda)}^2\\
 &=\iiint f(x)f(y)\rho(x,(\aa,b))\rho(y,(\aa,b)) \dd\mu(x) \dd\mu(y) \dd\lambda(\aa,b)\\
 &=\int f(x)f(y)\int \rho(x,(\aa,b))\rho(y,(\aa,b))\dd\lambda(\aa,b)\dd\mu\otimes\dd\mu(x,y)\\
 &\le||f||_{L^2(\mu)}^2||\rho||^2_{L^2(\mu\otimes\lambda)}.
 \end{align*}
 In the last line, we use the Cauchy-Schwartz inequality twice.
 \end{proof}

 \subsection{Proposition \ref{adjoint of R}}
 \label{pf adjoint of R}
 \begin{proof}
 It suffices to prove that $\langle R_{\rho_\sigma}[f], \gamma\rangle_{L^2(\lambda)}=\langle f, S[\gamma]\rangle_{L^2(\mu)}$. By straightforward computation, we have
 \begin{align*}
 &\langle R_{\rho_\sigma}[f], \gamma\rangle_{L^2(\lambda)}\\
 &=\iint f(\xx)\sigma(\aa\cdot\xx-b)\overline{\gamma(\aa,b)}\,d\mu(\xx)d\lambda(\aa,b)\\
 &=\int f(\xx)\int \overline{\gamma(\aa,b)}\sigma(\aa\cdot\xx-b)d\lambda(\aa,b)d\mu(\xx)\\
 &=\langle f, S[\gamma]\rangle_{L^2(\mu)}.
 \end{align*}
 Thus we have $R_{\rho_\sigma}=S^*$.
 \end{proof}

 \subsection{Proposition \ref{T=SS}}
 \label{pf T=SS}
 \begin{proof}
 By Proposition \ref{adjoint of R}, we have
 \begin{align*}
 &S^* S\gamma(\aa,b)\\
 &=\iint \gamma(\aa',b')\sigma(\aa' \cdot \xx-b')\sigma(\aa\cdot\xx-b)\dd\lambda(\aa',b')\dd\mu(\xx)\\
 &=T[\gamma](\aa,b). \qedhere
 \end{align*}
 \end{proof}

 \subsection{Theorem \ref{global minima}}
 \label{pf global minima}
 \begin{proof}
 Since $\beta \id+T$ is an isomorphism, we define $\rho^*(\xx,(\aa,b)):=\left((\beta+T)^{-1}\sigma_\xx\right)(\aa,b)$.  By Proposition \ref{minima for tikhonov regularization}, the minimizer $\gamma^*$ that attains $\min_{\gamma\in L^2(\lambda)}L(f,\beta; \gamma)$ is explicitly given as $(\beta+S^*S)^{-1}S^*f$. By Proposition \ref{adjoint of R} and Proposition \ref{T=SS}, it suffices to prove that $(\beta+T)R_{\rho^*}=R_{\rho_\sigma}$, but it follows from simple computation. 
 \end{proof}

 \subsection{Remark on Theorem \ref{global minima}}
 \label{app:limitcase}
 Let $\rho$ be an arbitrary admissible, namely, $\int_{\RR} \widehat{\sigma}(\zeta) \overline{\widehat{\rho}{\zeta}}|\zeta|^{-m} \dd \zeta = (2 \pi)^{-(m-1)}$. Then,
 \begin{align*}
 &T[ \rho_\xx ](\aa,b) \\
 &= \int \rho( \aa' \cdot \xx - b' ) \int \sigma(\aa \cdot \xx' - b) \overline{\sigma(\aa' \cdot \xx' - b')} \dd \aa' \dd b'\dd \xx' \\
 &= \int \sigma(\aa \cdot \xx' - b)
 \underbrace{\int \rho( \aa' \cdot \xx - b' )\overline{\sigma(\aa' \cdot \xx' - b')} \dd \aa' \dd b'}_{=\delta(\xx-\xx')} \dd \xx' \\
 &= \sigma_\xx(\aa,b).
 \end{align*}
 Here, the last equation holds because
 \begin{align*}
     &\int \rho( \aa \cdot \xx - b )\overline{\sigma(\aa \cdot \xx' - b)} \dd \aa \dd b \\
     &= \int (\overline{\widetilde{\rho}} * \sigma) ( \aa \cdot (\xx - \xx') )\dd \aa \\
     &= (2 \pi)^{-1} \int \int_\RR \overline{\widehat{\rho}(\zeta)}\widehat{\sigma}(\zeta)|\zeta^{-m}| \dd \zeta e^{i \aa \cdot (\xx - \xx')}\dd \aa \\
     &= (2 \pi)^{-m} \int  e^{i \aa \cdot (\xx - \xx')}\dd \aa = \delta( \xx' - \xx ).
 \end{align*}
 Thus, $\rho^* = (\beta + 1)^{-1}\rho_\xx$ solves the equation.

\subsubsection*{Acknowledgements}
This work was supported by JSPS KAKENHI 18K18113.

\bibliography{library_summary}

\section{Further Examples}
We present all the experimental results described in \refsec{motivation}.
The results are presented as $10$ sets of subfigures.
A single set corresponds to one of the $10$ dataset,
and it contains a scatter plot of dataset,
four scatter plots of parameters,
and two spectra.

On the whole, the scatter plots with ADAM are sharper than those with BFGS,
and those with ReLU are also sharper than those with $\tanh$.
These results suggest the implicit regularization property of both ADAM and ReLU.
The scatter plot with ADAM tends to be sharp because less parameters can remain to be trained for the varieties of mini batches.
The scatter plot with ReLU tends to be sharp because ReLU is homogeneous and thus two different parameters $(a,b)$ and $(ka,kb)$, where $k > 0,$ indicate the same basis function $\sigma(a x - b)$.

\begin{figure}[h]
    \begin{center}
    \begin{subfigure}[c]{0.66\textwidth}
    \includegraphics[width=\linewidth, trim=0cm 0cm 0cm 0cm, clip]{./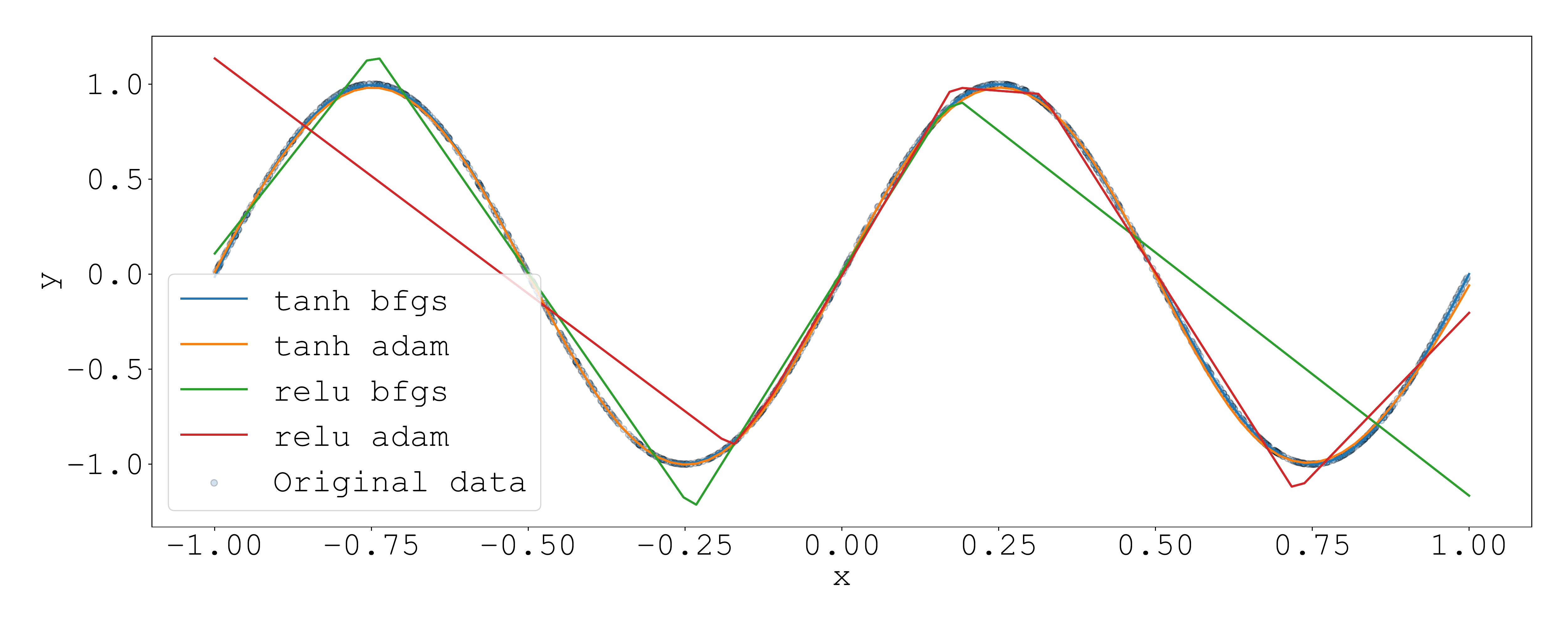}
    \caption{dataset}
    \end{subfigure}
    \end{center}
    \begin{subfigure}[c]{0.33\textwidth}
    \includegraphics[width=\linewidth, trim=1cm 0cm 1cm 1cm, clip]{./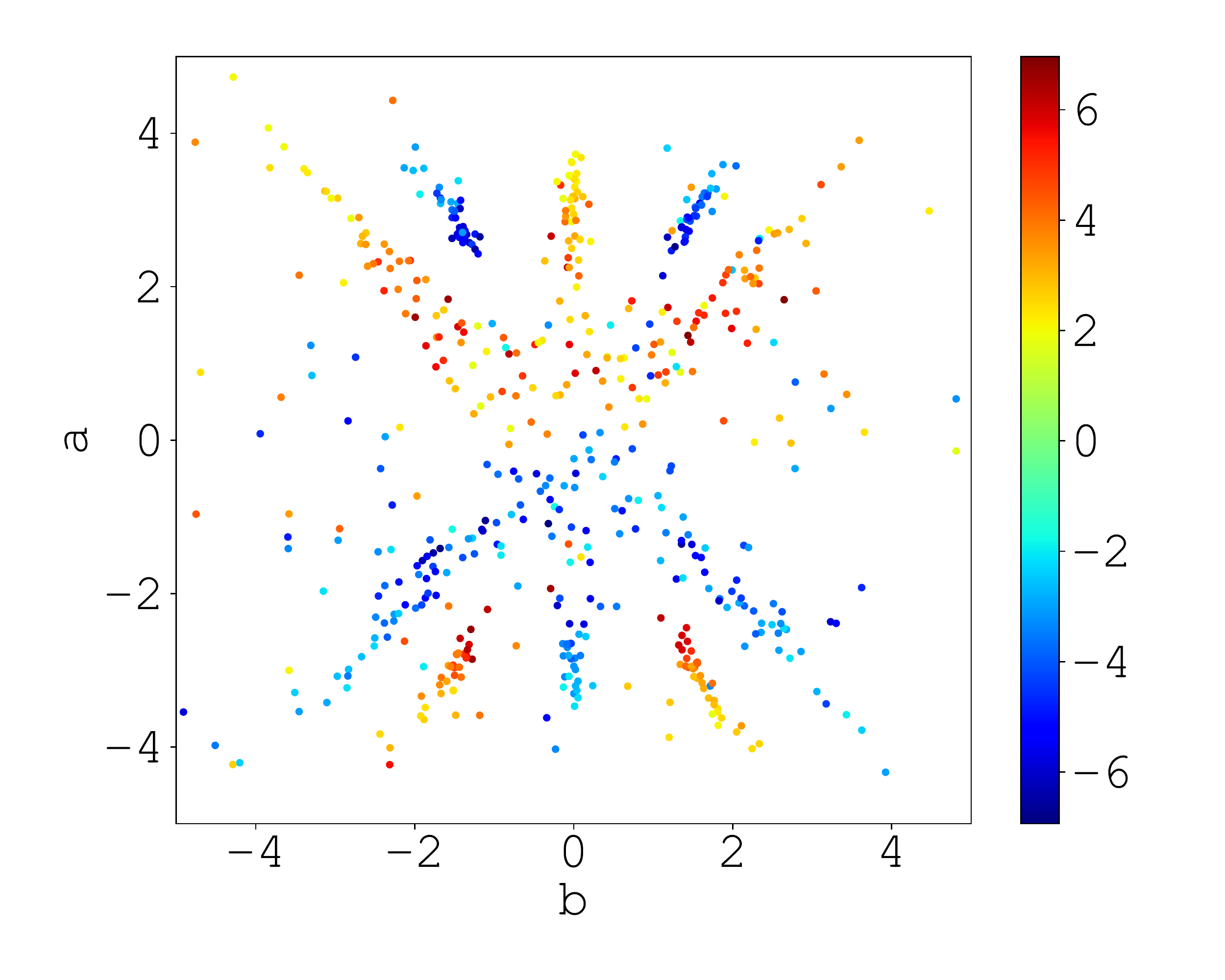}
    \caption{tanh, bfgs}
    \end{subfigure}%
    \begin{subfigure}[c]{0.33\textwidth}
    \includegraphics[width=\linewidth, trim=1cm 0cm 1cm 1cm, clip]{./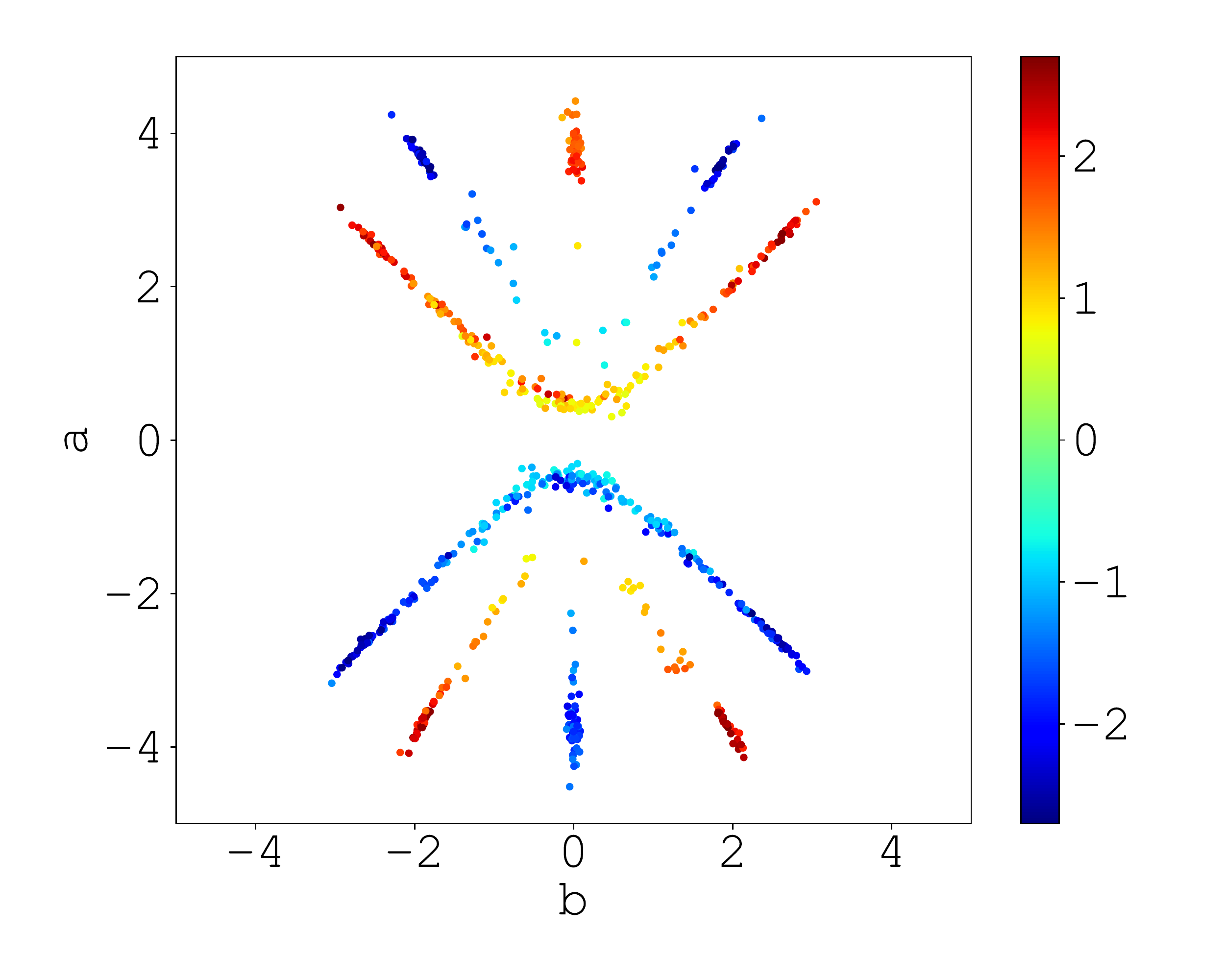}
    \caption{tanh, adam}
    \end{subfigure}%
    \begin{subfigure}[c]{0.33\textwidth}
    \includegraphics[width=\linewidth, trim=1cm 0cm 1cm 1cm, clip]{./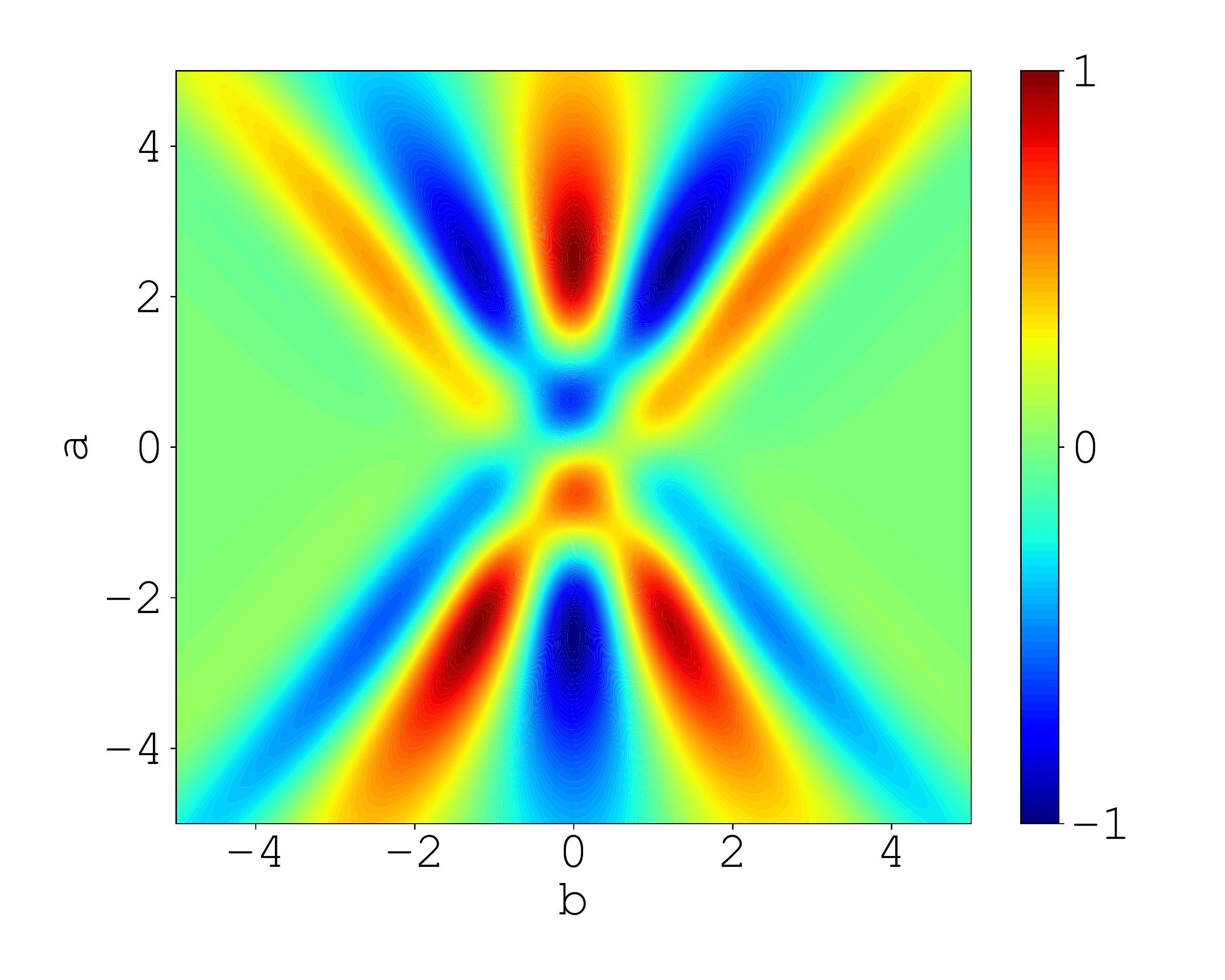}
    \caption{tanh}
    \end{subfigure}\\
    \begin{subfigure}[c]{0.33\textwidth}
    \includegraphics[width=\linewidth, trim=1cm 0cm 1cm 1cm, clip]{./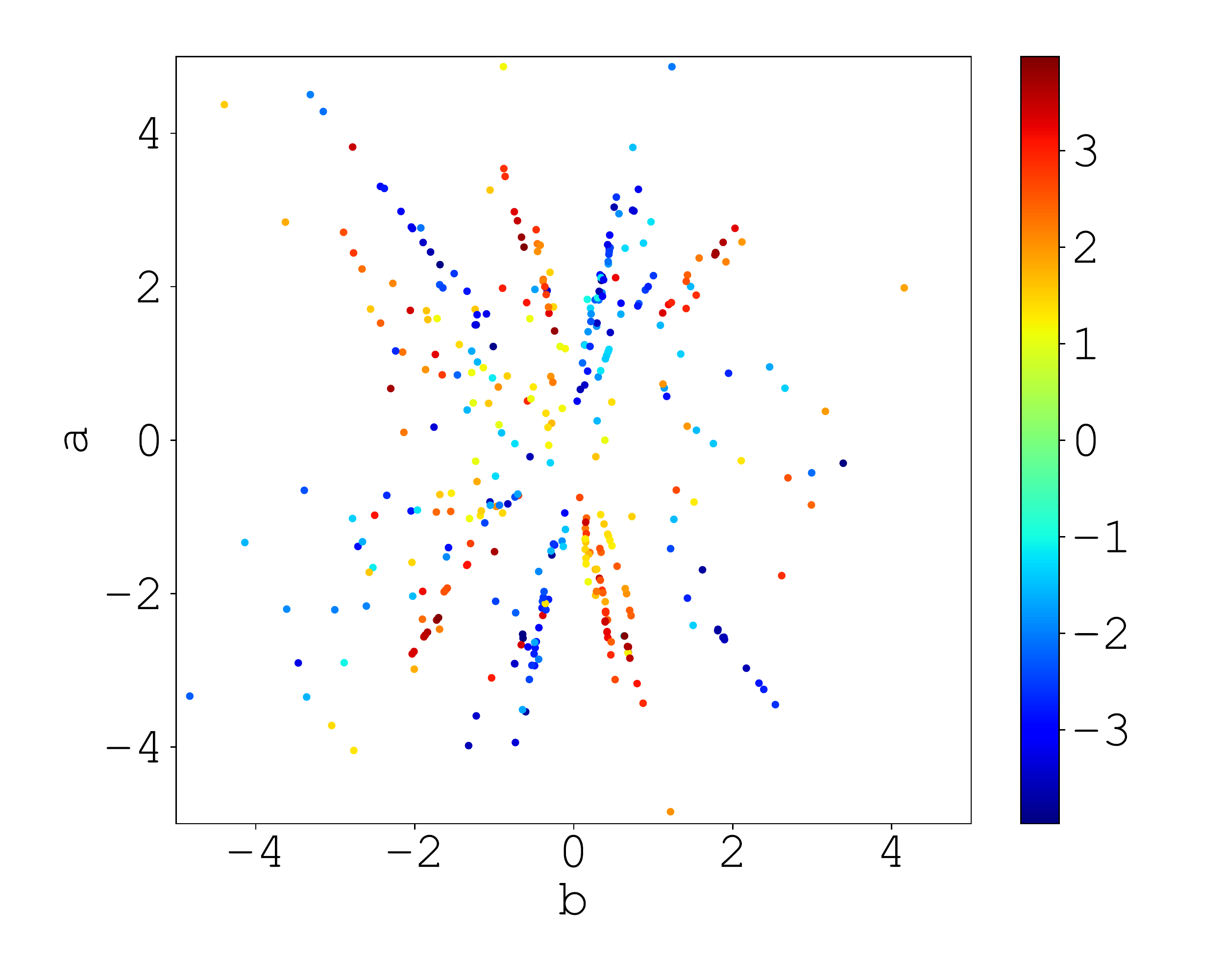}
    \caption{relu, bfgs}
    \end{subfigure}%
    \begin{subfigure}[c]{0.33\textwidth}
    \includegraphics[width=\linewidth, trim=1cm 0cm 1cm 1cm, clip]{./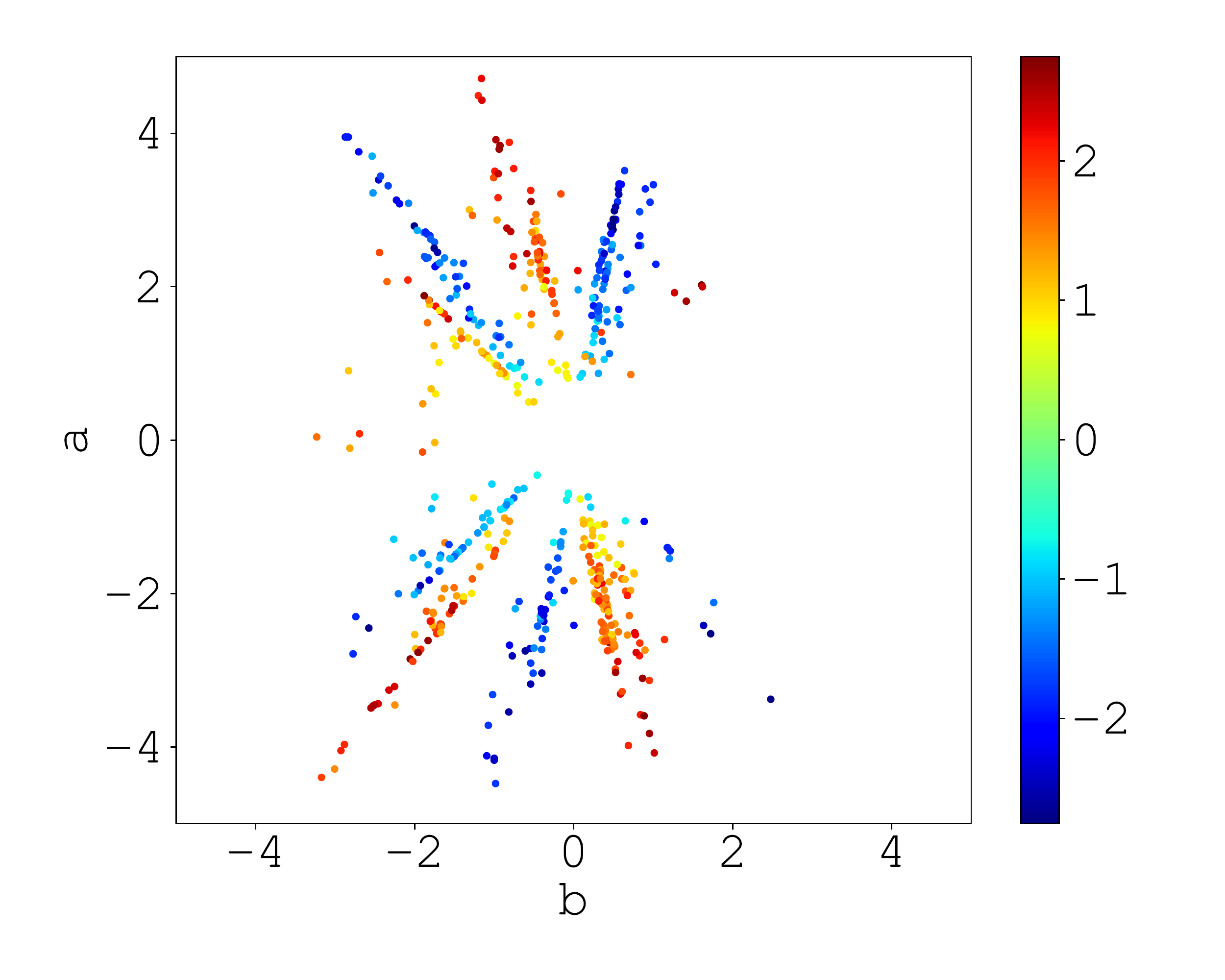}
    \caption{relu, adam}
    \end{subfigure}%
    \begin{subfigure}[c]{0.33\textwidth}
    \includegraphics[width=\linewidth, trim=1cm 0cm 1cm 1cm, clip]{./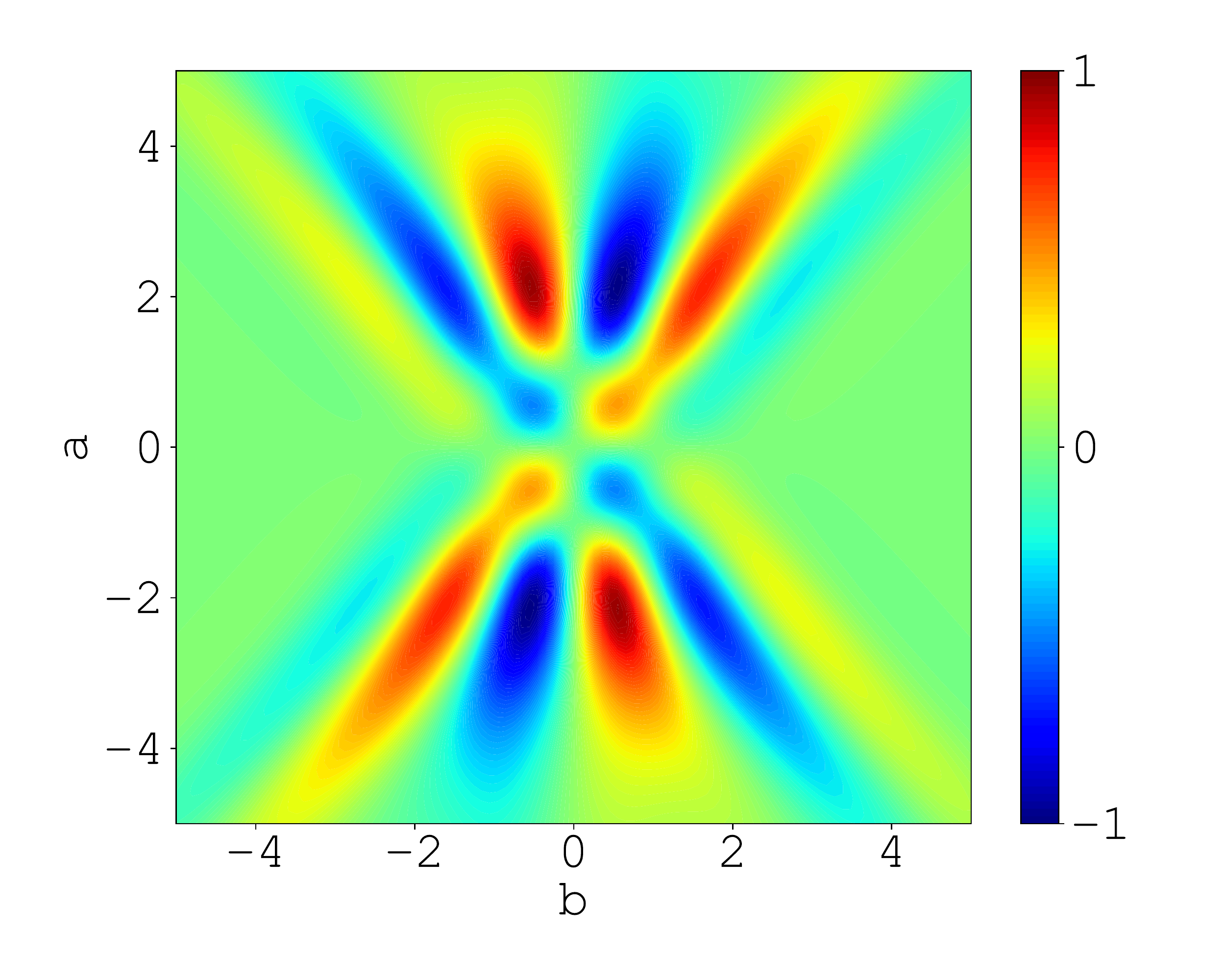}
    \caption{relu}
    \end{subfigure}\\
\caption{Sinusoidal Curve}
\label{fig:sin02pt.n0000}
\end{figure}

\begin{figure}[h]
    \begin{center}
    \begin{subfigure}[c]{0.66\textwidth}
    \includegraphics[width=\linewidth, trim=0cm 0cm 0cm 0cm, clip]{./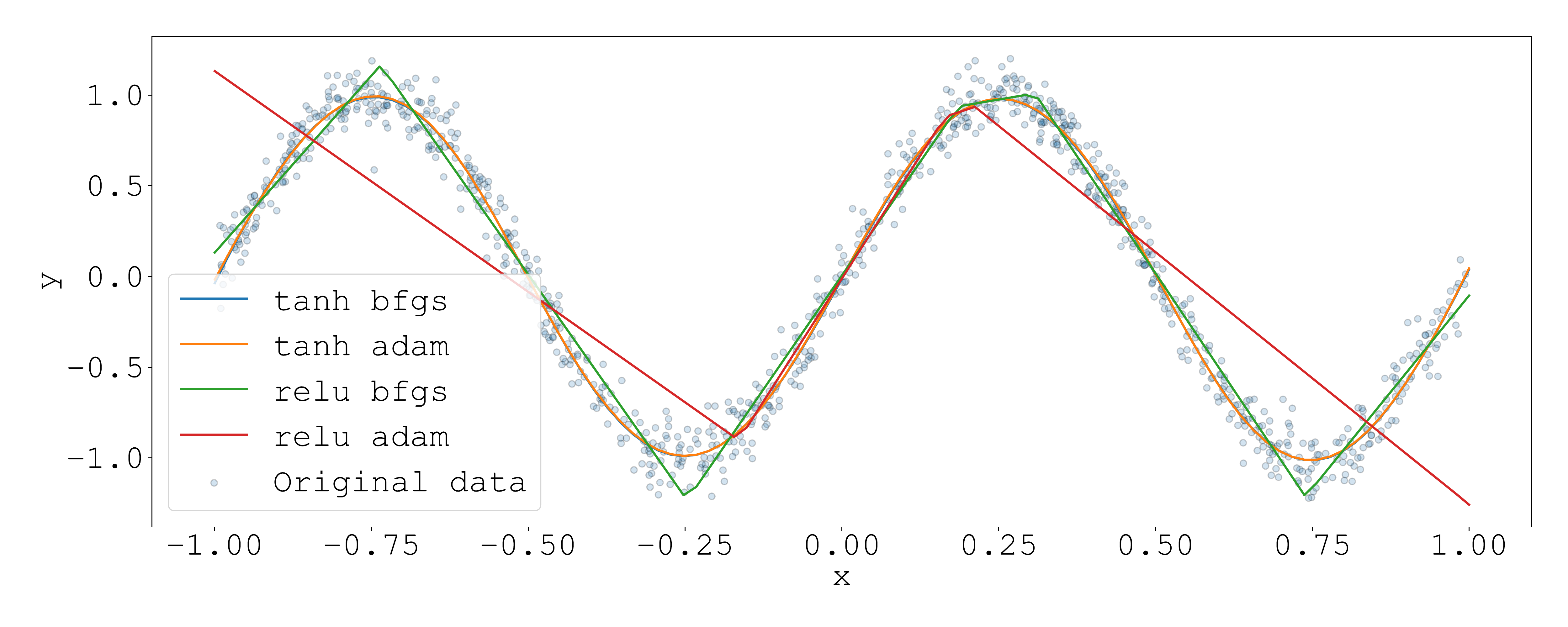}
    \caption{dataset}
    \end{subfigure}
    \end{center}
    \begin{subfigure}[c]{0.33\textwidth}
    \includegraphics[width=\linewidth, trim=1cm 0cm 1cm 1cm, clip]{./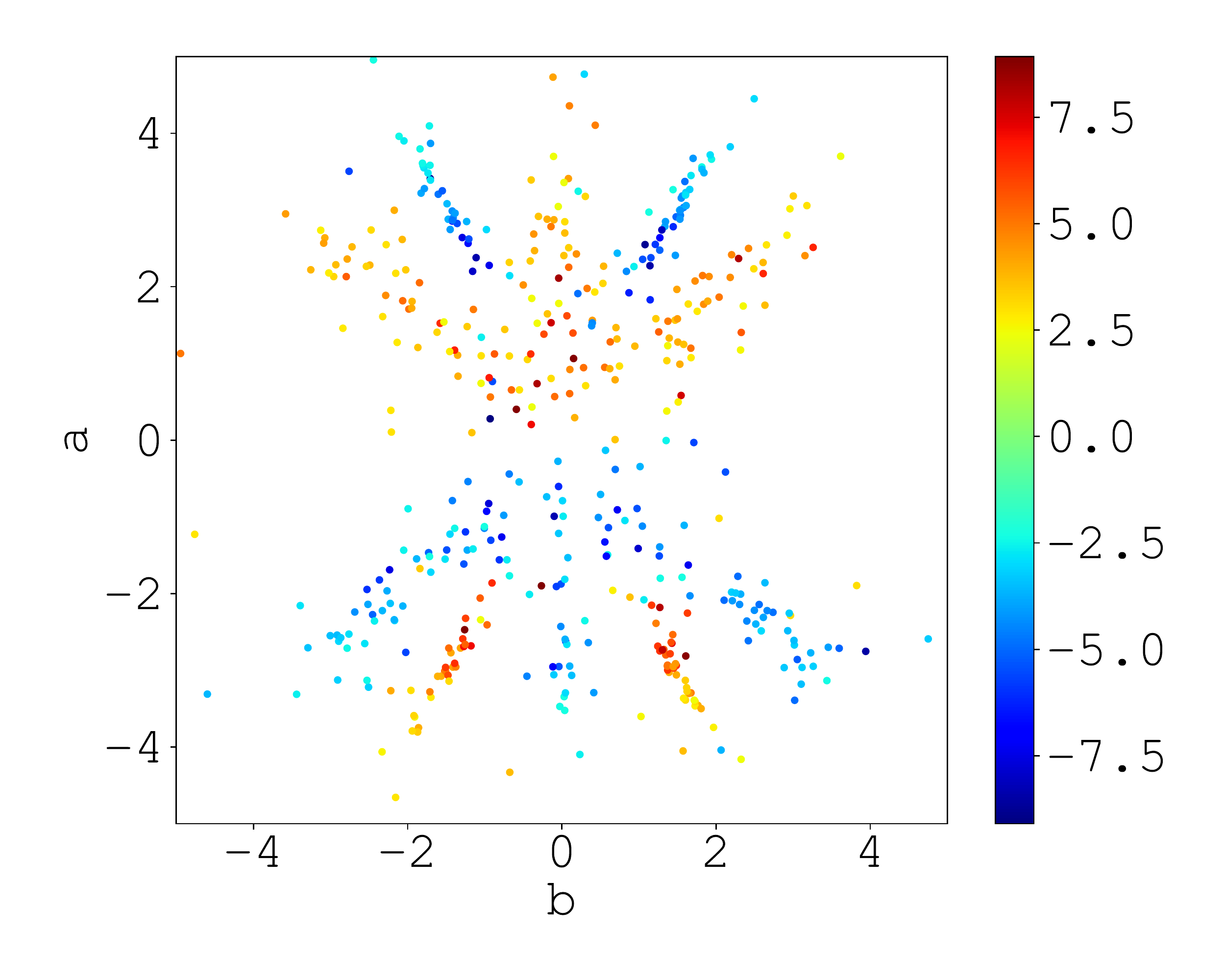}
    \caption{tanh, bfgs}
    \end{subfigure}%
    \begin{subfigure}[c]{0.33\textwidth}
    \includegraphics[width=\linewidth, trim=1cm 0cm 1cm 1cm, clip]{./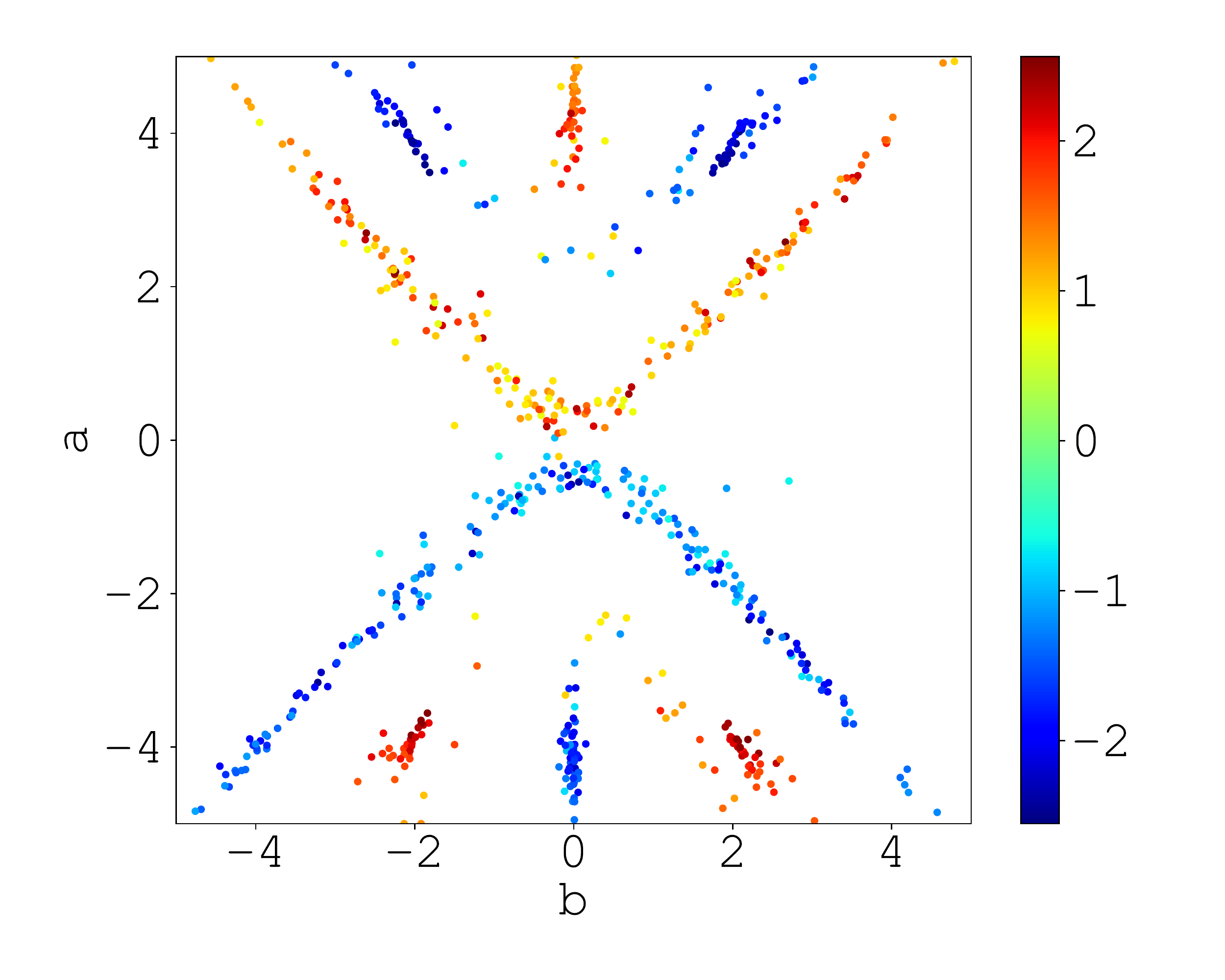}
    \caption{tanh, adam}
    \end{subfigure}%
    \begin{subfigure}[c]{0.33\textwidth}
    \includegraphics[width=\linewidth, trim=1cm 0cm 1cm 1cm, clip]{./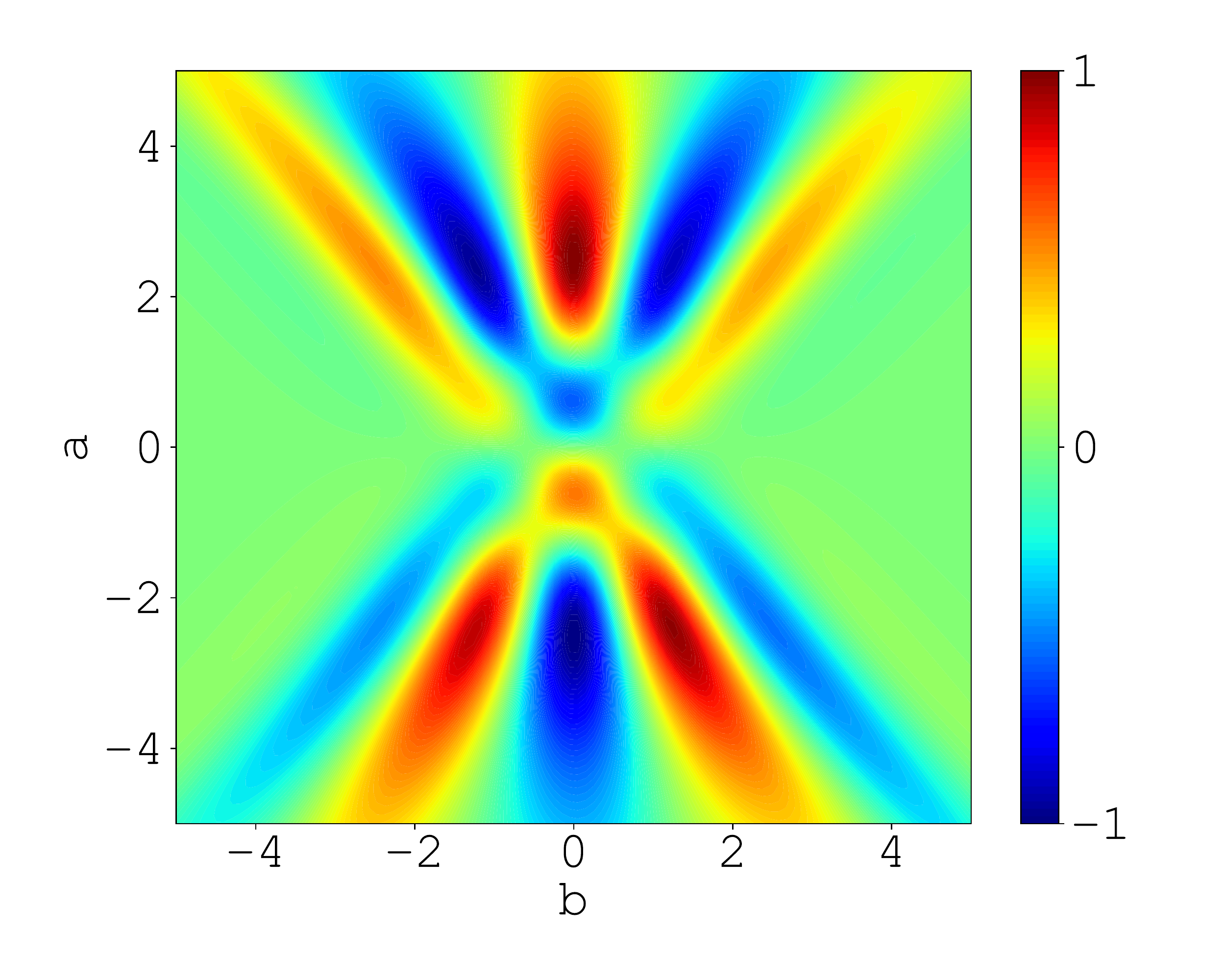}
    \caption{tanh}
    \end{subfigure}\\
    \begin{subfigure}[c]{0.33\textwidth}
    \includegraphics[width=\linewidth, trim=1cm 0cm 1cm 1cm, clip]{./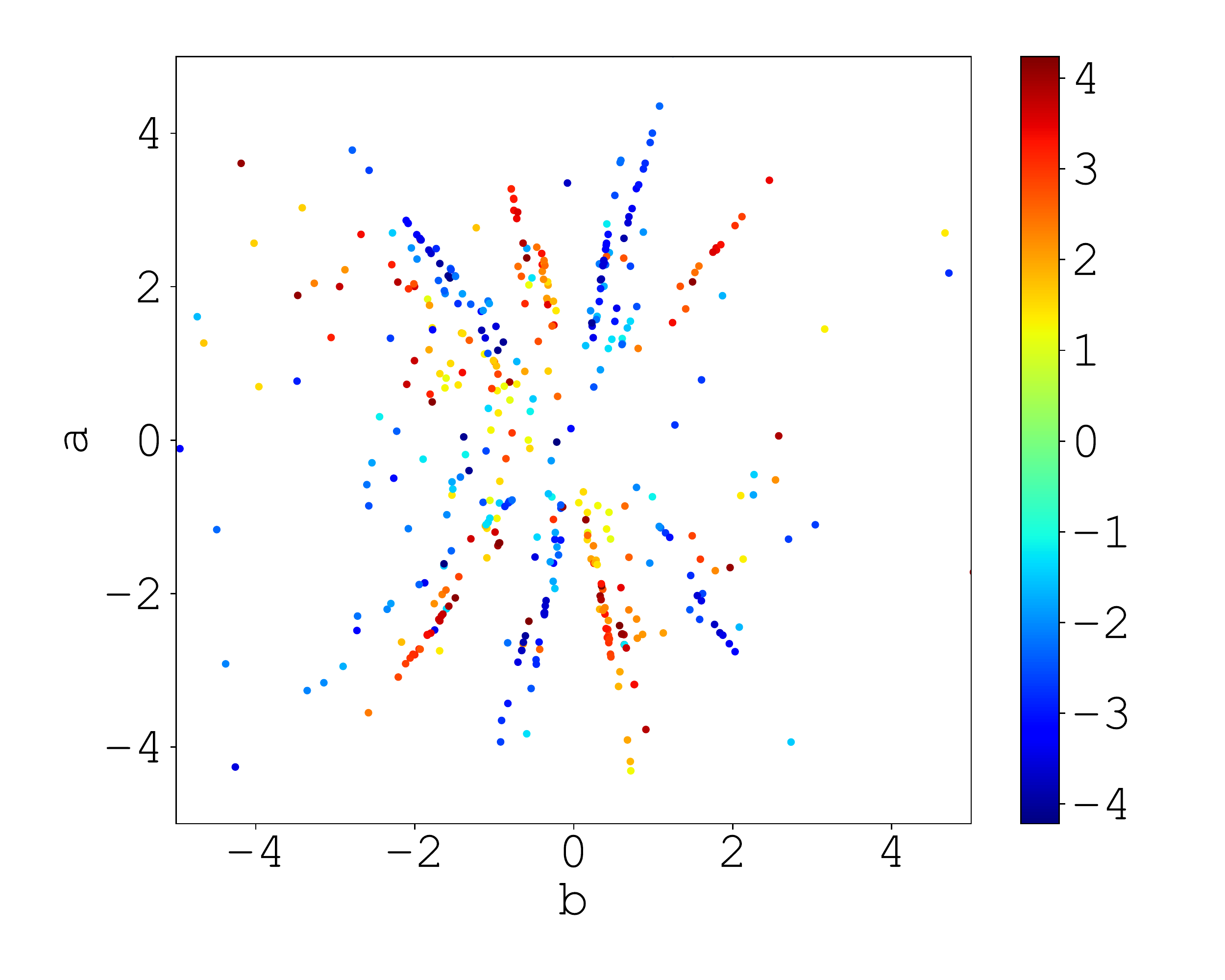}
    \caption{relu, bfgs}
    \end{subfigure}%
    \begin{subfigure}[c]{0.33\textwidth}
    \includegraphics[width=\linewidth, trim=1cm 0cm 1cm 1cm, clip]{./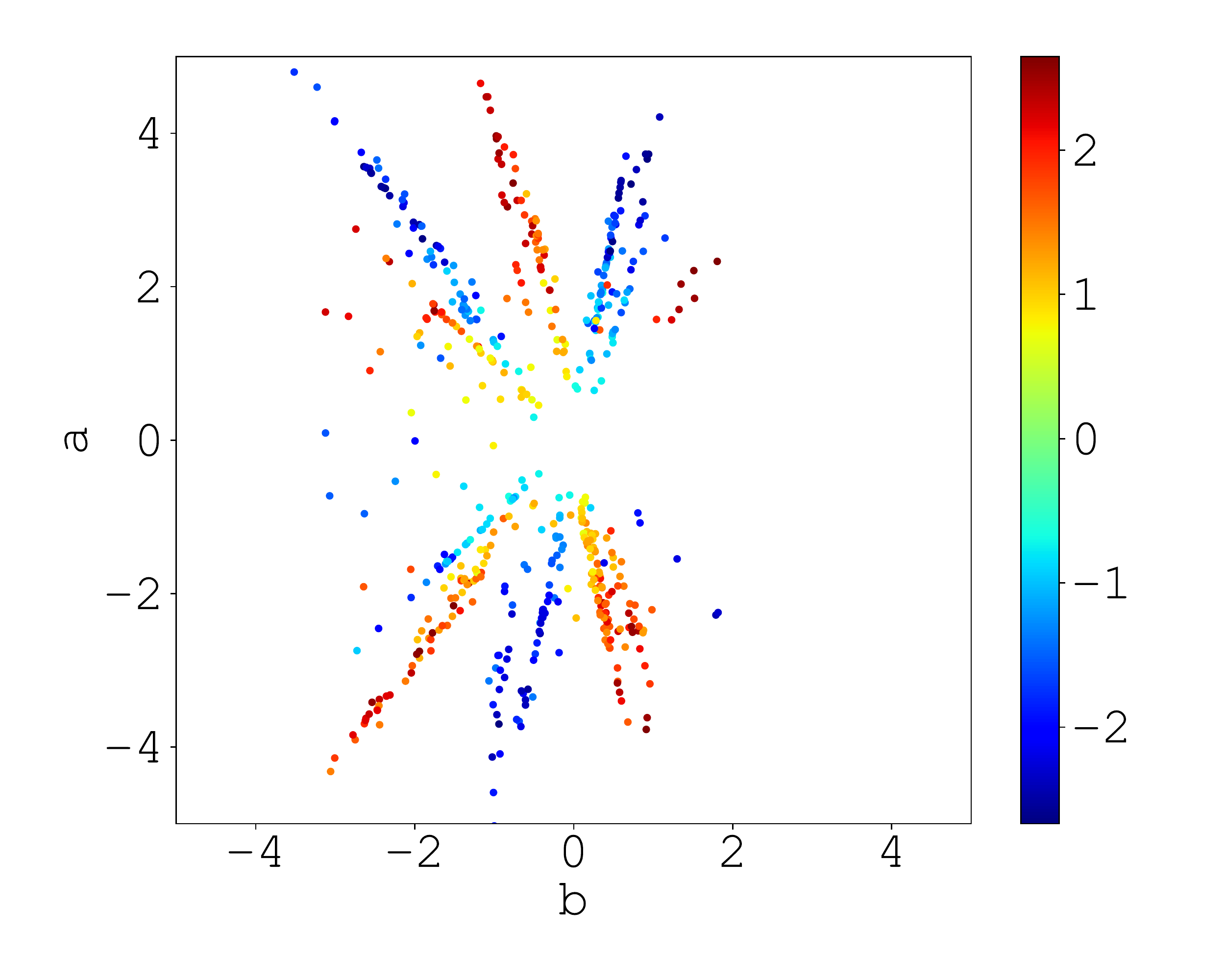}
    \caption{relu, adam}
    \end{subfigure}%
    \begin{subfigure}[c]{0.33\textwidth}
    \includegraphics[width=\linewidth, trim=1cm 0cm 1cm 1cm, clip]{./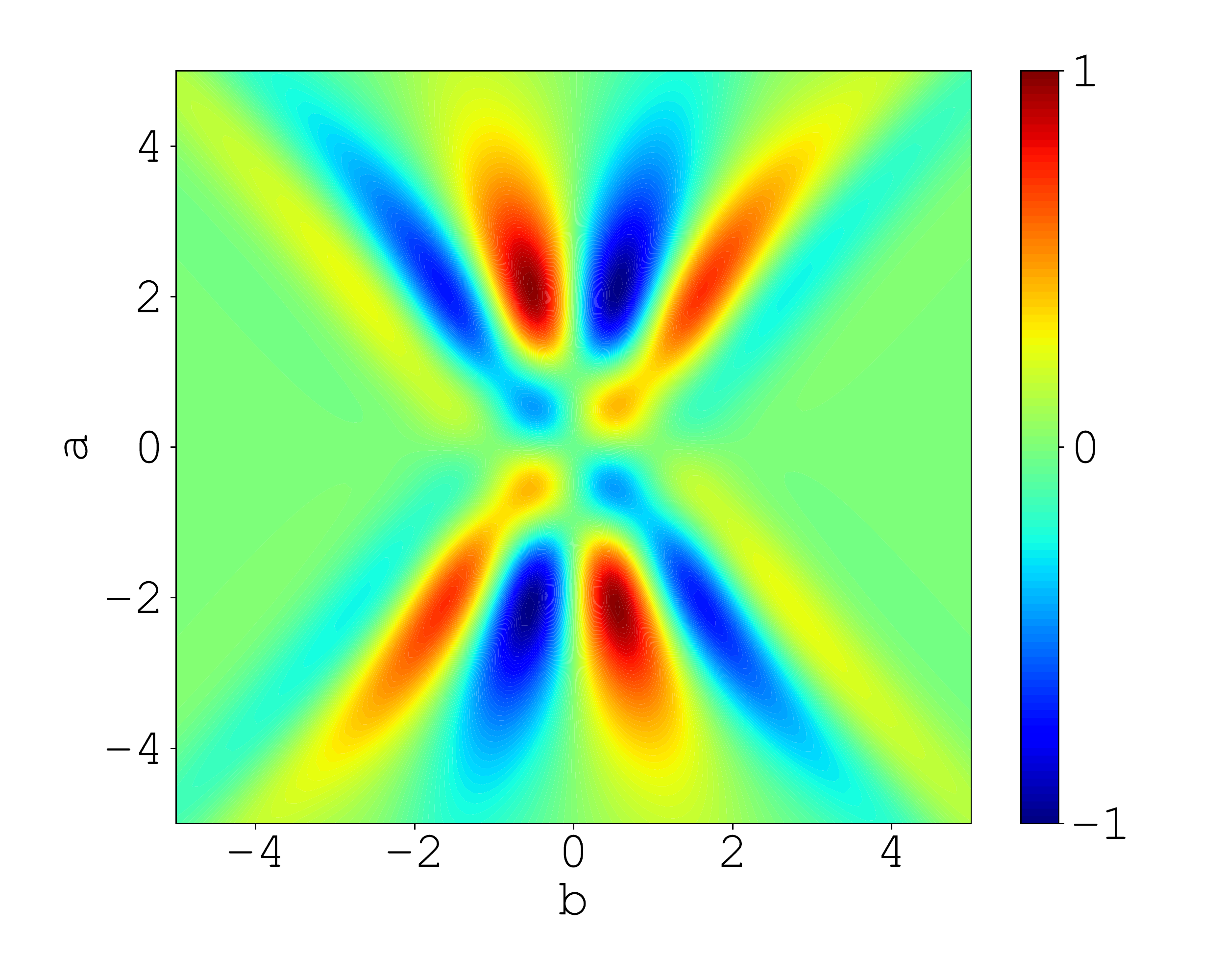}
    \caption{relu}
    \end{subfigure}\\
\caption{Sinusoidal Curve with Gaussian Noise $N(0, 0.1^2)$}
\label{fig:nsin02pt.e0010.n0000}
\end{figure}

\begin{figure}[h]
    \begin{center}
    \begin{subfigure}[c]{0.66\textwidth}
    \includegraphics[width=\linewidth, trim=0cm 0cm 0cm 0cm, clip]{./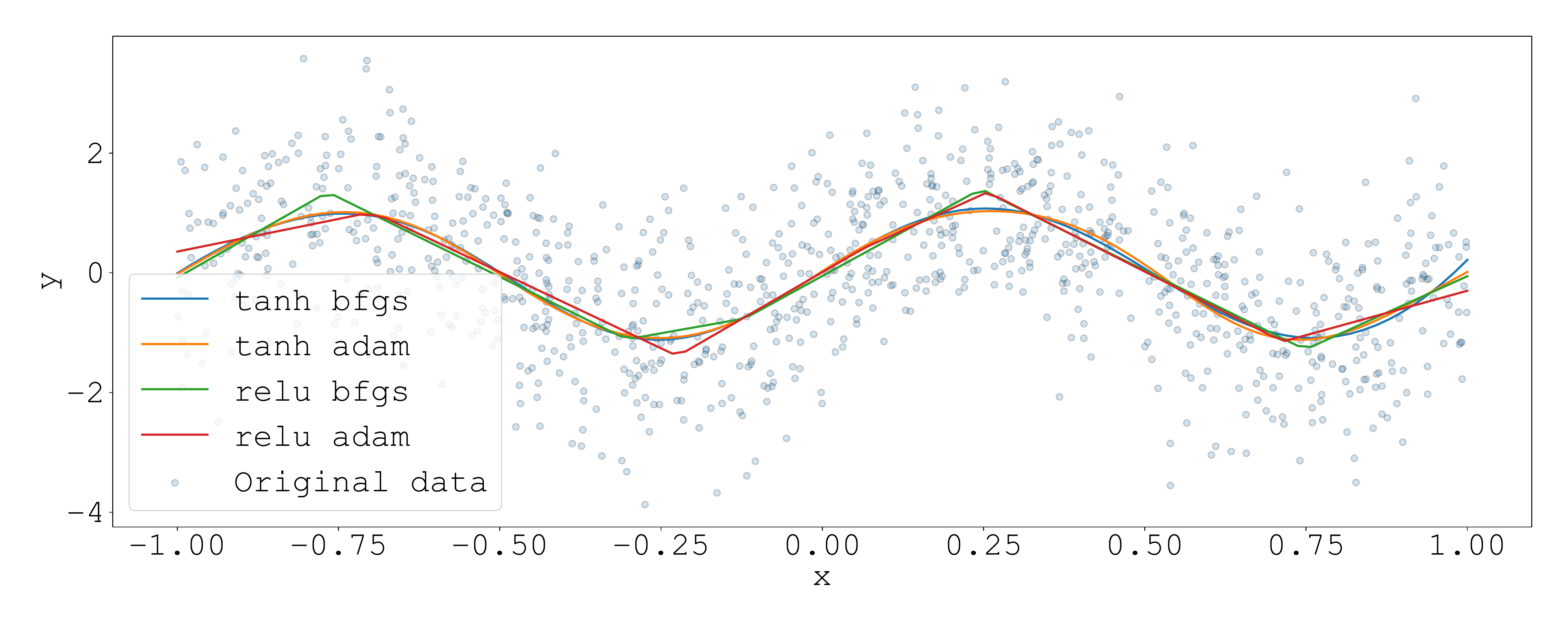}
    \caption{dataset}
    \end{subfigure}
    \end{center}
    \begin{subfigure}[c]{0.33\textwidth}
    \includegraphics[width=\linewidth, trim=1cm 0cm 1cm 1cm, clip]{./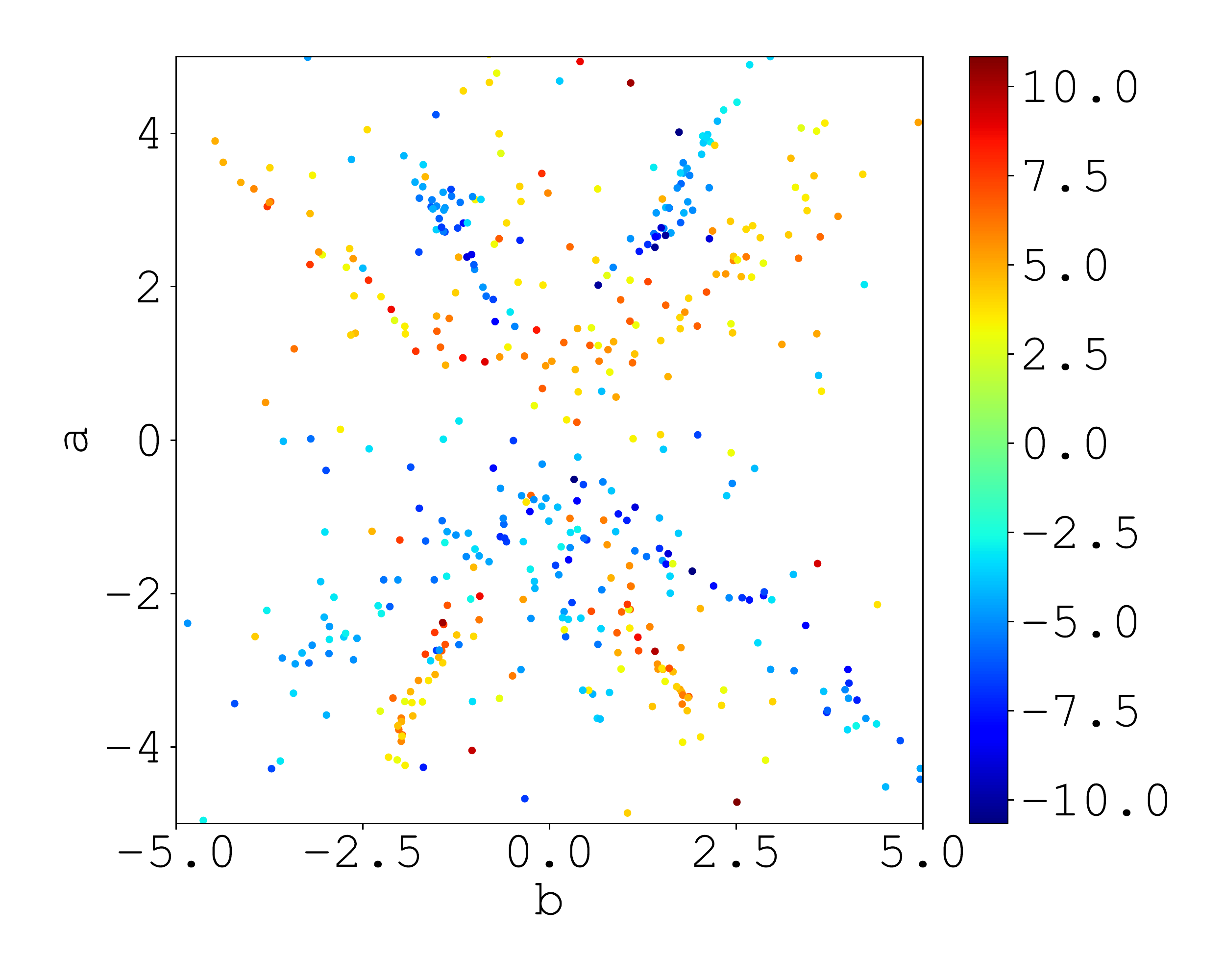}
    \caption{tanh, bfgs}
    \end{subfigure}%
    \begin{subfigure}[c]{0.33\textwidth}
    \includegraphics[width=\linewidth, trim=1cm 0cm 1cm 1cm, clip]{./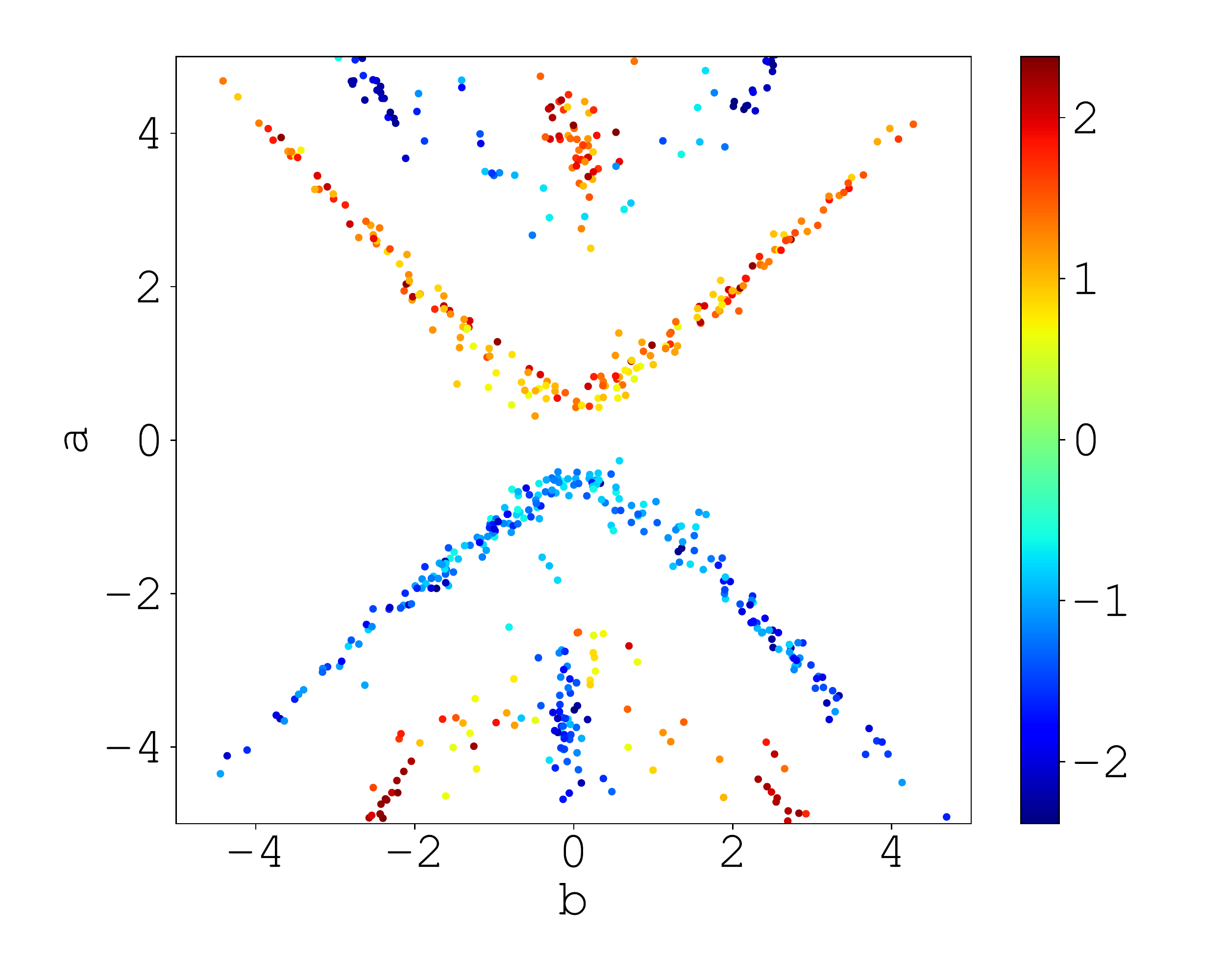}
    \caption{tanh, adam}
    \end{subfigure}%
    \begin{subfigure}[c]{0.33\textwidth}
    \includegraphics[width=\linewidth, trim=1cm 0cm 1cm 1cm, clip]{./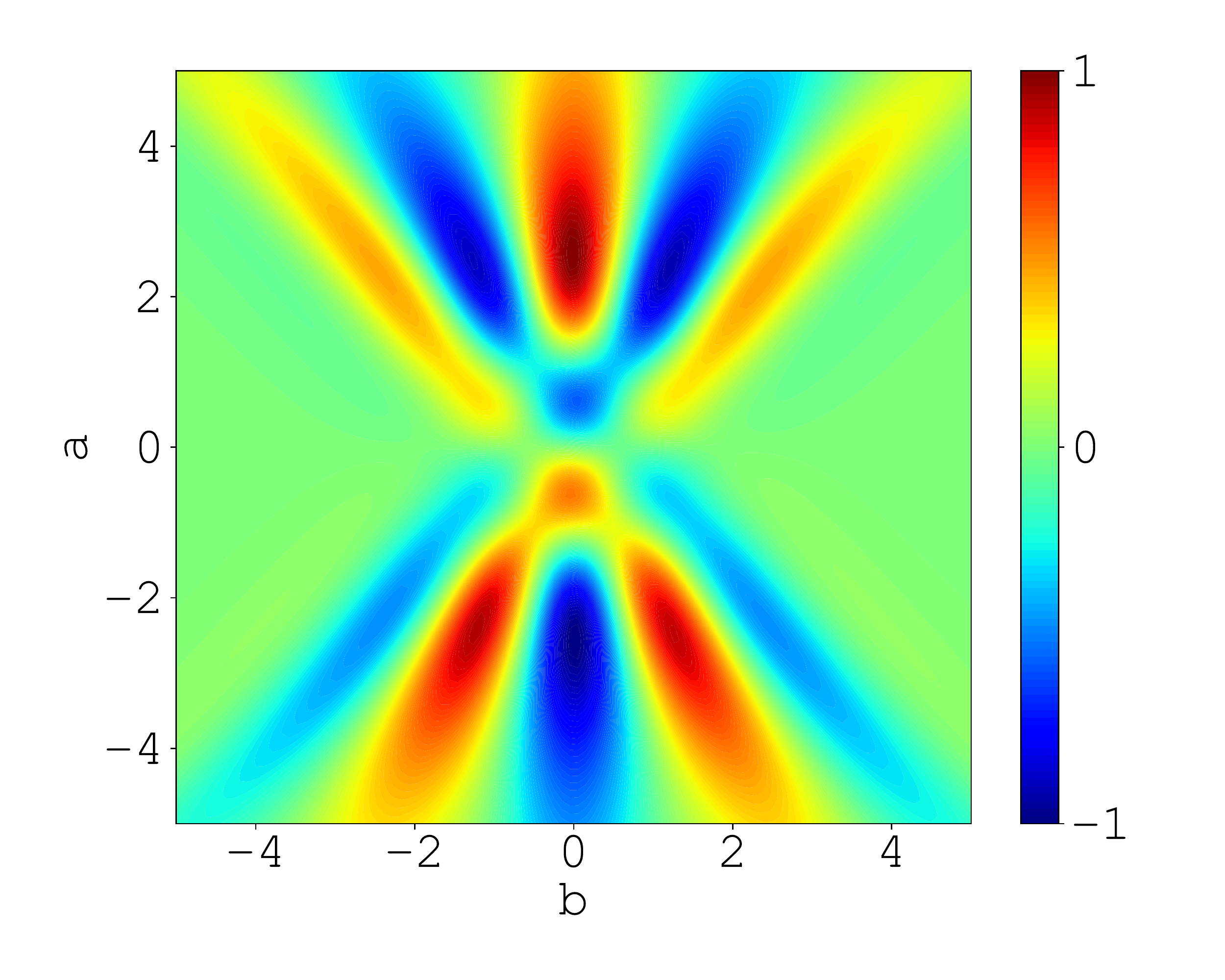}
    \caption{tanh}
    \end{subfigure}\\
    \begin{subfigure}[c]{0.33\textwidth}
    \includegraphics[width=\linewidth, trim=1cm 0cm 1cm 1cm, clip]{./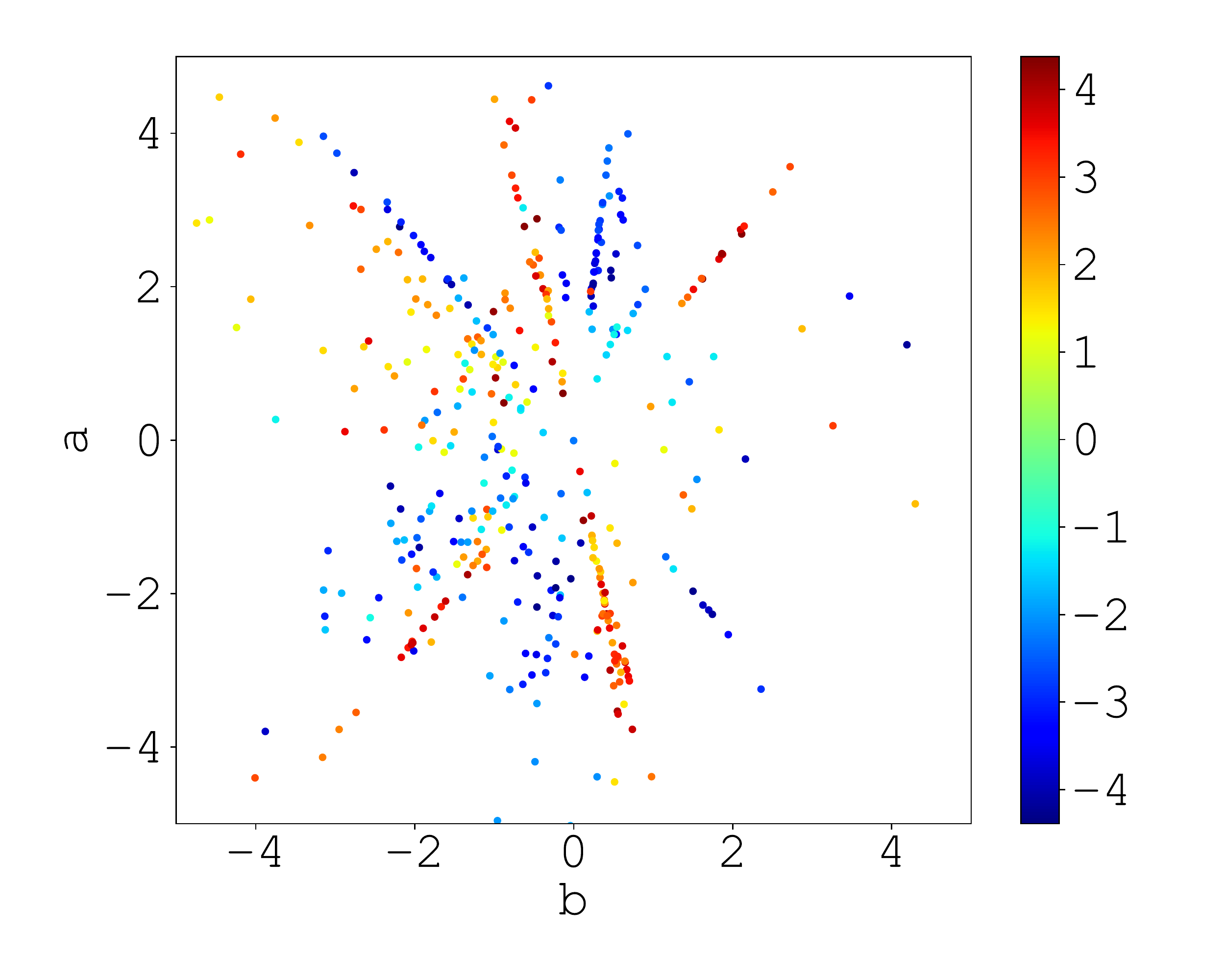}
    \caption{relu, bfgs}
    \end{subfigure}%
    \begin{subfigure}[c]{0.33\textwidth}
    \includegraphics[width=\linewidth, trim=1cm 0cm 1cm 1cm, clip]{./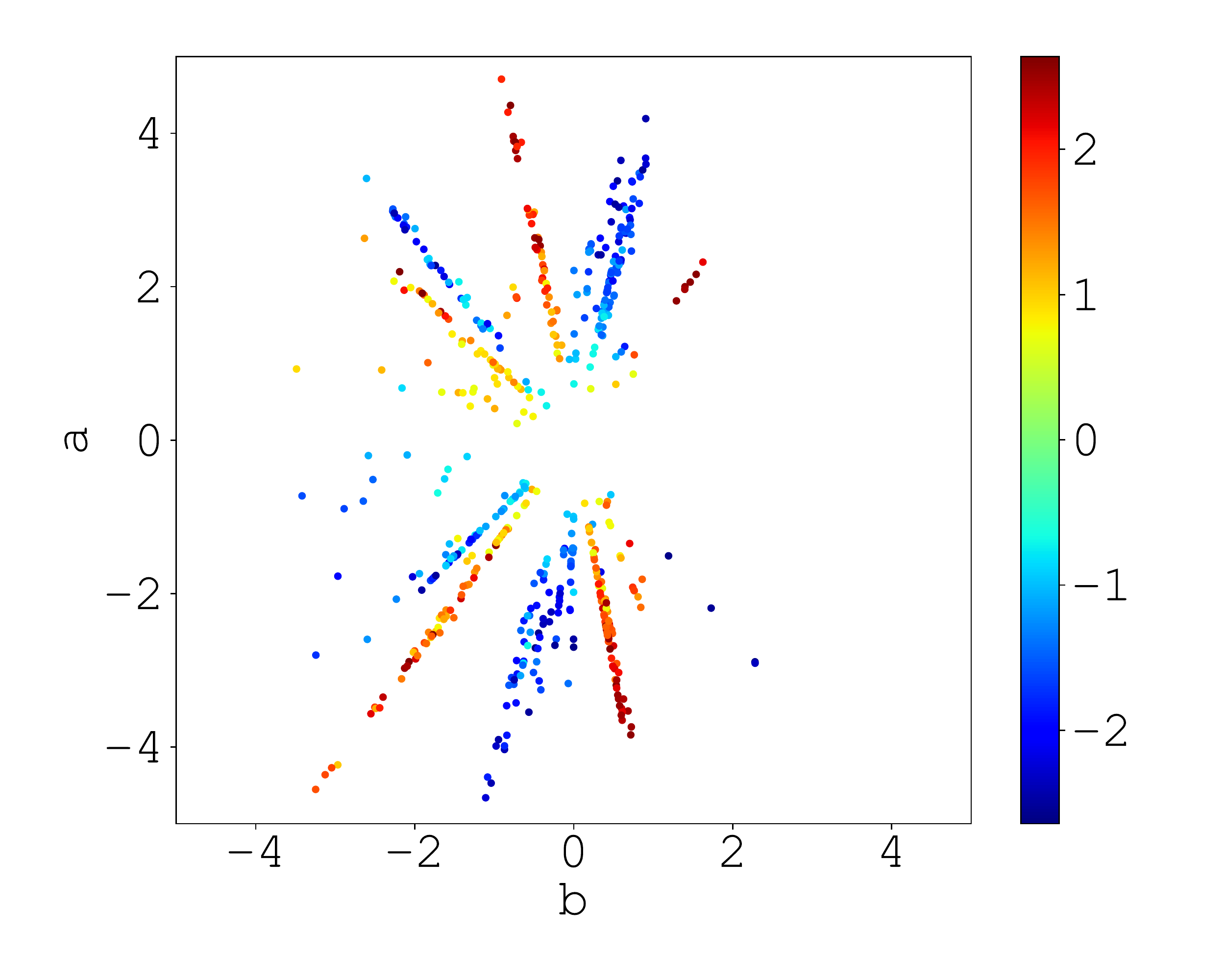}
    \caption{relu, adam}
    \end{subfigure}%
    \begin{subfigure}[c]{0.33\textwidth}
    \includegraphics[width=\linewidth, trim=1cm 0cm 1cm 1cm, clip]{./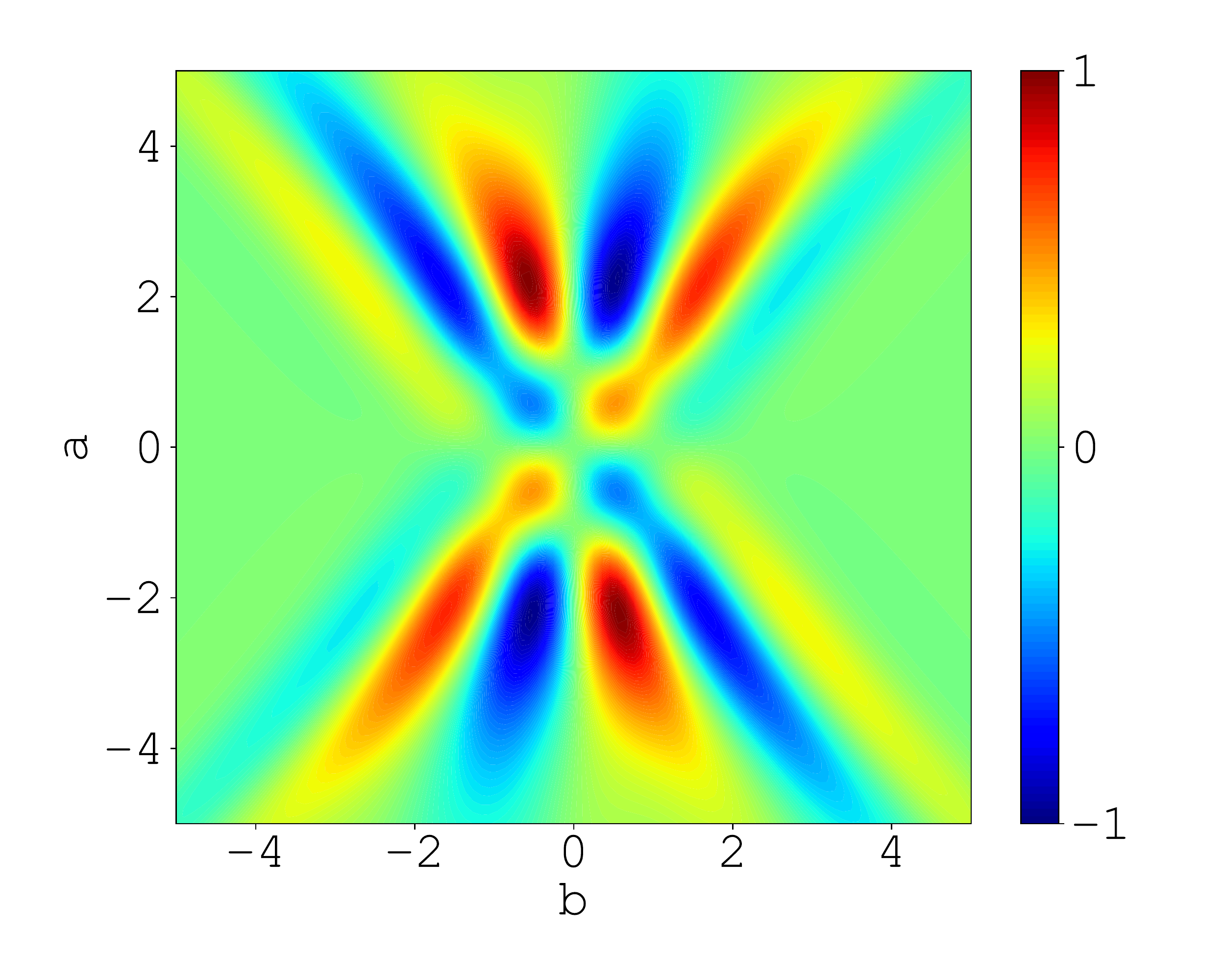}
    \caption{relu}
    \end{subfigure}\\
\caption{Sinusoidal Curve with Gaussian Noise $N(0, 1.0^2)$}
\label{fig:nsin02pt.e0100.n0000}
\end{figure}

\begin{figure}[h]
    \begin{center}
    \begin{subfigure}[c]{0.66\textwidth}
    \includegraphics[width=\linewidth, trim=0cm 0cm 0cm 0cm, clip]{./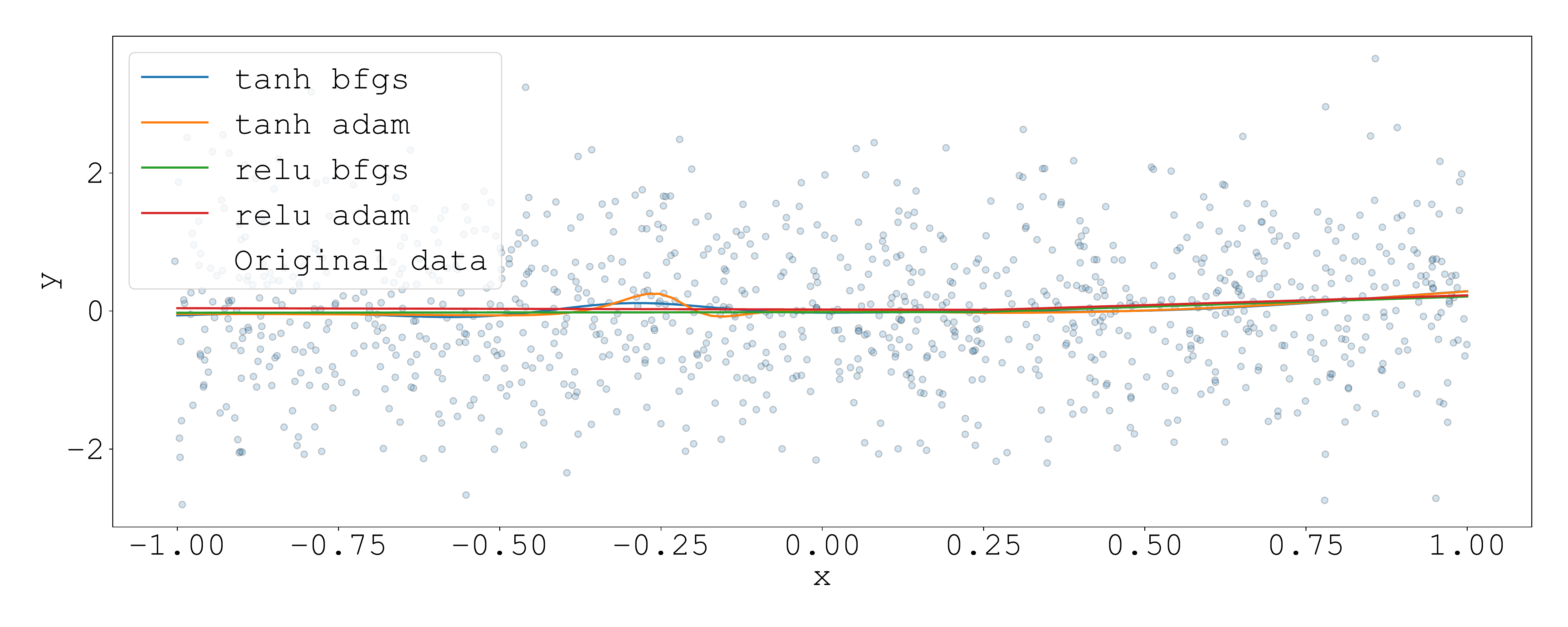}
    \caption{dataset}
    \end{subfigure}
    \end{center}
    \begin{subfigure}[c]{0.33\textwidth}
    \includegraphics[width=\linewidth, trim=1cm 0cm 1cm 1cm, clip]{./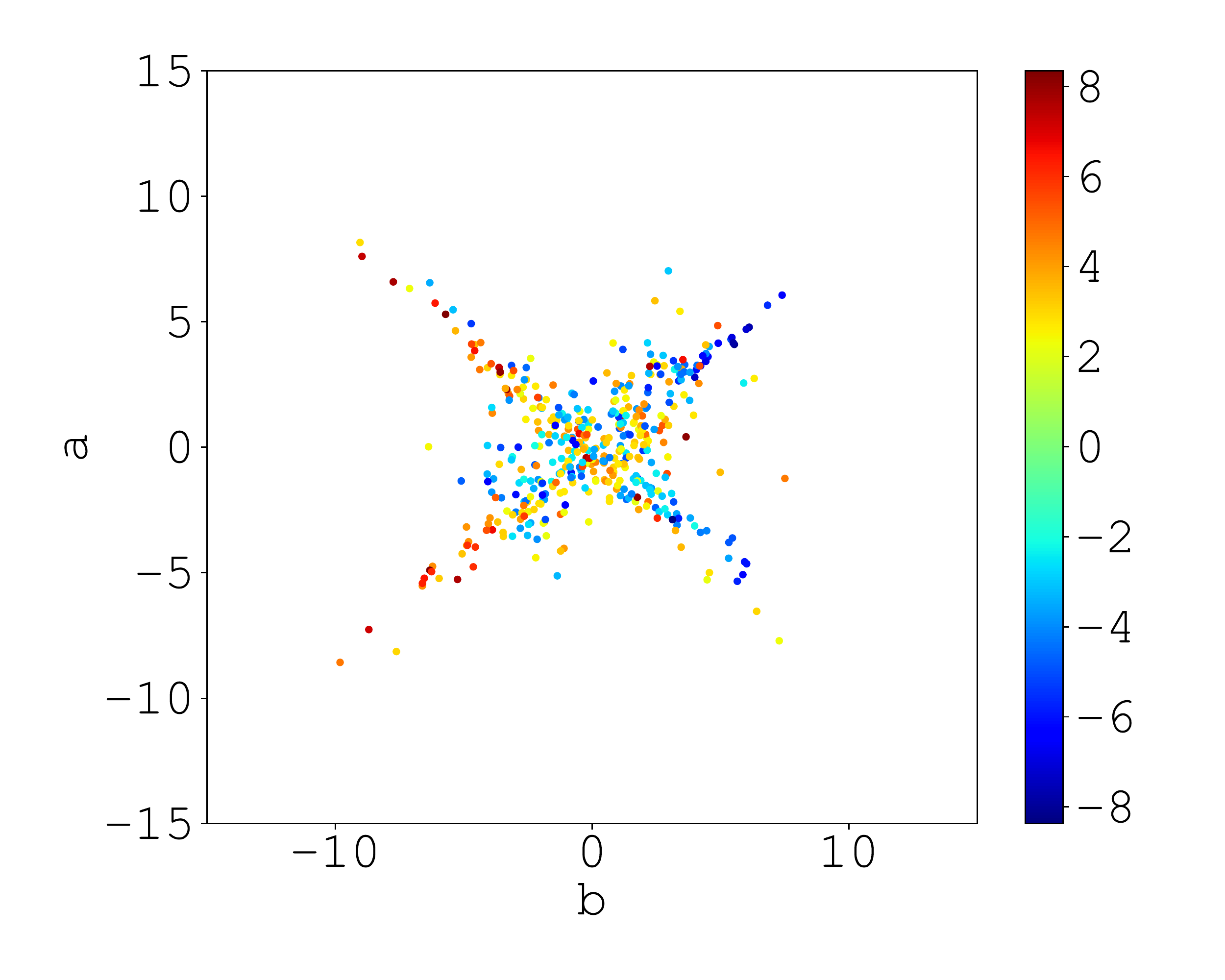}
    \caption{tanh, bfgs}
    \end{subfigure}%
    \begin{subfigure}[c]{0.33\textwidth}
    \includegraphics[width=\linewidth, trim=1cm 0cm 1cm 1cm, clip]{./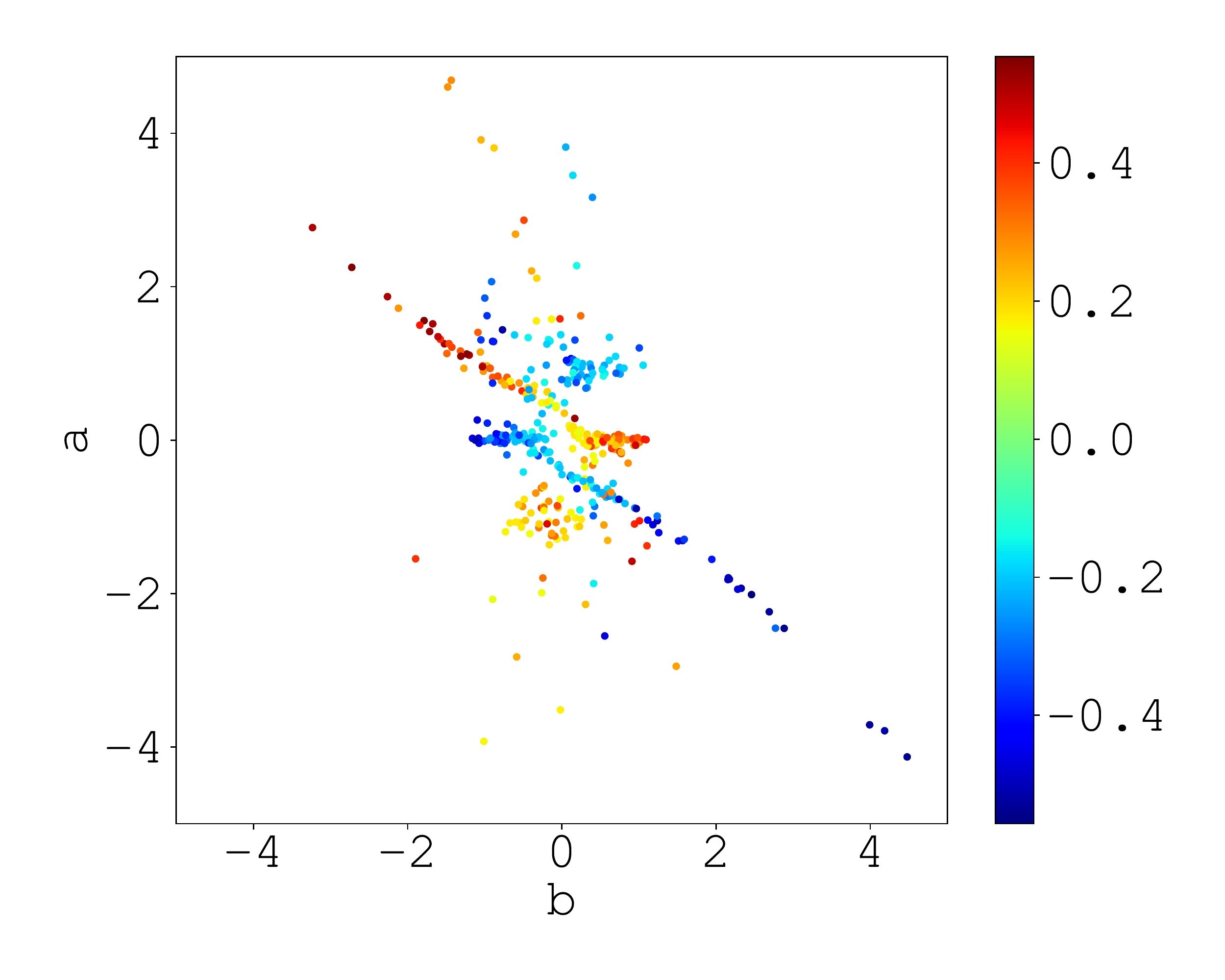}
    \caption{tanh, adam}
    \end{subfigure}%
    \begin{subfigure}[c]{0.33\textwidth}
    \includegraphics[width=\linewidth, trim=1cm 0cm 1cm 1cm, clip]{./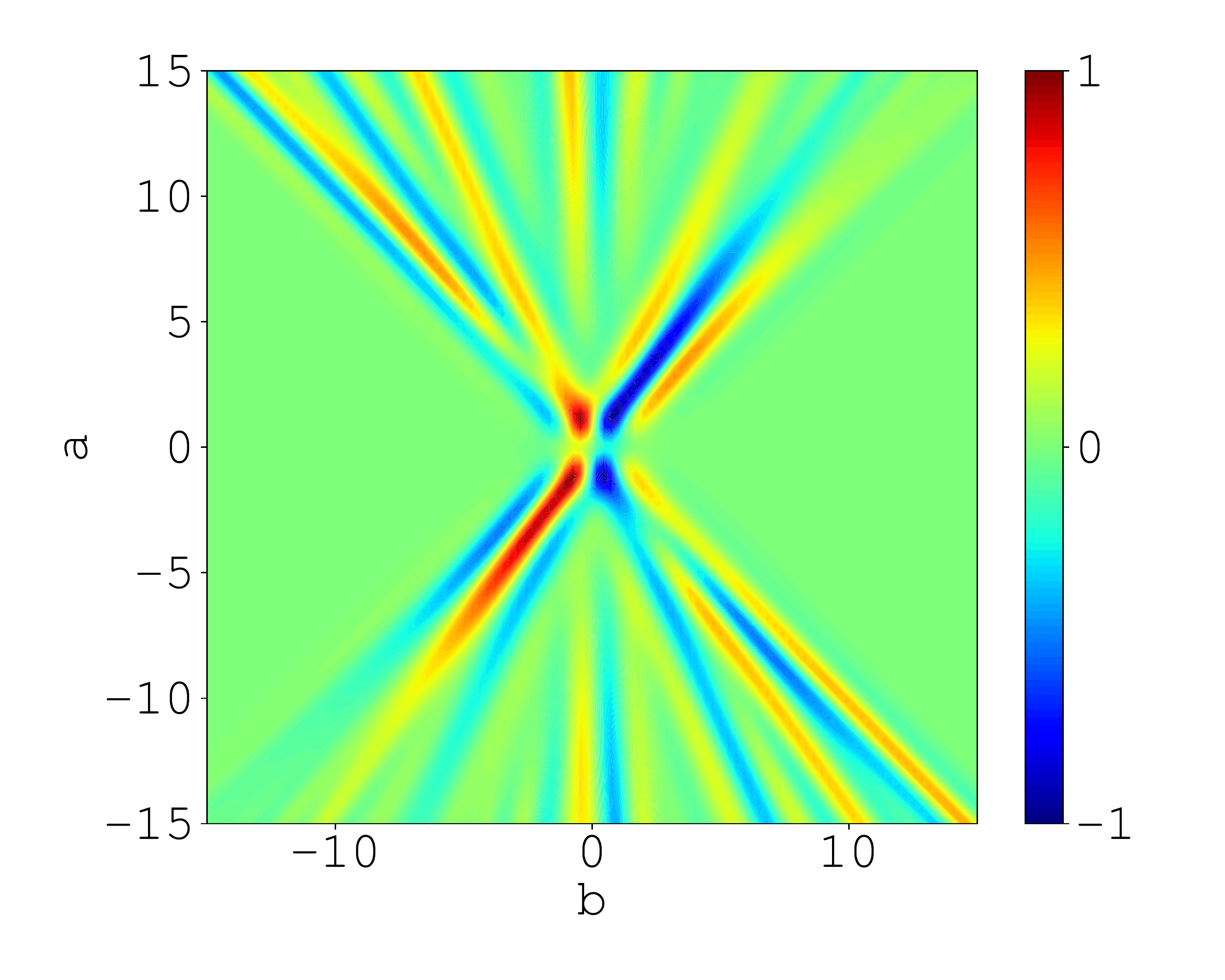}
    \caption{tanh}
    \end{subfigure}\\
    \begin{subfigure}[c]{0.33\textwidth}
    \includegraphics[width=\linewidth, trim=1cm 0cm 1cm 1cm, clip]{./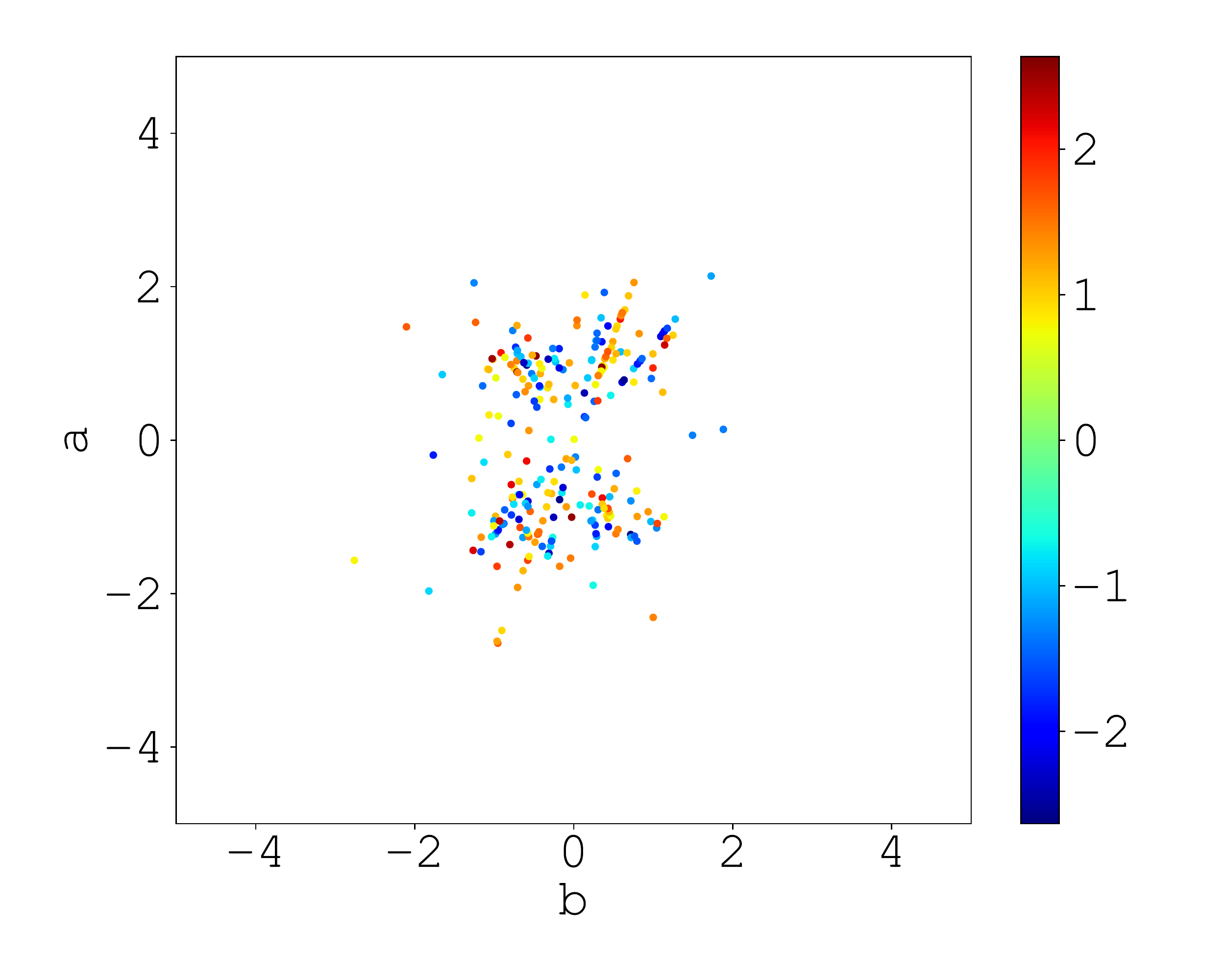}
    \caption{relu, bfgs}
    \end{subfigure}%
    \begin{subfigure}[c]{0.33\textwidth}
    \includegraphics[width=\linewidth, trim=1cm 0cm 1cm 1cm, clip]{./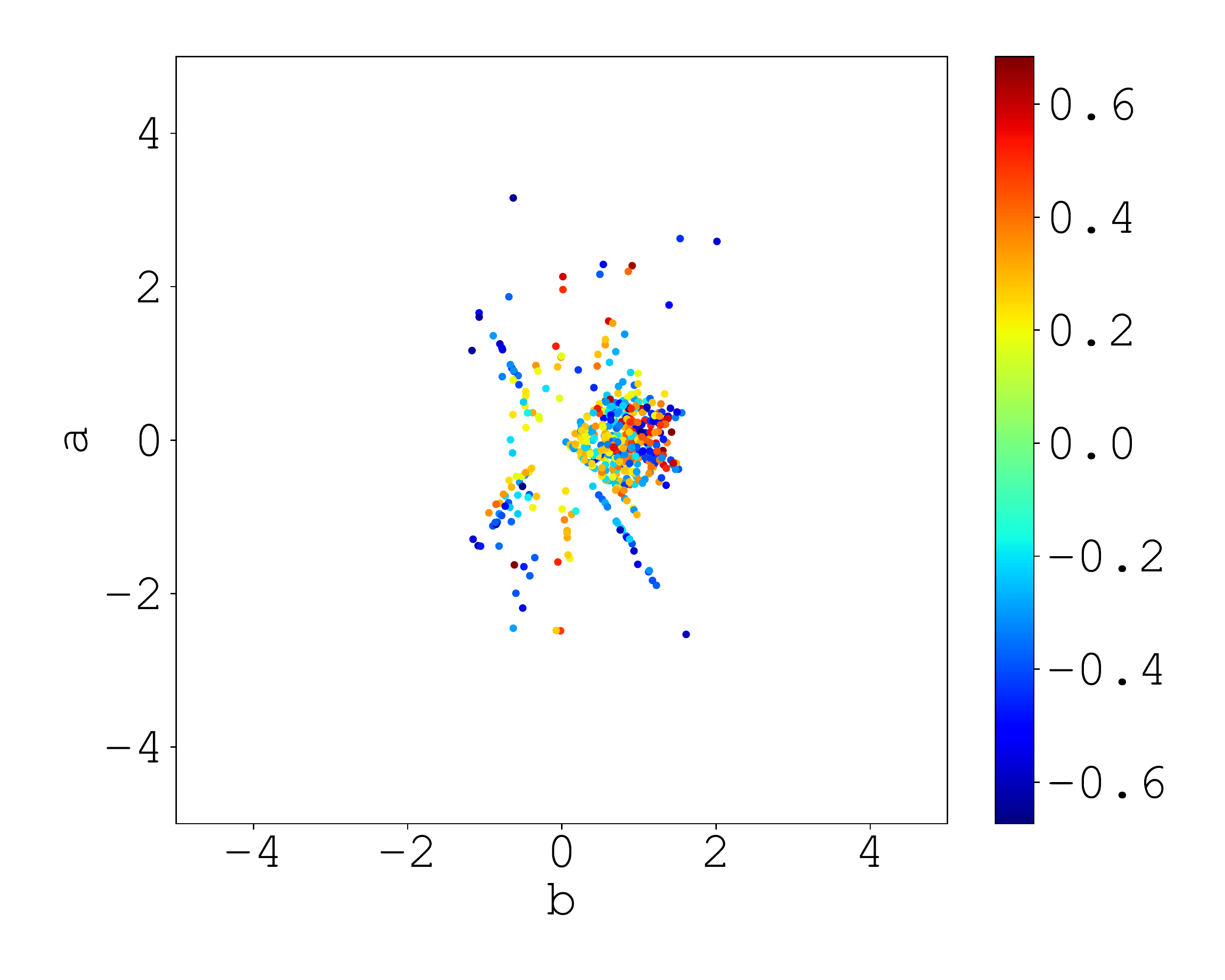}
    \caption{relu, adam}
    \end{subfigure}%
    \begin{subfigure}[c]{0.33\textwidth}
    \includegraphics[width=\linewidth, trim=1cm 0cm 1cm 1cm, clip]{./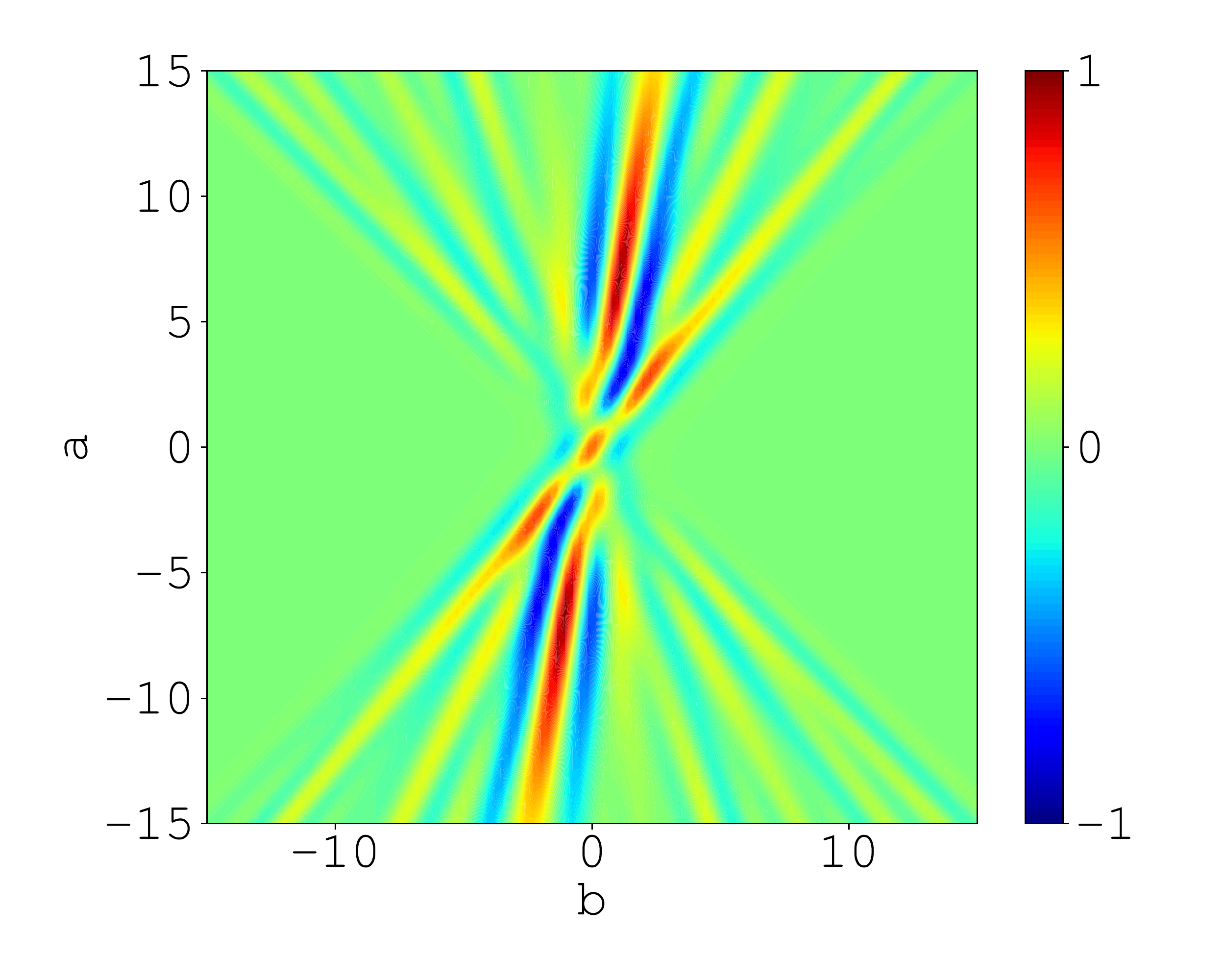}
    \caption{relu}
    \end{subfigure}\\
\caption{Gaussian Noise $N(0, 1.0^2)$}
\label{fig:gauss.m00.sd01.n0000}
\end{figure}

\begin{figure}[h]
    \begin{center}
    \begin{subfigure}[c]{0.66\textwidth}
    \includegraphics[width=\linewidth, trim=0cm 0cm 0cm 0cm, clip]{./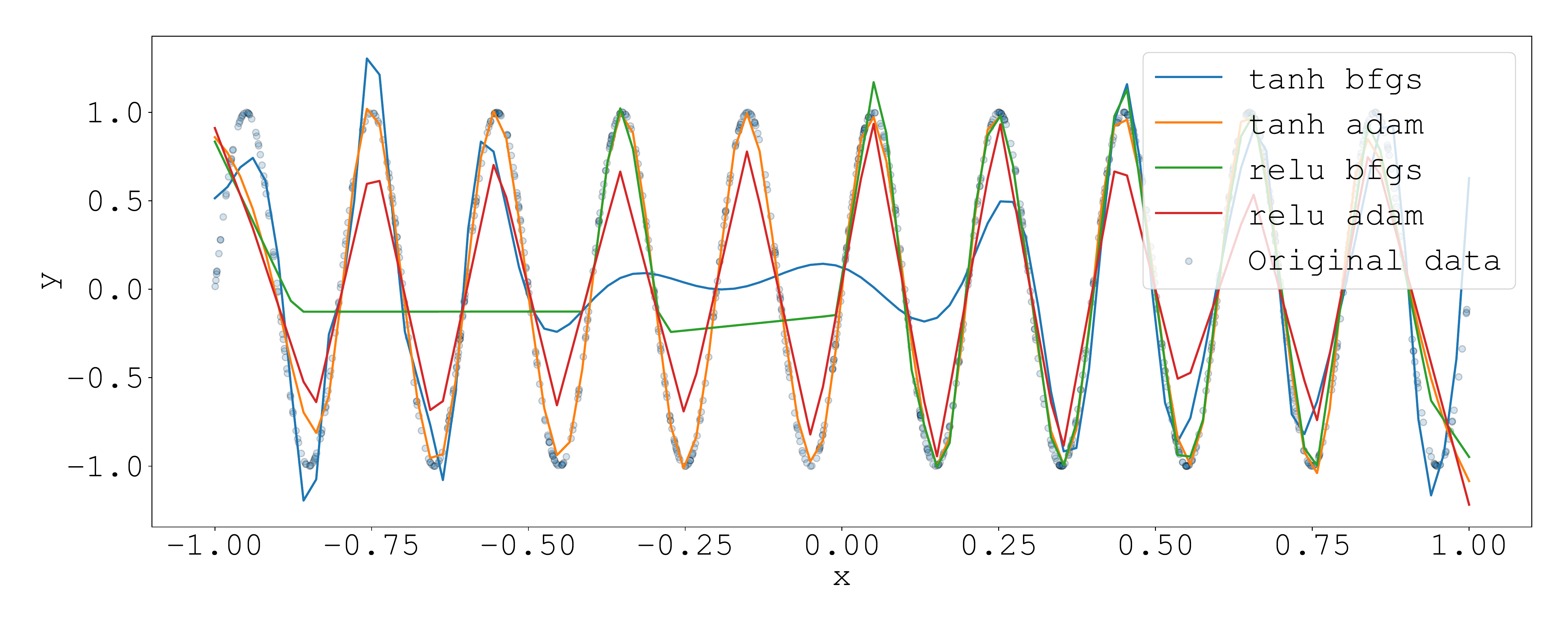}
    \caption{dataset}
    \end{subfigure}
    \end{center}
    \begin{subfigure}[c]{0.33\textwidth}
    \includegraphics[width=\linewidth, trim=1cm 0cm 1cm 1cm, clip]{./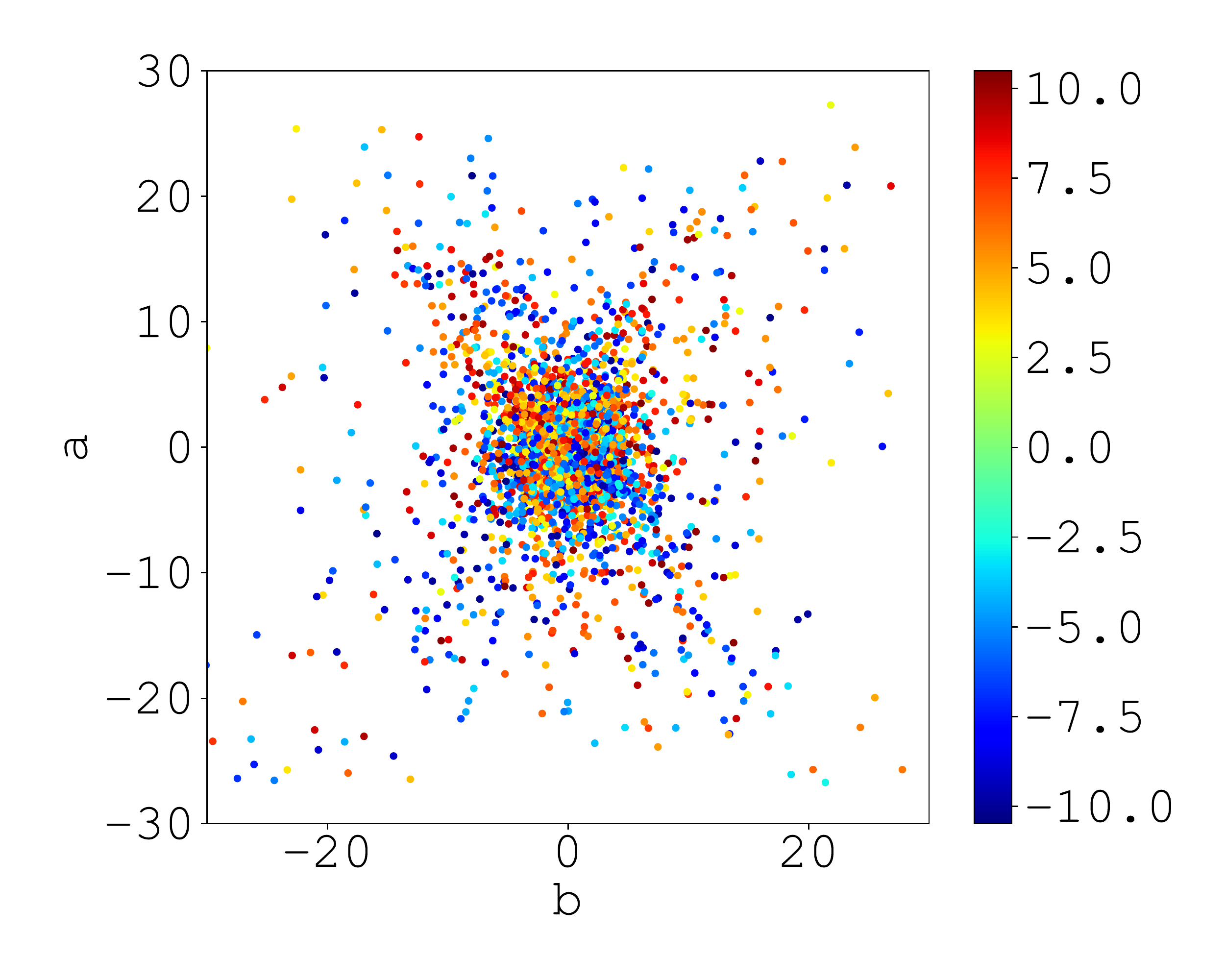}
    \caption{tanh, bfgs}
    \end{subfigure}%
    \begin{subfigure}[c]{0.33\textwidth}
    \includegraphics[width=\linewidth, trim=1cm 0cm 1cm 1cm, clip]{./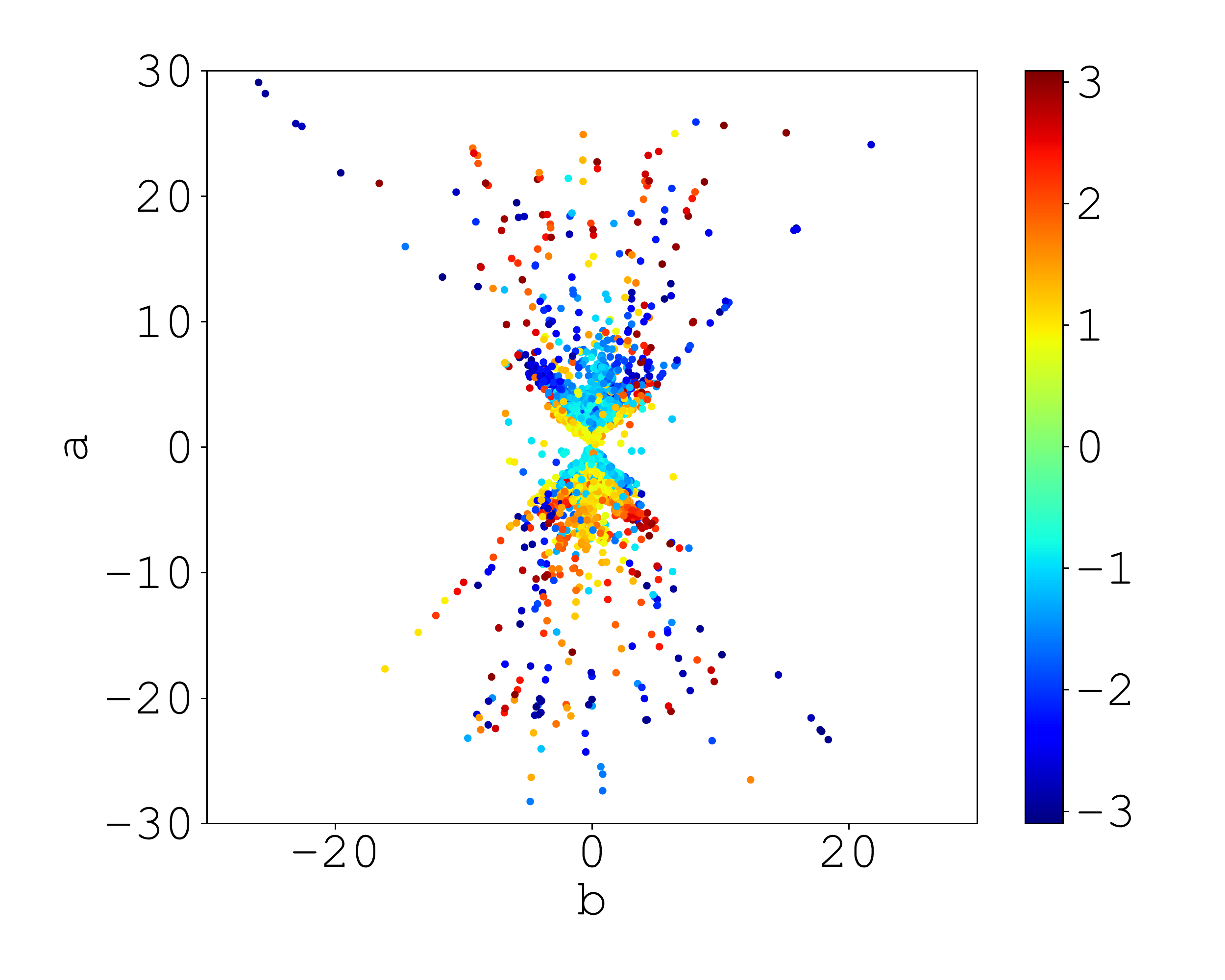}
    \caption{tanh, adam}
    \end{subfigure}%
    \begin{subfigure}[c]{0.33\textwidth}
    \includegraphics[width=\linewidth, trim=1cm 0cm 1cm 1cm, clip]{./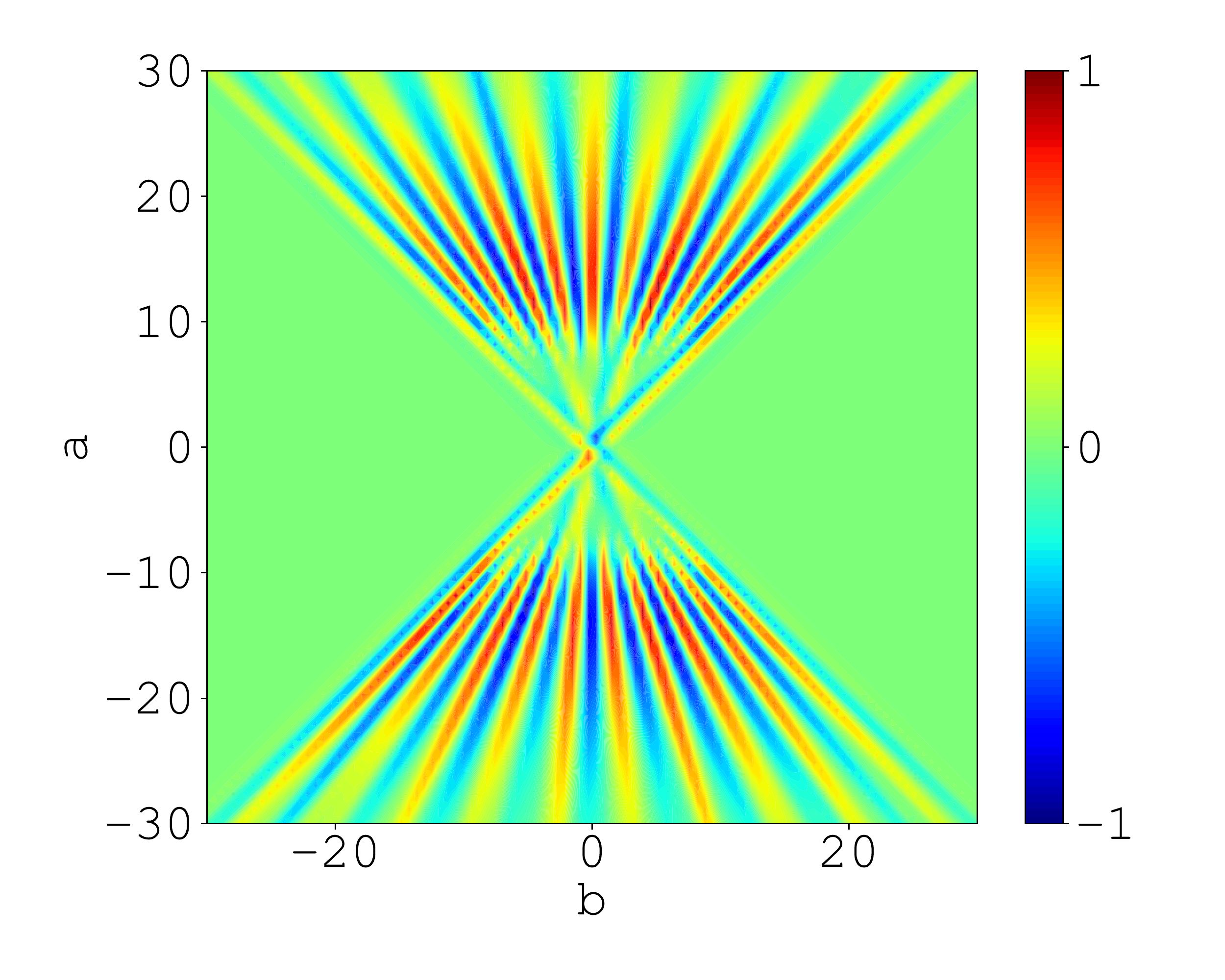}
    \caption{tanh}
    \end{subfigure}\\
    \begin{subfigure}[c]{0.33\textwidth}
    \includegraphics[width=\linewidth, trim=1cm 0cm 1cm 1cm, clip]{./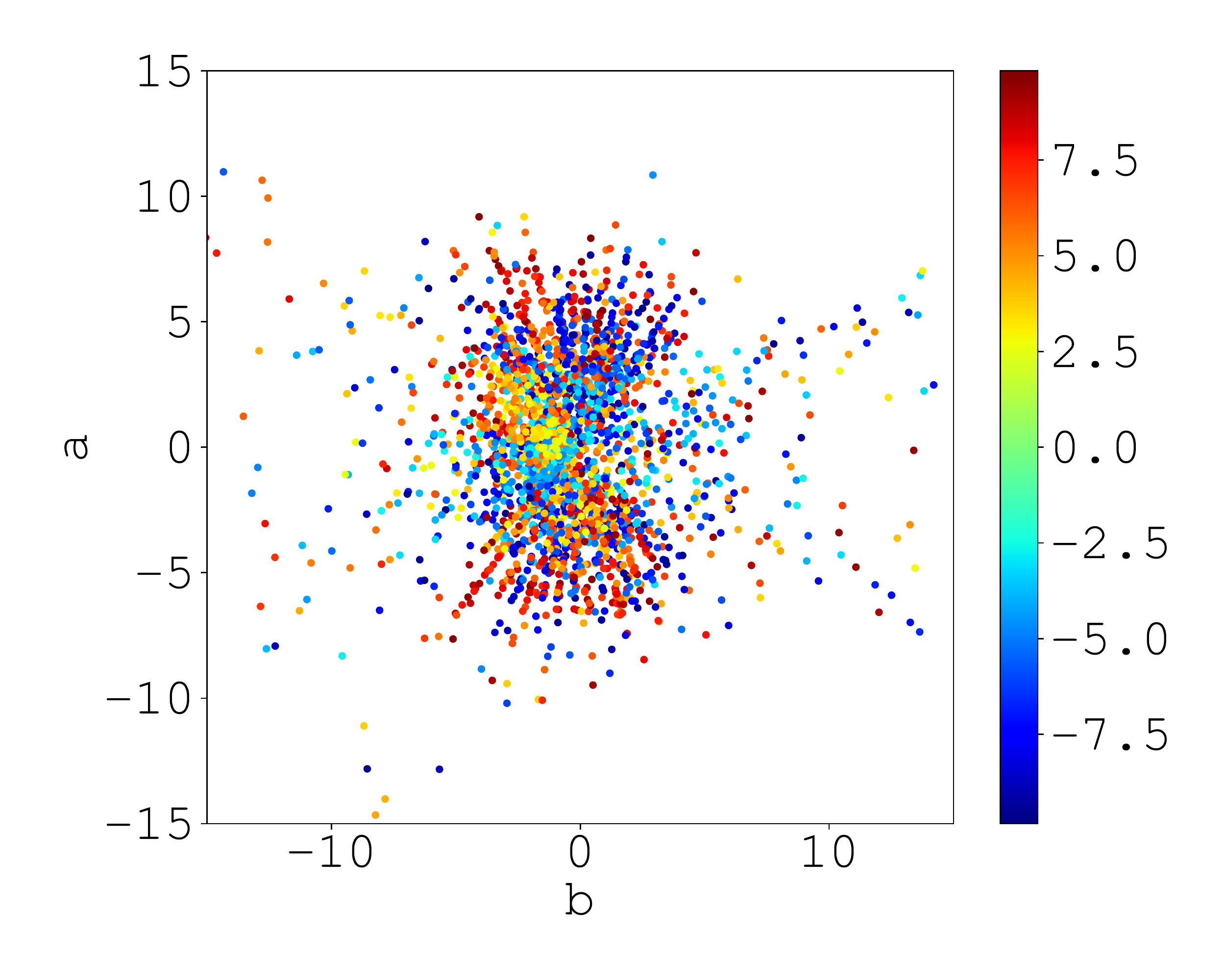}
    \caption{relu, bfgs}
    \end{subfigure}%
    \begin{subfigure}[c]{0.33\textwidth}
    \includegraphics[width=\linewidth, trim=1cm 0cm 1cm 1cm, clip]{./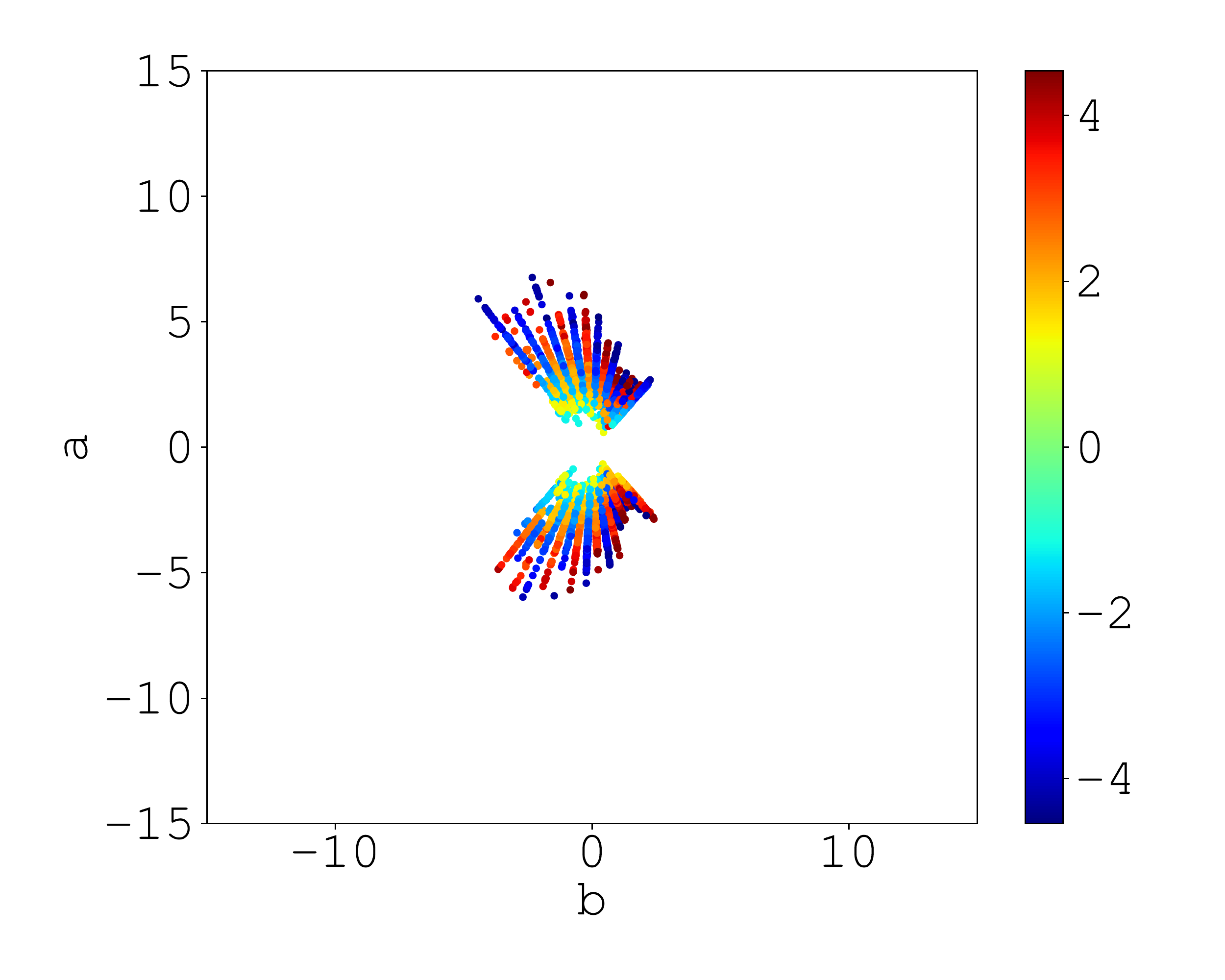}
    \caption{relu, adam}
    \end{subfigure}%
    \begin{subfigure}[c]{0.33\textwidth}
    \includegraphics[width=\linewidth, trim=1cm 0cm 1cm 1cm, clip]{./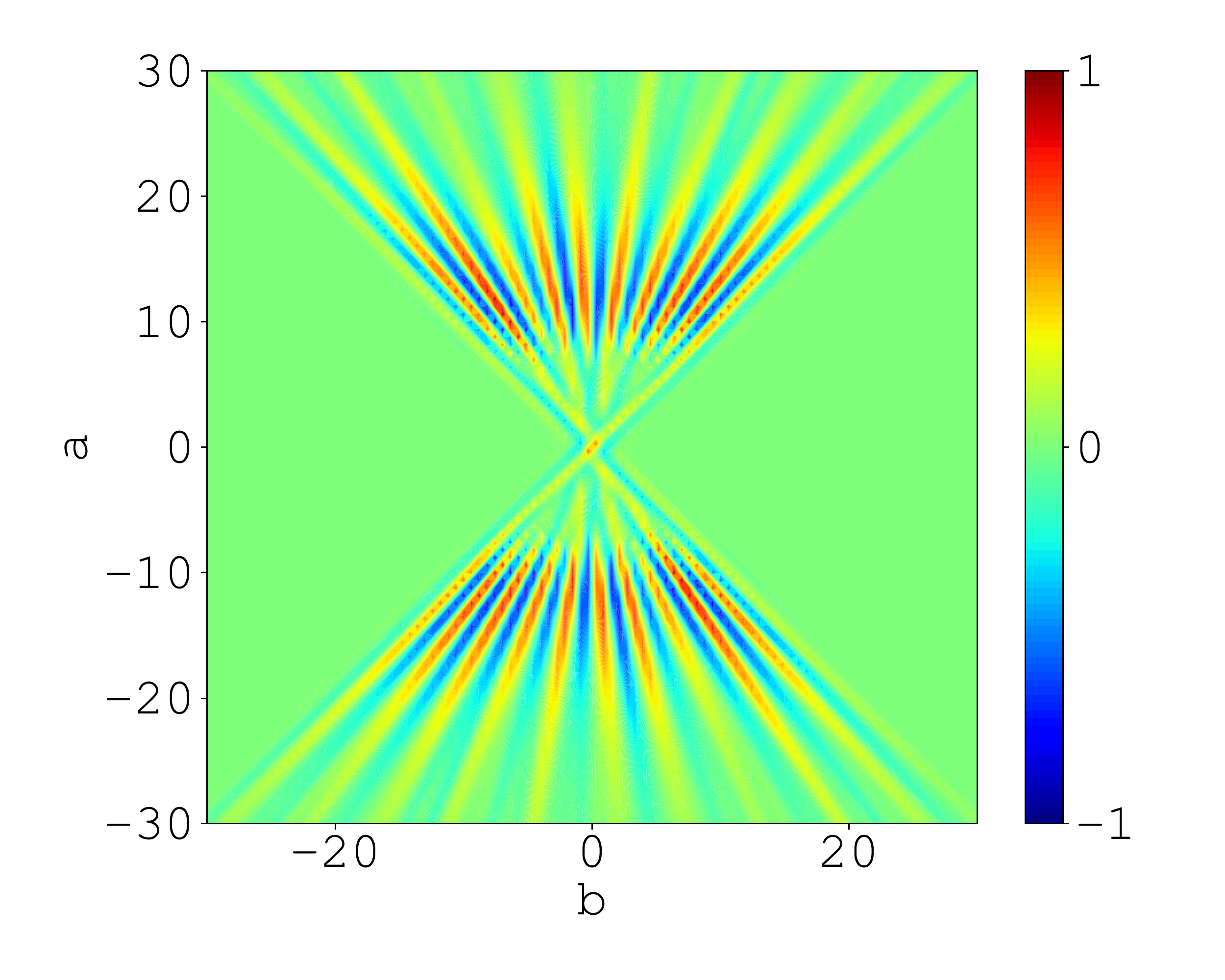}
    \caption{relu}
    \end{subfigure}\\
\caption{High Frequency Sinusoidal Curve}
\label{fig:sin10pt.n0000}
\end{figure}

\begin{figure}[h]
    \begin{center}
    \begin{subfigure}[c]{0.66\textwidth}
    \includegraphics[width=\linewidth, trim=0cm 0cm 0cm 0cm, clip]{./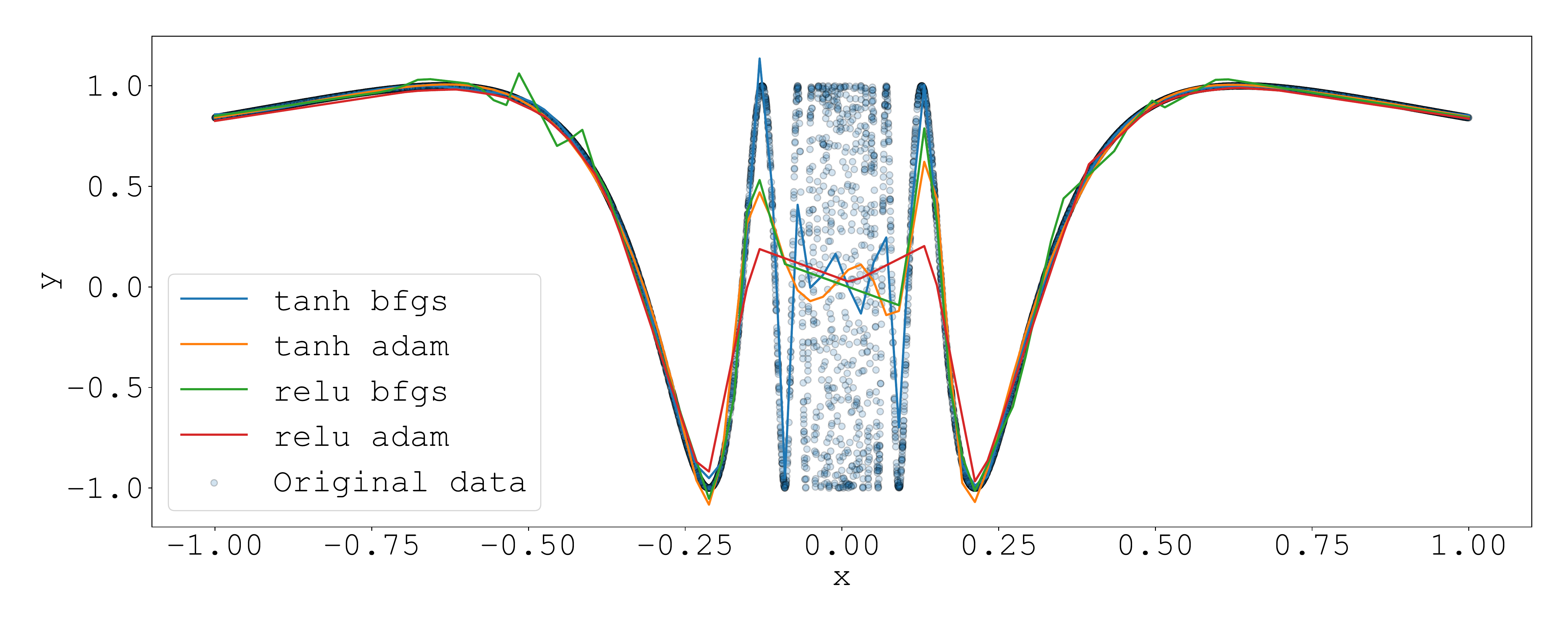}
    \caption{dataset}
    \end{subfigure}
    \end{center}
    \begin{subfigure}[c]{0.33\textwidth}
    \includegraphics[width=\linewidth, trim=1cm 0cm 1cm 1cm, clip]{./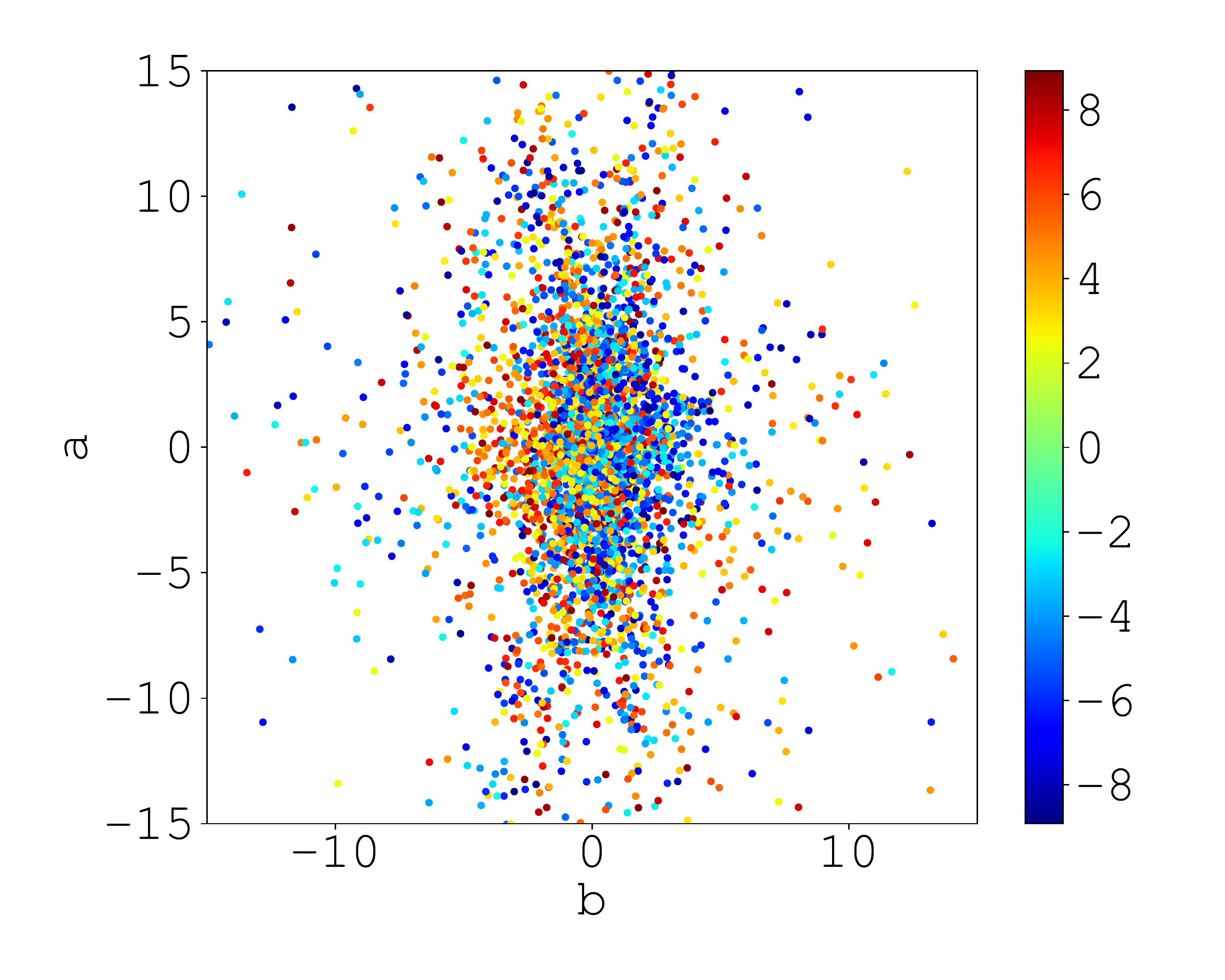}
    \caption{tanh, bfgs}
    \end{subfigure}%
    \begin{subfigure}[c]{0.33\textwidth}
    \includegraphics[width=\linewidth, trim=1cm 0cm 1cm 1cm, clip]{./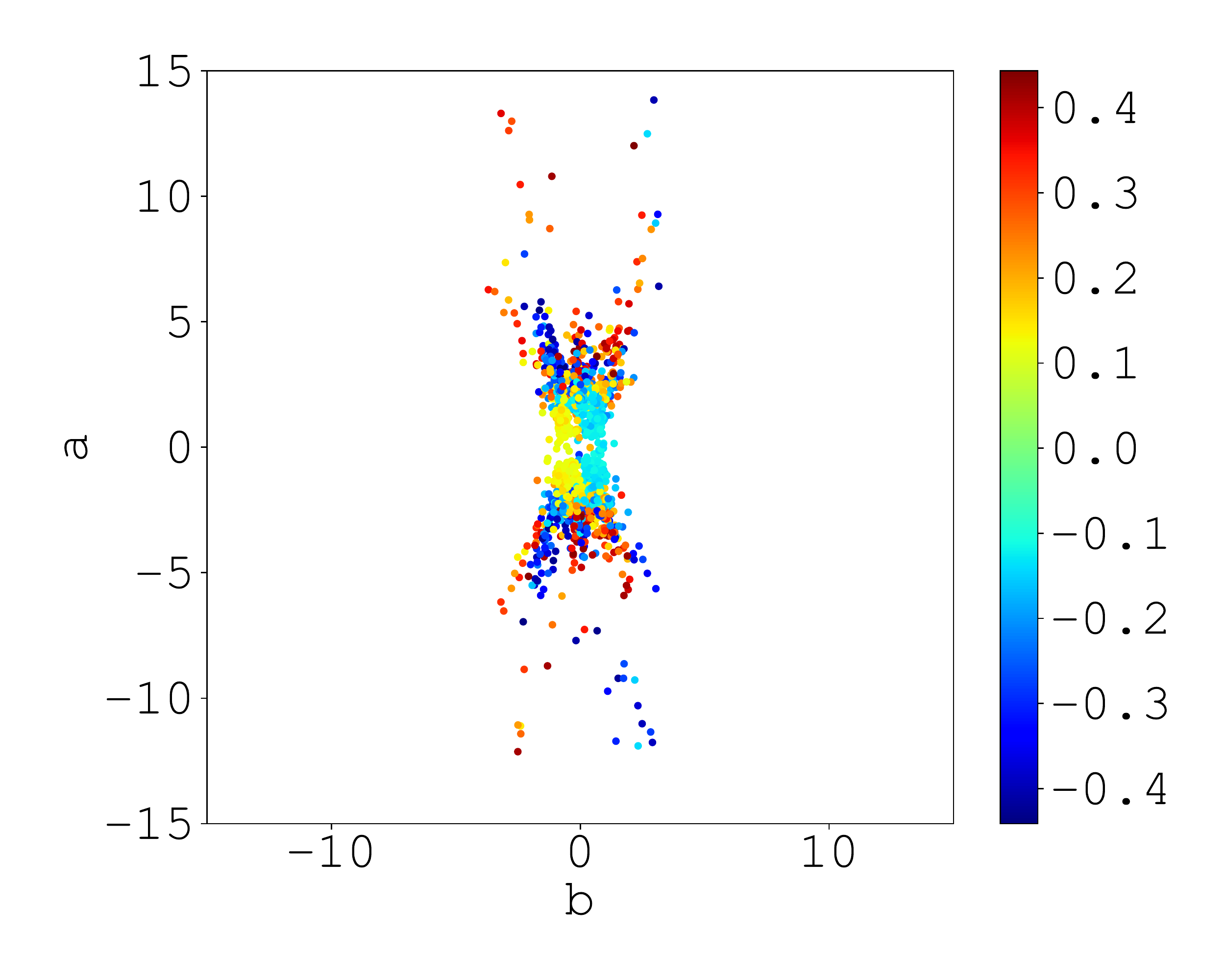}
    \caption{tanh, adam}
    \end{subfigure}%
    \begin{subfigure}[c]{0.33\textwidth}
    \includegraphics[width=\linewidth, trim=1cm 0cm 1cm 1cm, clip]{./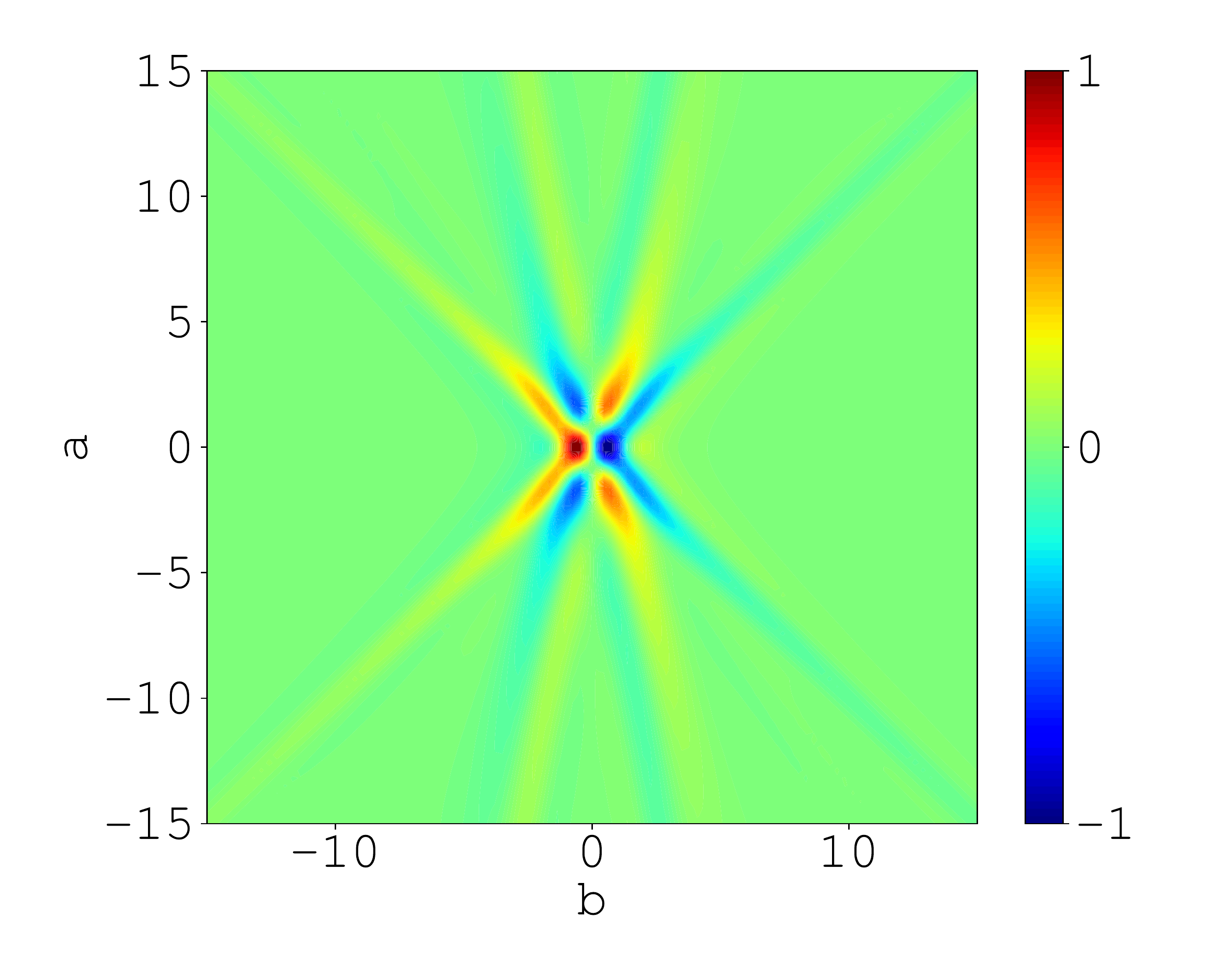}
    \caption{tanh}
    \end{subfigure}\\
    \begin{subfigure}[c]{0.33\textwidth}
    \includegraphics[width=\linewidth, trim=1cm 0cm 1cm 1cm, clip]{./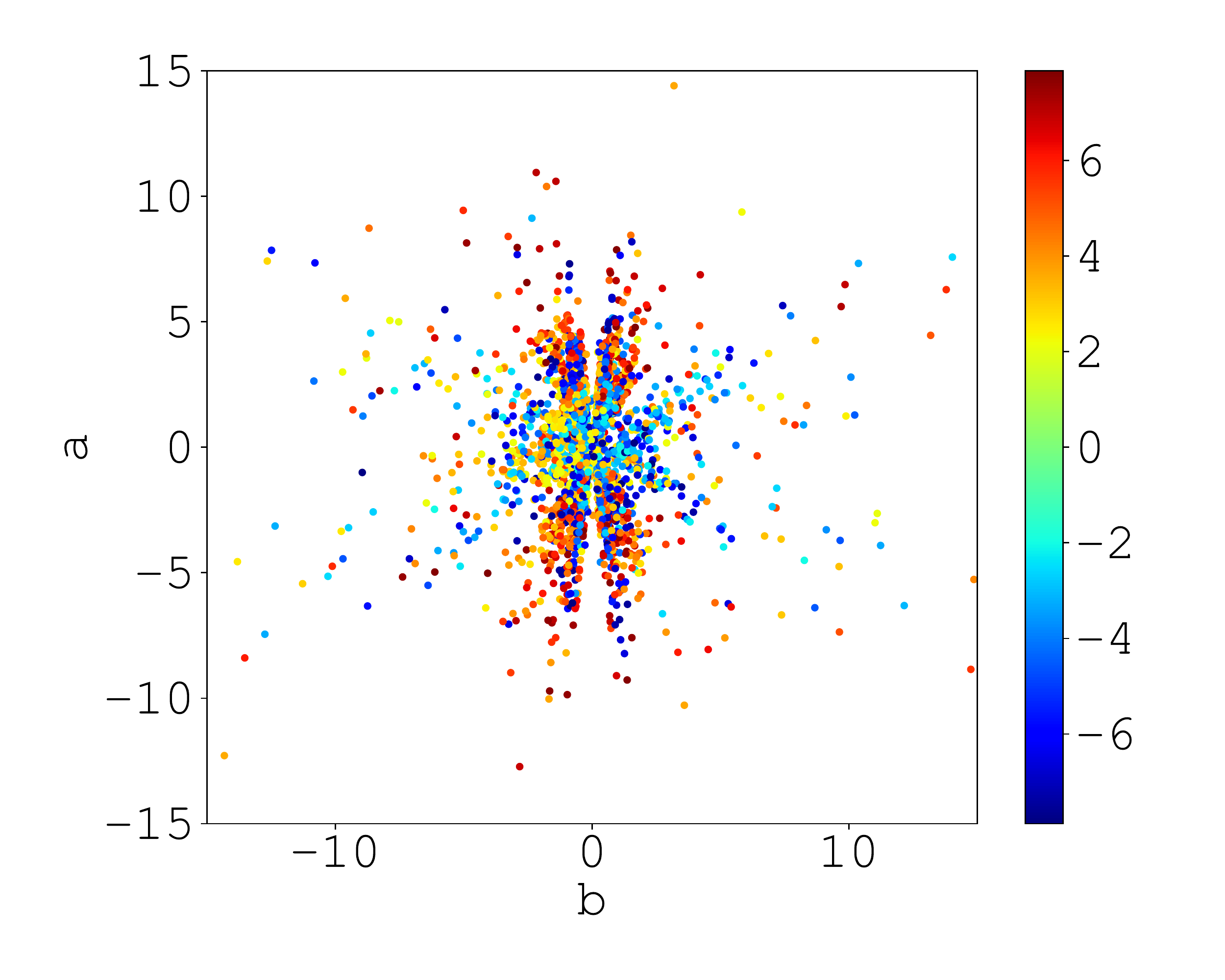}
    \caption{relu, bfgs}
    \end{subfigure}%
    \begin{subfigure}[c]{0.33\textwidth}
    \includegraphics[width=\linewidth, trim=1cm 0cm 1cm 1cm, clip]{./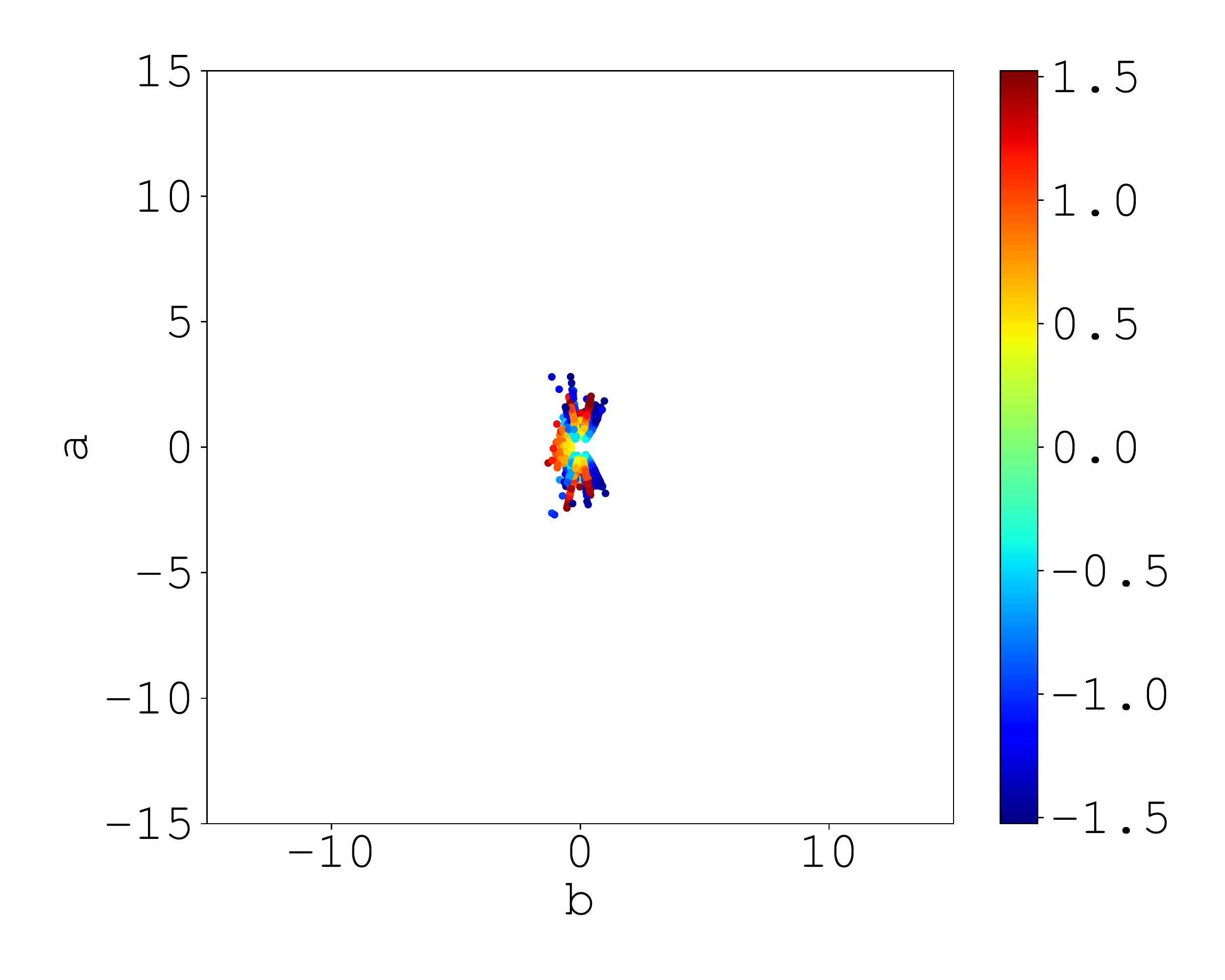}
    \caption{relu, adam}
    \end{subfigure}%
    \begin{subfigure}[c]{0.33\textwidth}
    \includegraphics[width=\linewidth, trim=1cm 0cm 1cm 1cm, clip]{./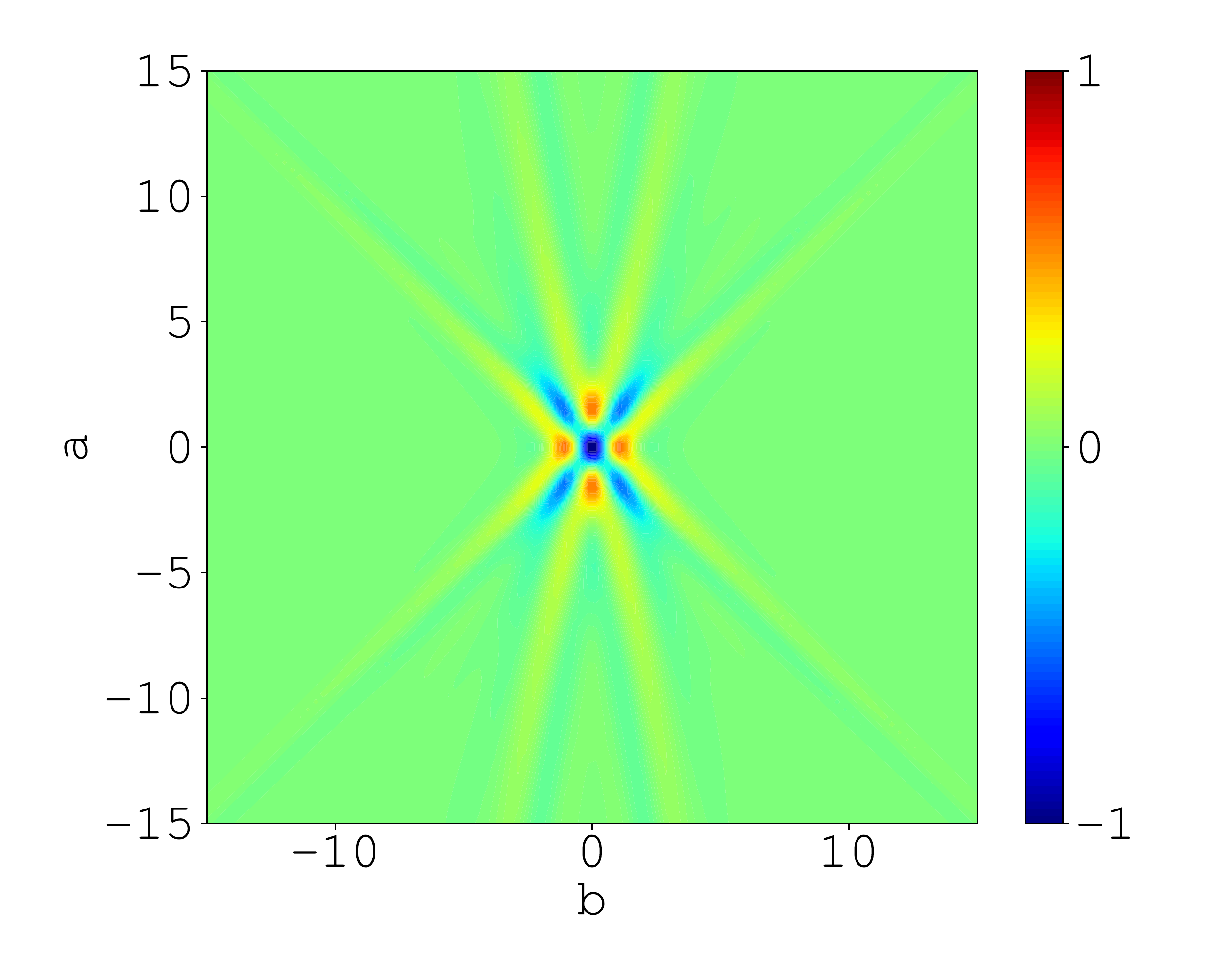}
    \caption{relu}
    \end{subfigure}\\
\caption{Topologist's Sinusoidal Curve}
\label{fig:topsin.n0000}
\end{figure}

\begin{figure}[h]
    \begin{center}
    \begin{subfigure}[c]{0.66\textwidth}
    \includegraphics[width=\linewidth, trim=0cm 0cm 0cm 0cm, clip]{./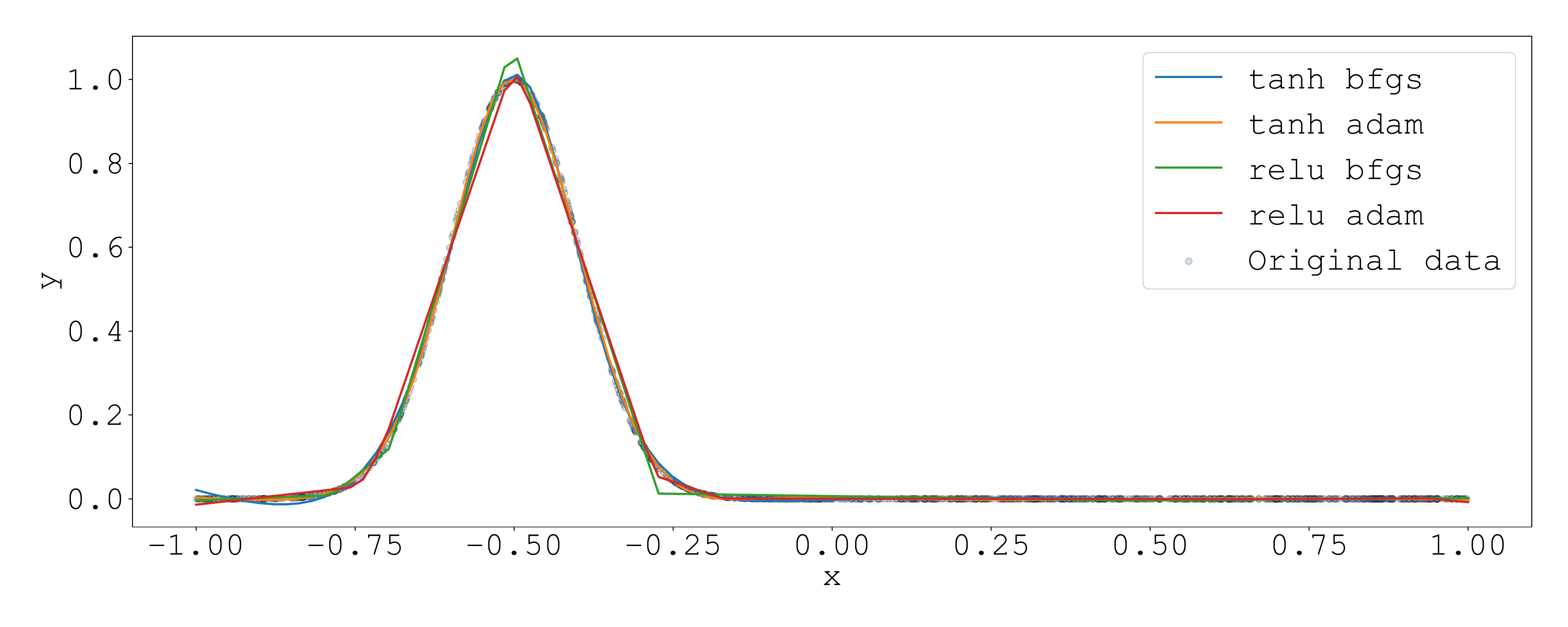}
    \caption{dataset}
    \end{subfigure}
    \end{center}
    \begin{subfigure}[c]{0.33\textwidth}
    \includegraphics[width=\linewidth, trim=1cm 0cm 1cm 1cm, clip]{./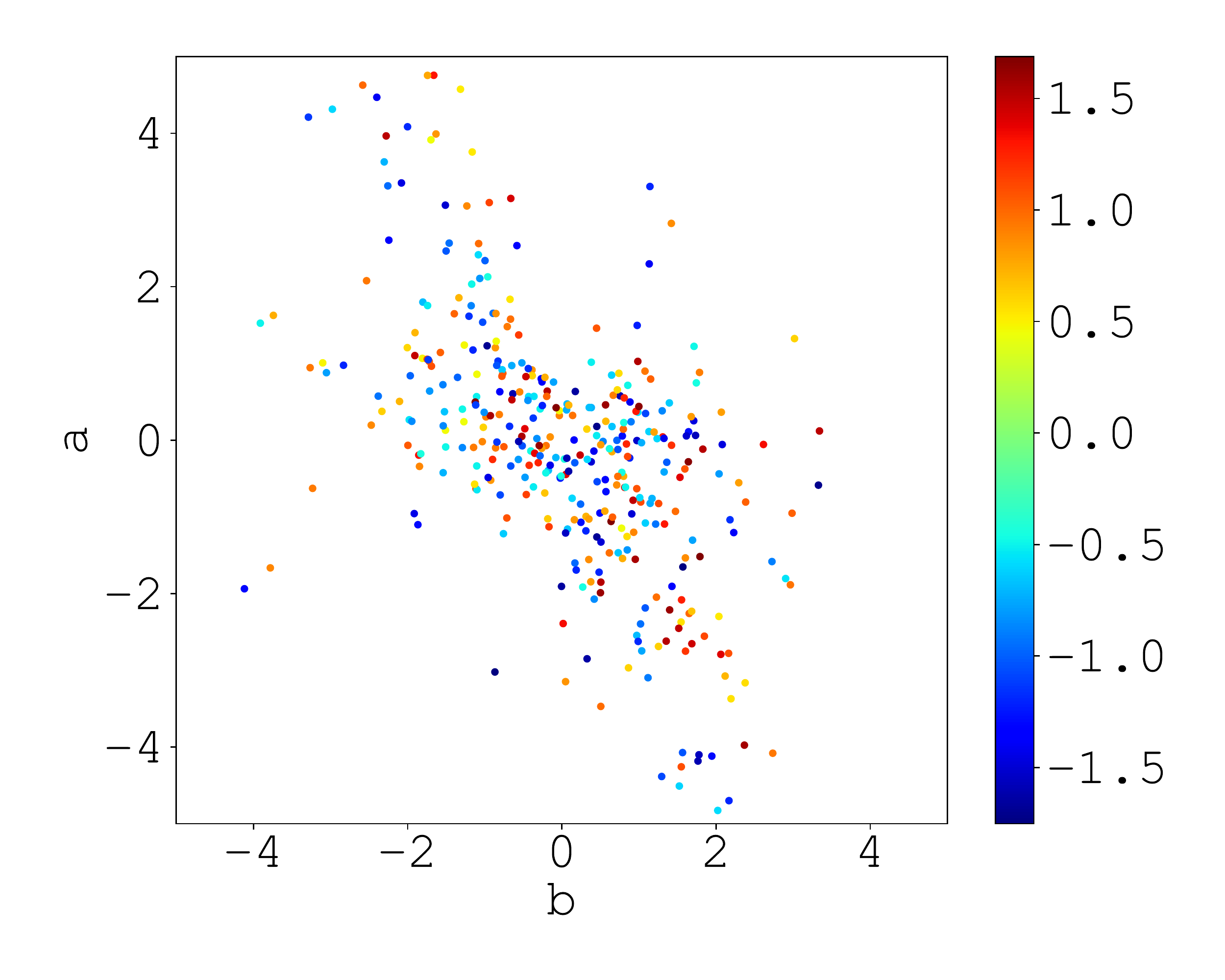}
    \caption{tanh, bfgs}
    \end{subfigure}%
    \begin{subfigure}[c]{0.33\textwidth}
    \includegraphics[width=\linewidth, trim=1cm 0cm 1cm 1cm, clip]{./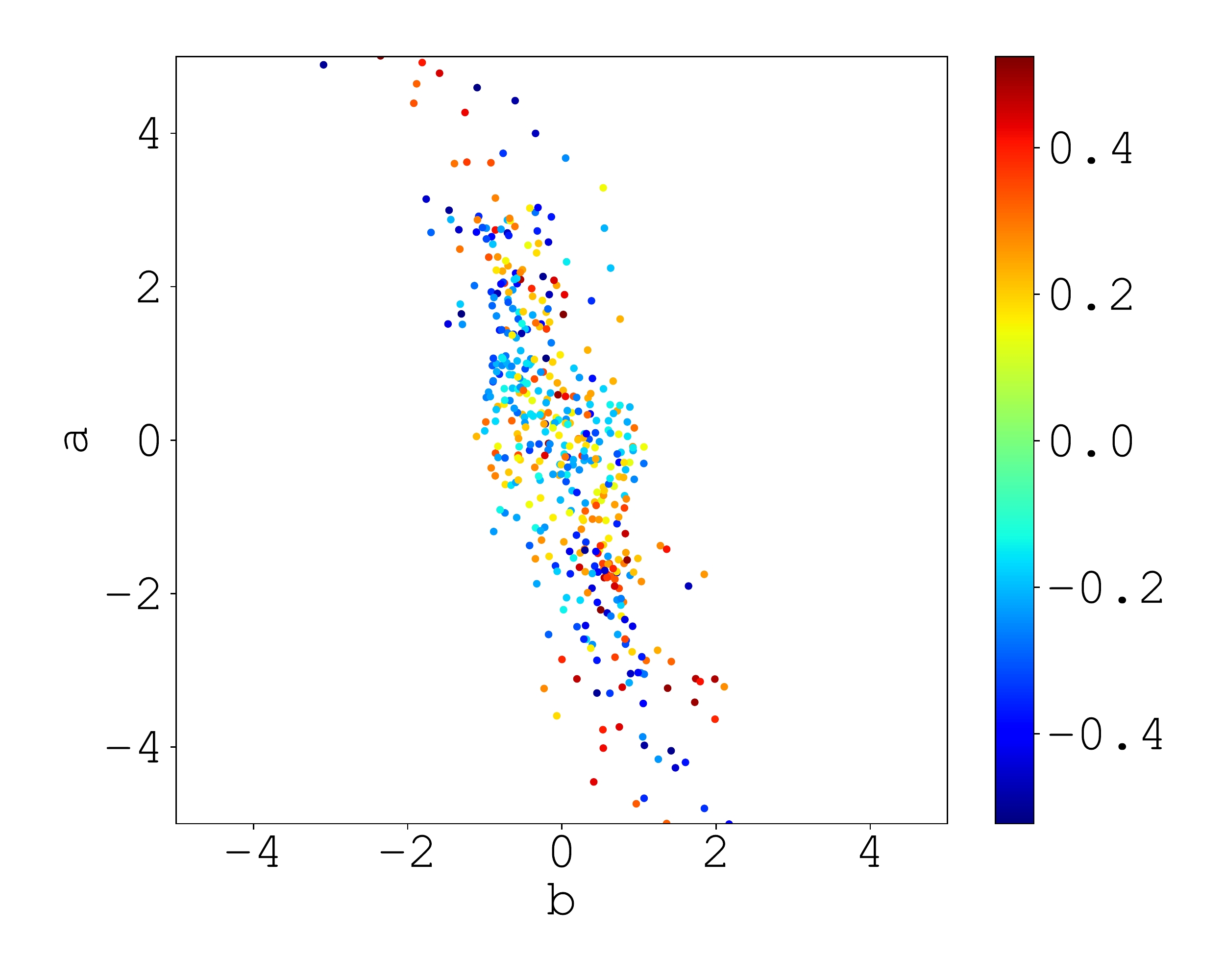}
    \caption{tanh, adam}
    \end{subfigure}%
    \begin{subfigure}[c]{0.33\textwidth}
    \includegraphics[width=\linewidth, trim=1cm 0cm 1cm 1cm, clip]{./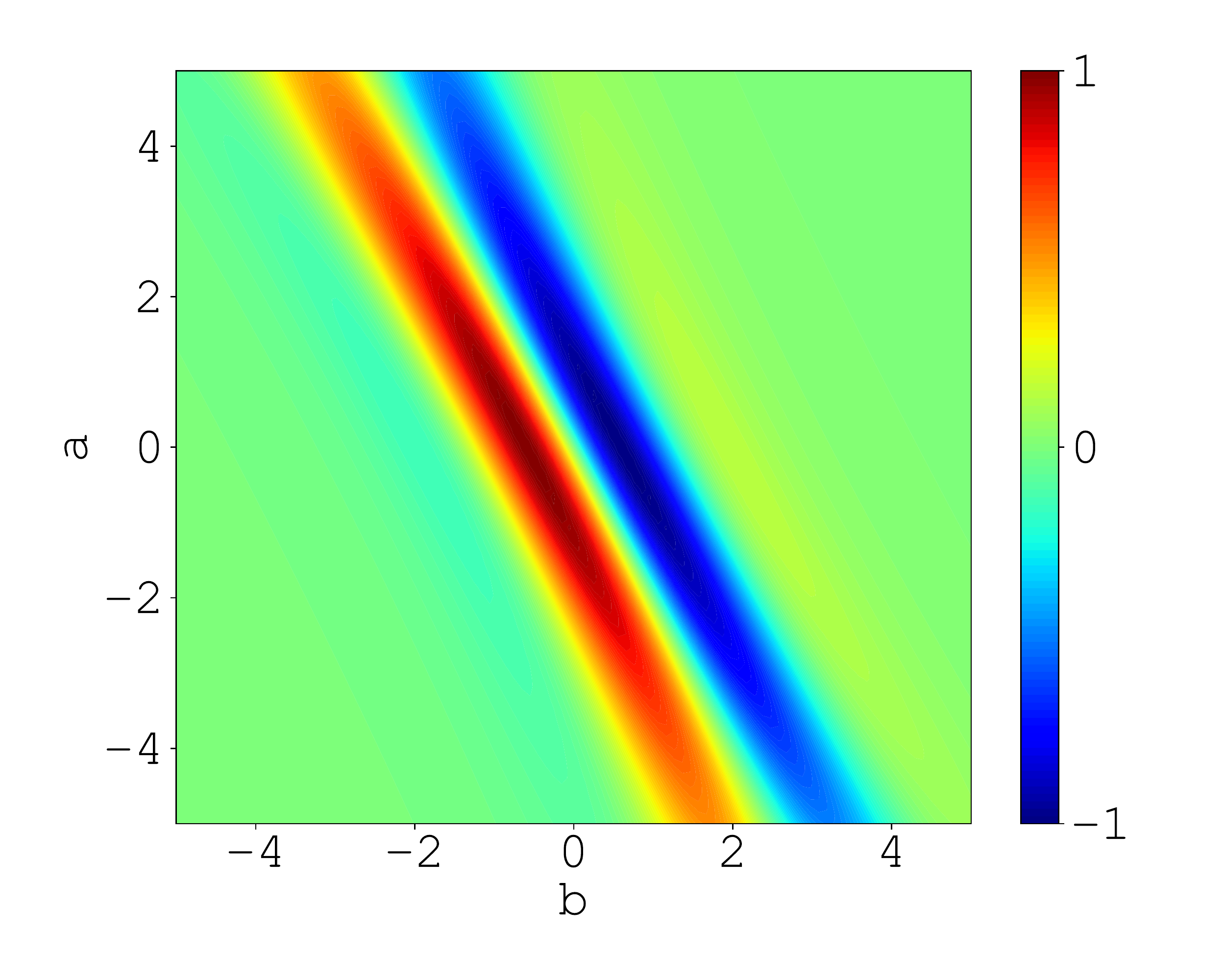}
    \caption{tanh}
    \end{subfigure}\\
    \begin{subfigure}[c]{0.33\textwidth}
    \includegraphics[width=\linewidth, trim=1cm 0cm 1cm 1cm, clip]{./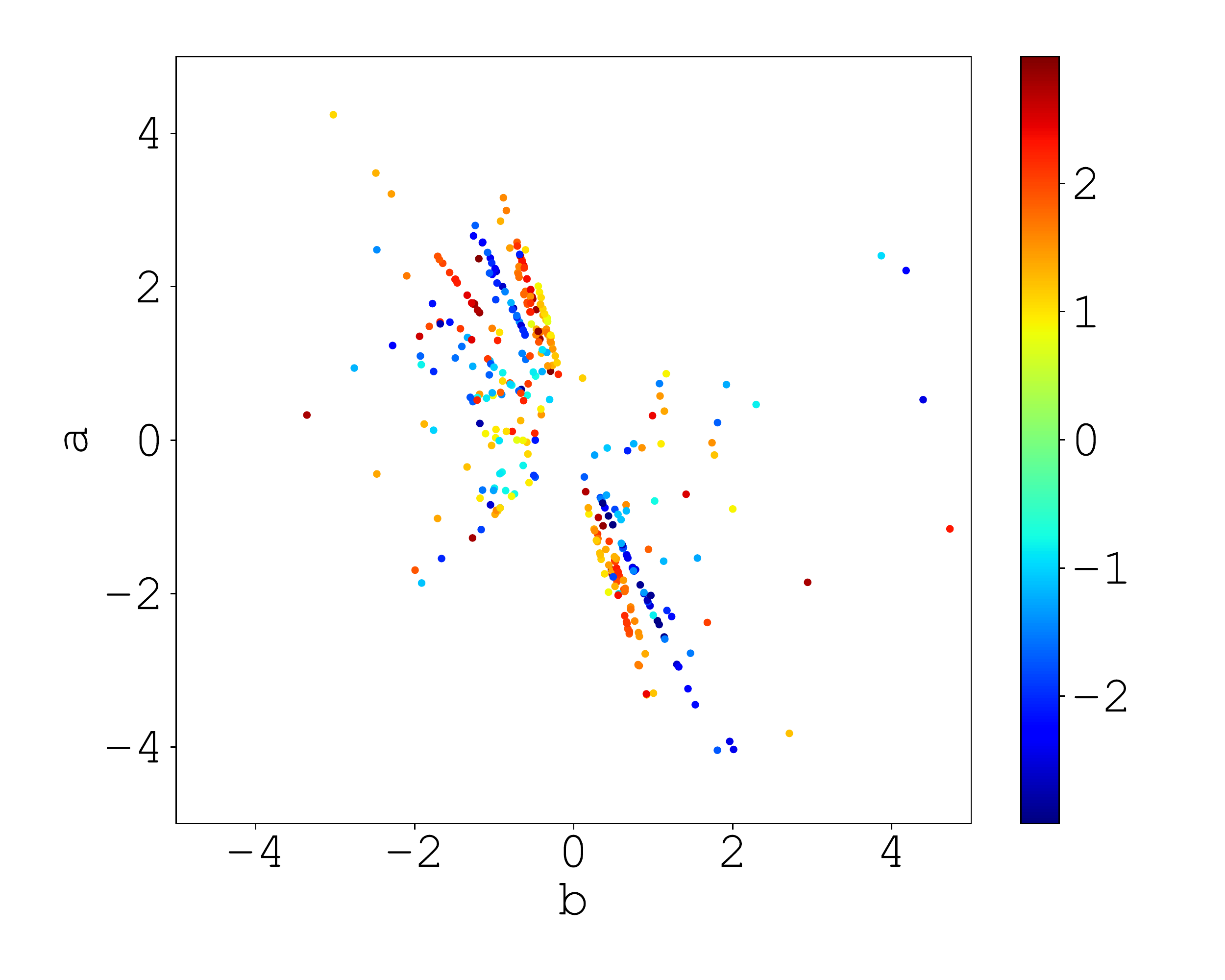}
    \caption{relu, bfgs}
    \end{subfigure}%
    \begin{subfigure}[c]{0.33\textwidth}
    \includegraphics[width=\linewidth, trim=1cm 0cm 1cm 1cm, clip]{./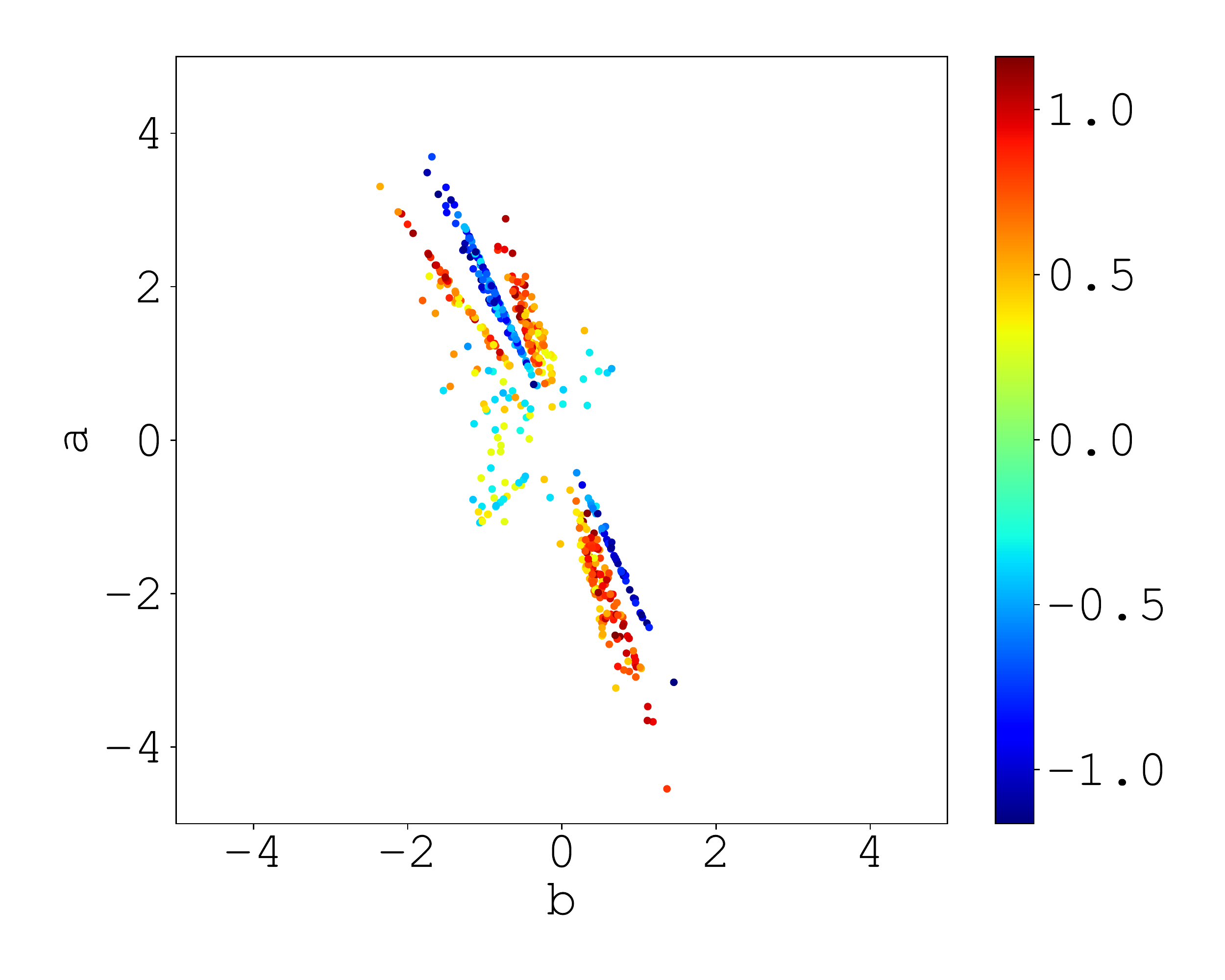}
    \caption{relu, adam}
    \end{subfigure}%
    \begin{subfigure}[c]{0.33\textwidth}
    \includegraphics[width=\linewidth, trim=1cm 0cm 1cm 1cm, clip]{./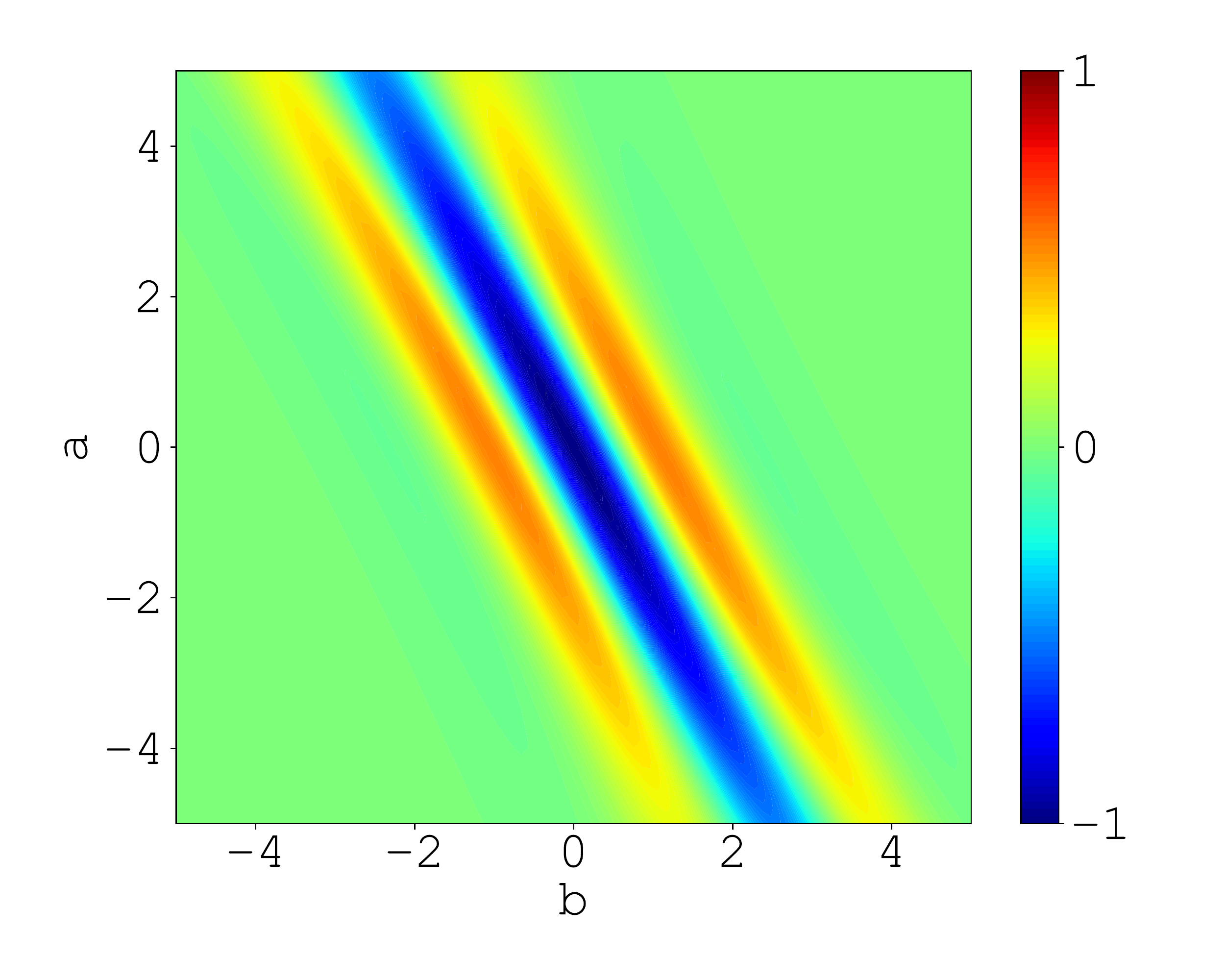}
    \caption{relu}
    \end{subfigure}\\
\caption{Gaussian Kernel $\mu = -0.5$}
\label{fig:rbf.pos-050.n0000}
\end{figure}

\begin{figure}[h]
    \begin{center}
    \begin{subfigure}[c]{0.66\textwidth}
    \includegraphics[width=\linewidth, trim=0cm 0cm 0cm 0cm, clip]{./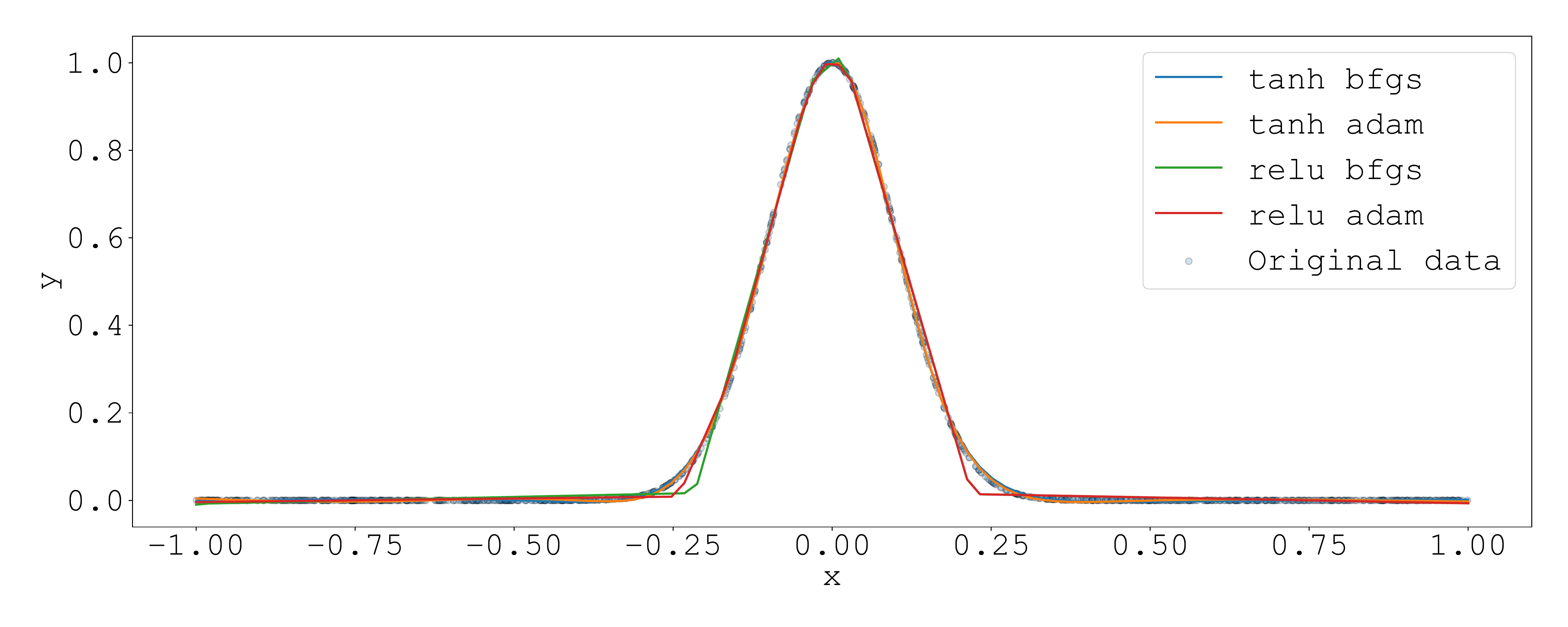}
    \caption{dataset}
    \end{subfigure}
    \end{center}
    \begin{subfigure}[c]{0.33\textwidth}
    \includegraphics[width=\linewidth, trim=1cm 0cm 1cm 1cm, clip]{./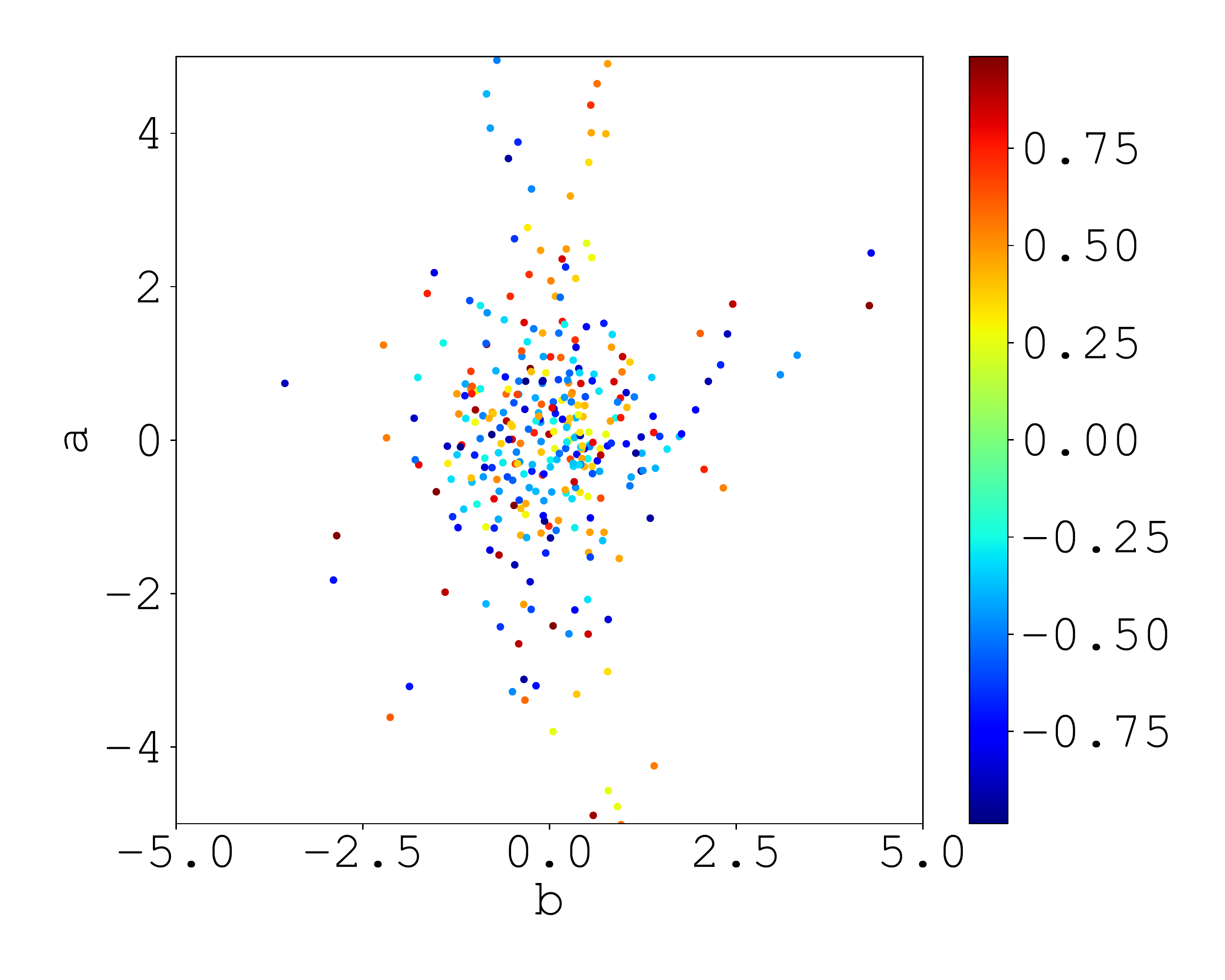}
    \caption{tanh, bfgs}
    \end{subfigure}%
    \begin{subfigure}[c]{0.33\textwidth}
    \includegraphics[width=\linewidth, trim=1cm 0cm 1cm 1cm, clip]{./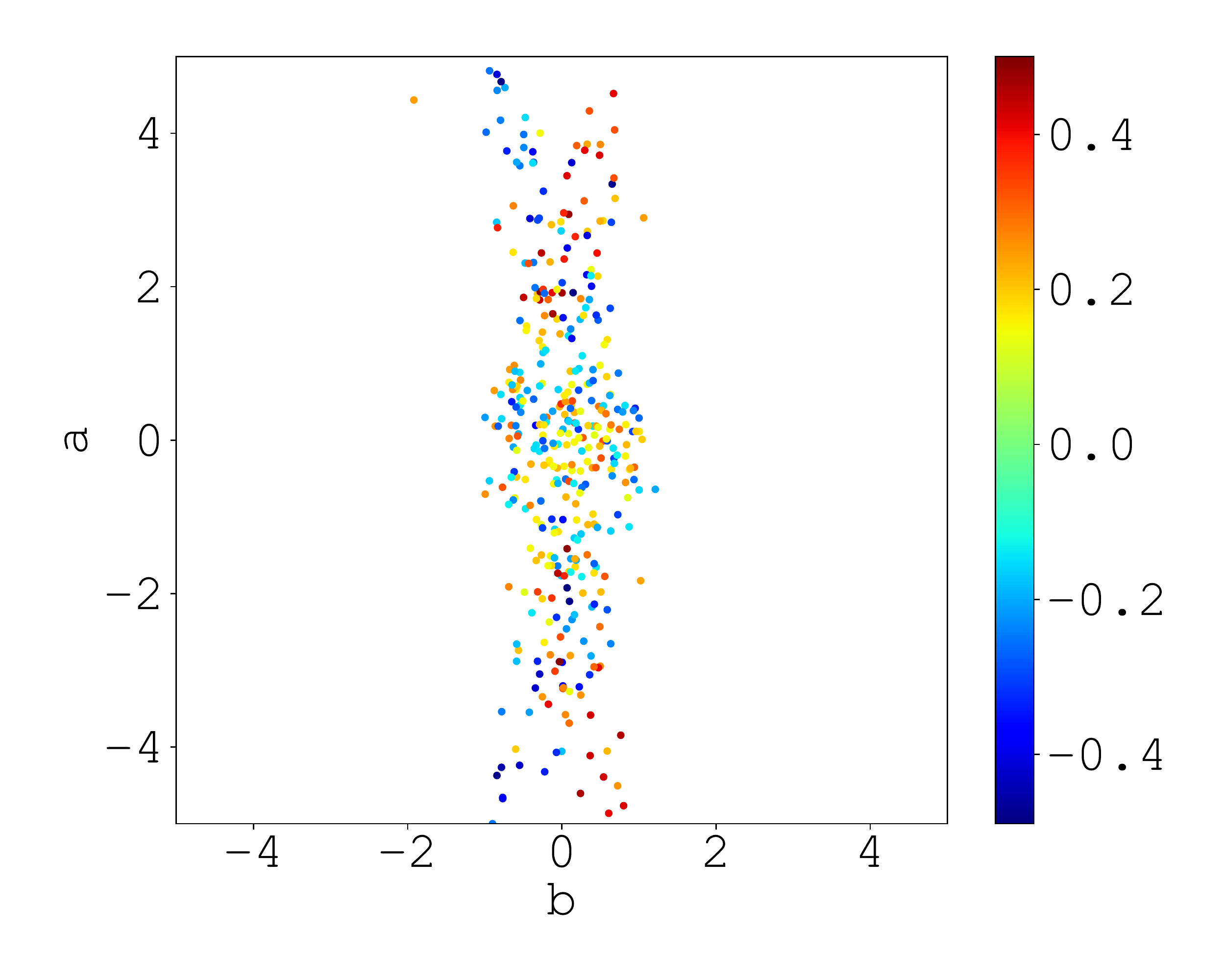}
    \caption{tanh, adam}
    \end{subfigure}%
    \begin{subfigure}[c]{0.33\textwidth}
    \includegraphics[width=\linewidth, trim=1cm 0cm 1cm 1cm, clip]{./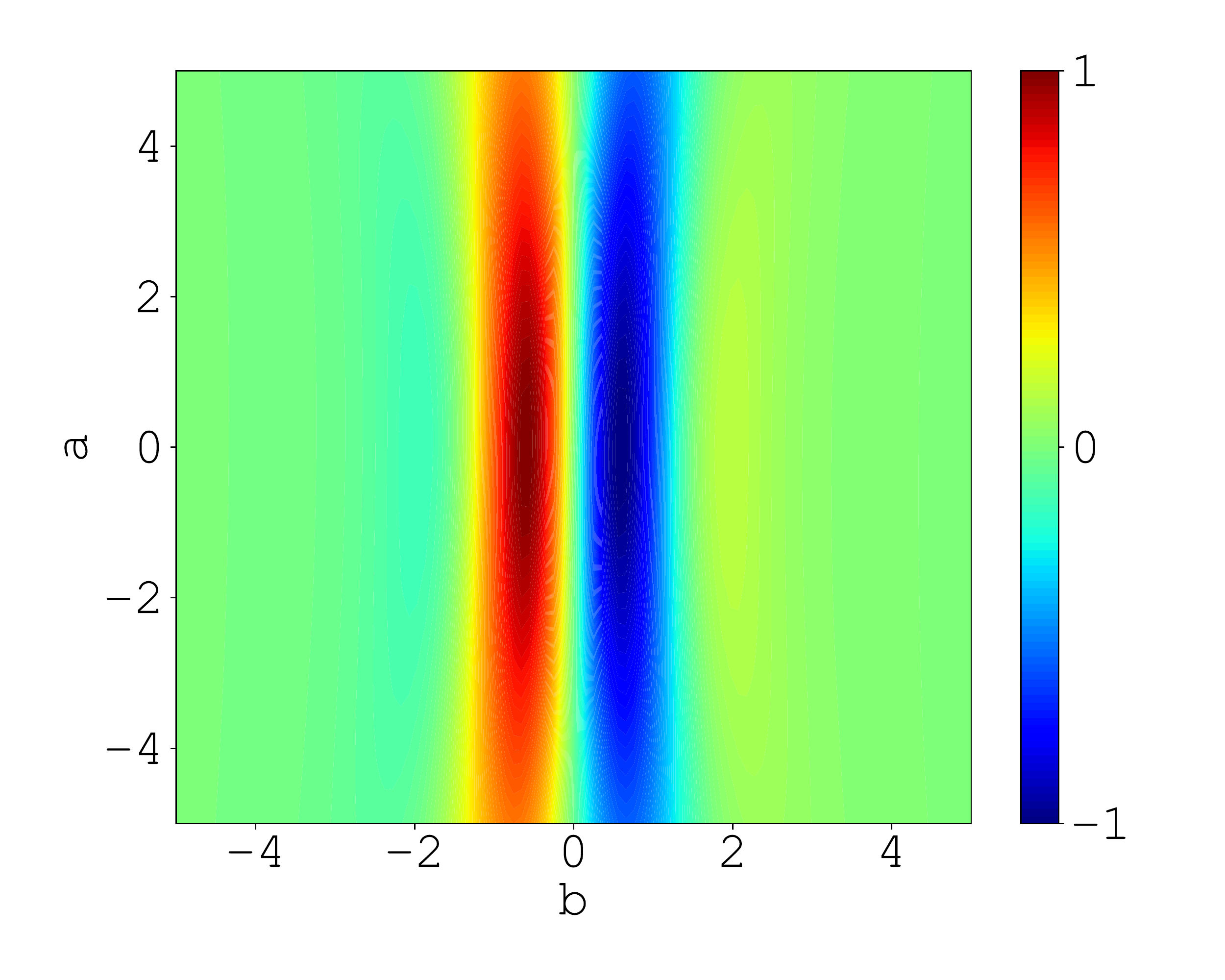}
    \caption{tanh}
    \end{subfigure}\\
    \begin{subfigure}[c]{0.33\textwidth}
    \includegraphics[width=\linewidth, trim=1cm 0cm 1cm 1cm, clip]{./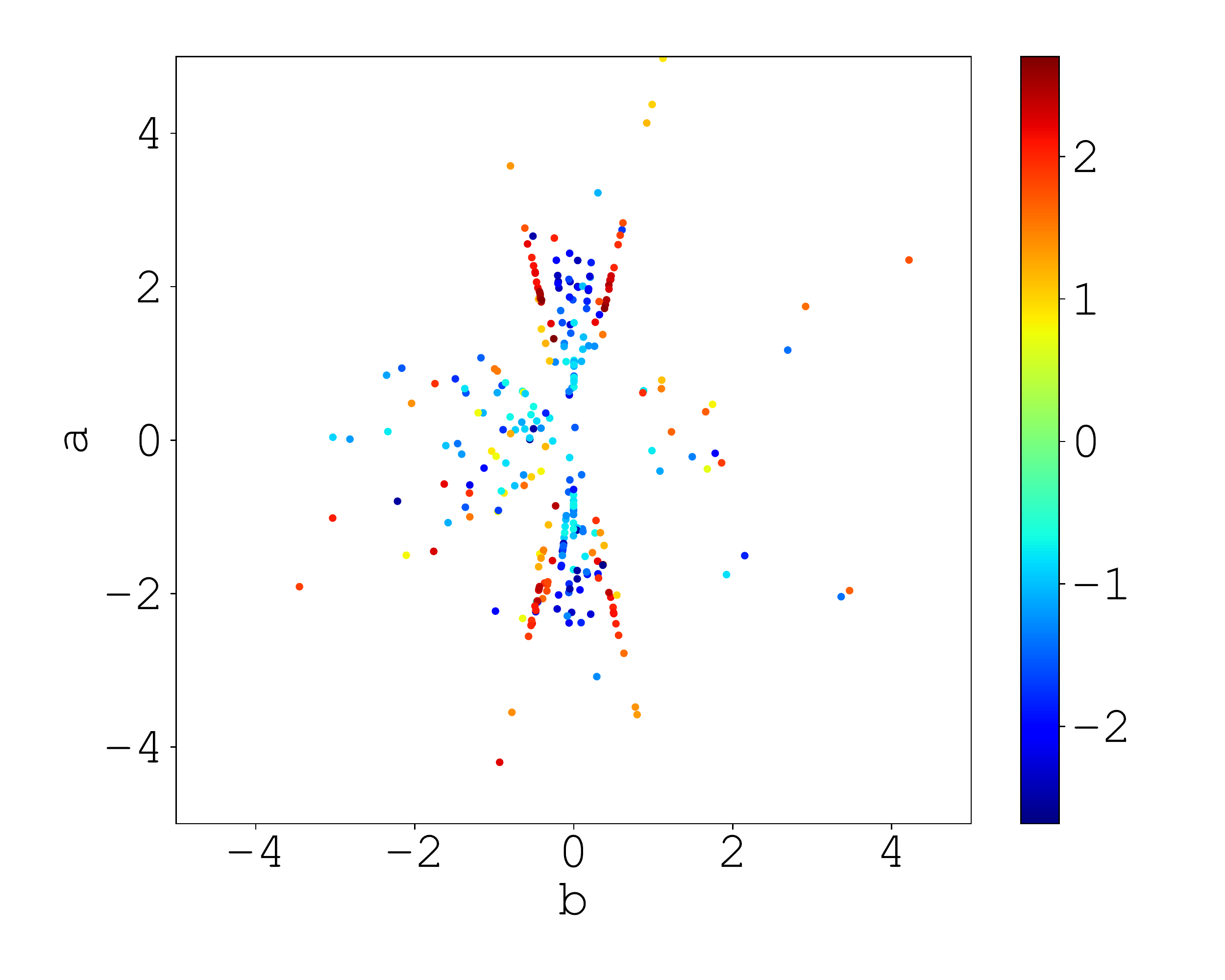}
    \caption{relu, bfgs}
    \end{subfigure}%
    \begin{subfigure}[c]{0.33\textwidth}
    \includegraphics[width=\linewidth, trim=1cm 0cm 1cm 1cm, clip]{./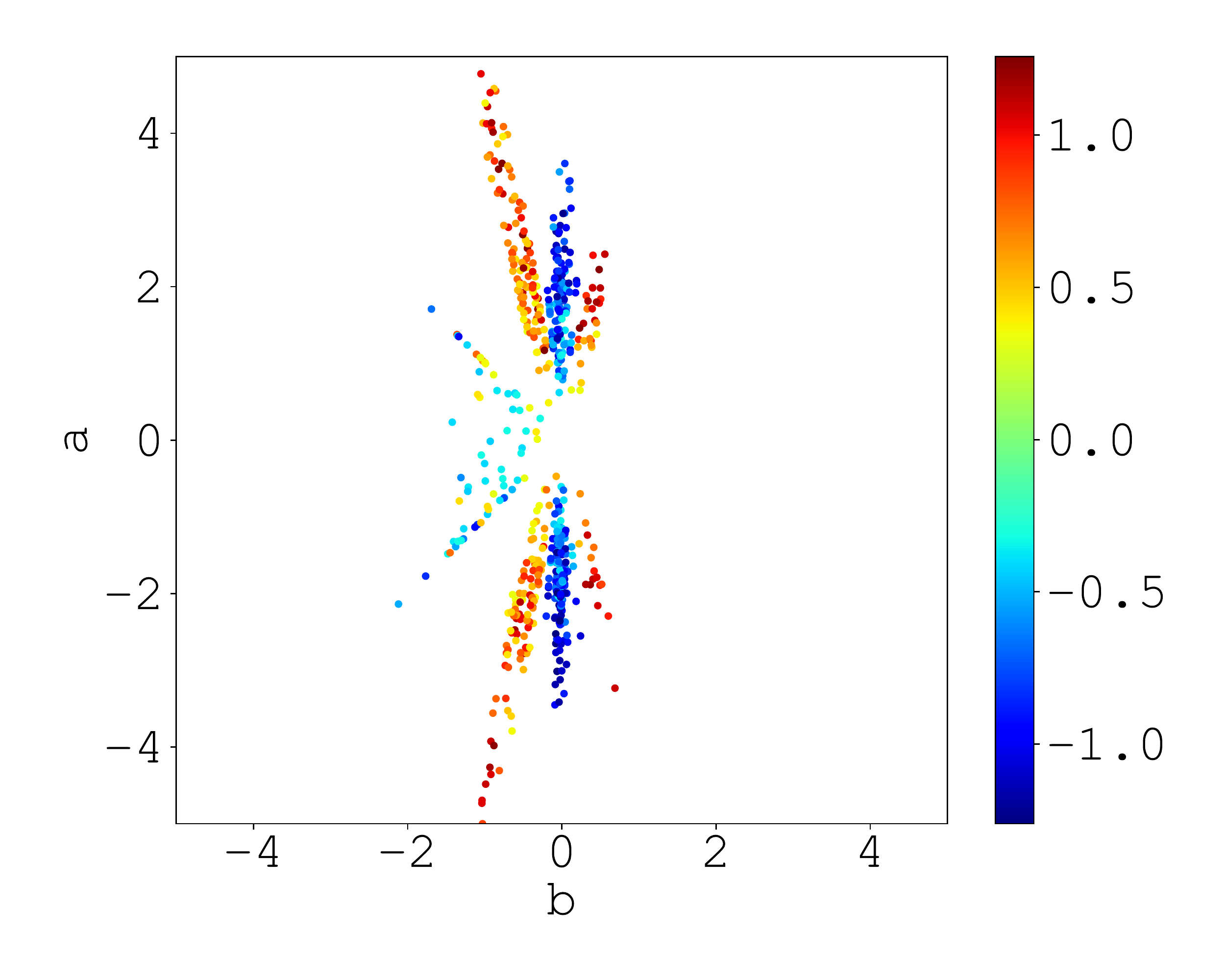}
    \caption{relu, adam}
    \end{subfigure}%
    \begin{subfigure}[c]{0.33\textwidth}
    \includegraphics[width=\linewidth, trim=1cm 0cm 1cm 1cm, clip]{./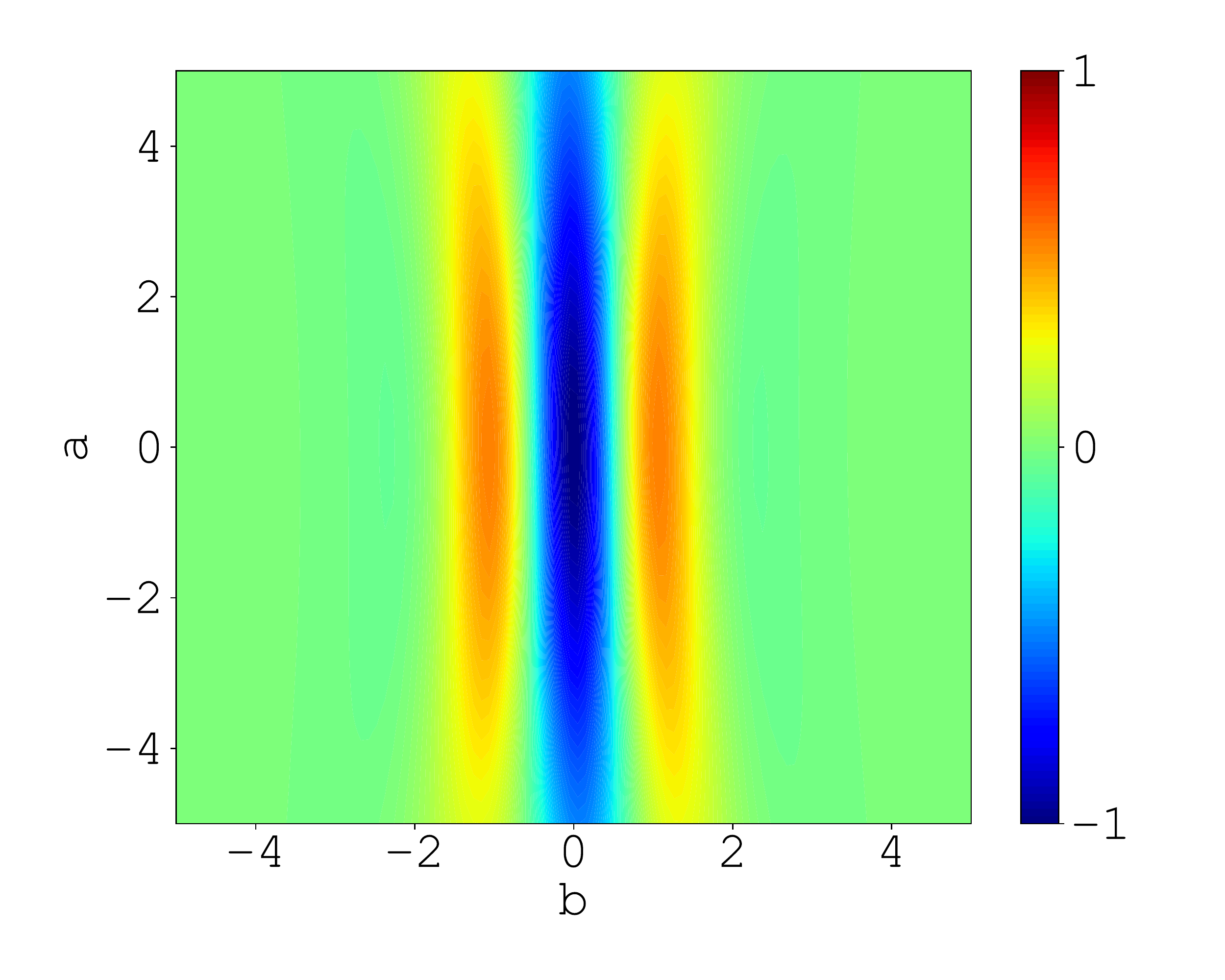}
    \caption{relu}
    \end{subfigure}\\
\caption{Gaussian Kernel  $\mu = 0.0$}
\label{fig:rbf.pos0000.n0000}
\end{figure}

\begin{figure}[h]
    \begin{center}
    \begin{subfigure}[c]{0.66\textwidth}
    \includegraphics[width=\linewidth, trim=0cm 0cm 0cm 0cm, clip]{./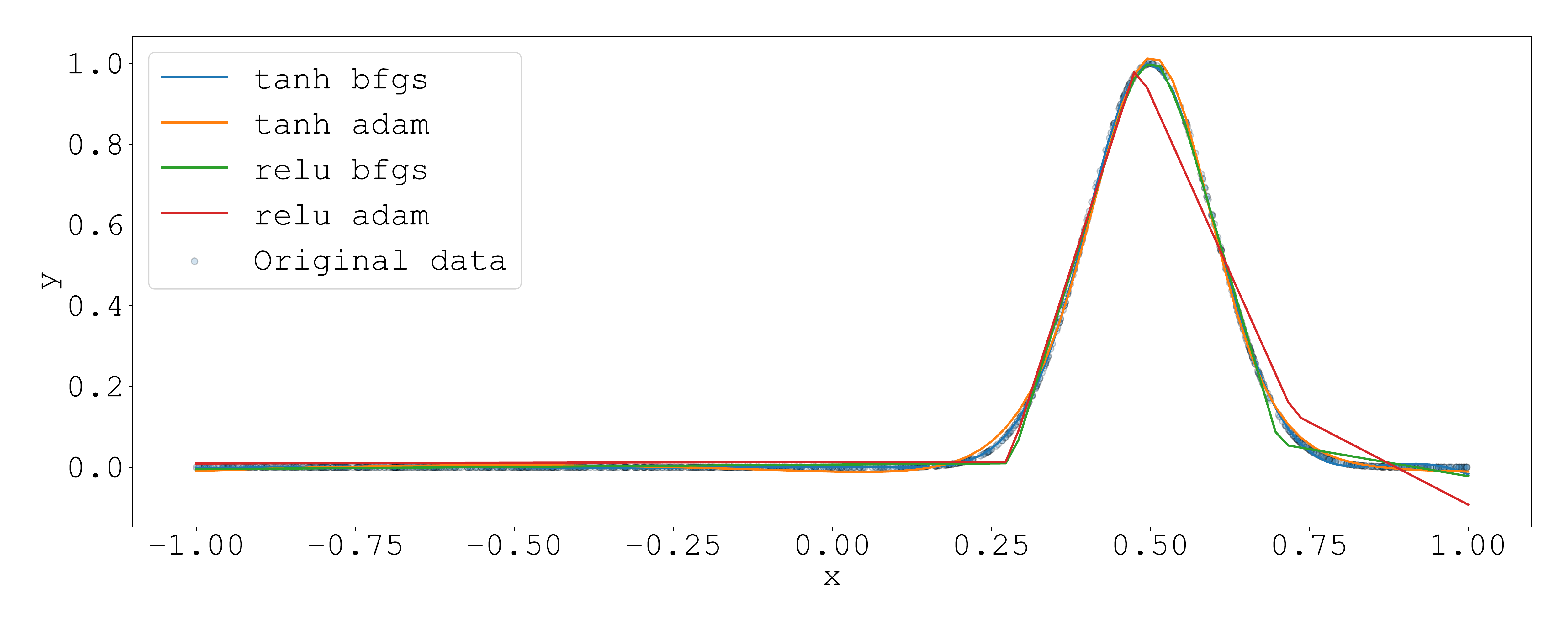}
    \caption{dataset}
    \end{subfigure}
    \end{center}
    \begin{subfigure}[c]{0.33\textwidth}
    \includegraphics[width=\linewidth, trim=1cm 0cm 1cm 1cm, clip]{./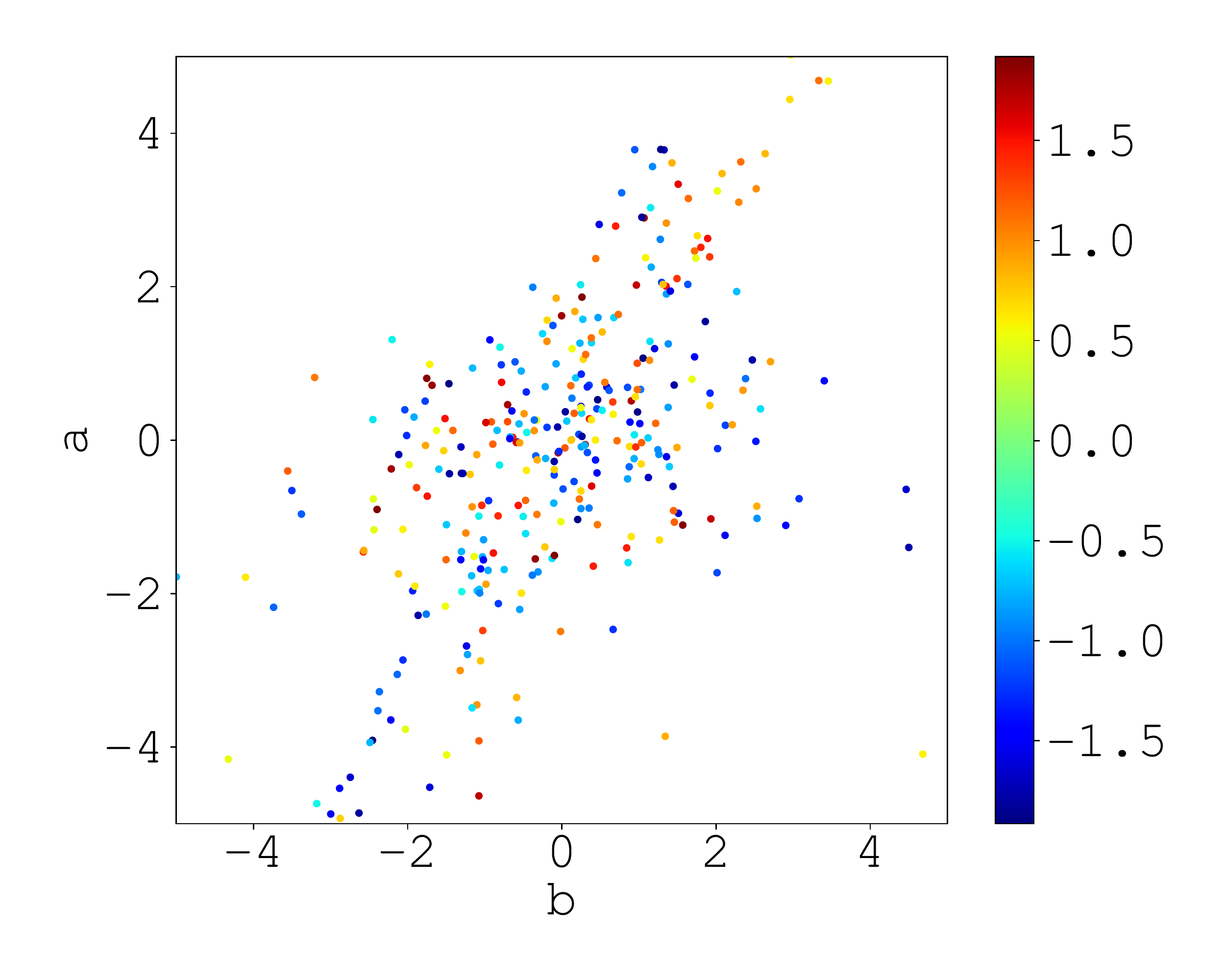}
    \caption{tanh, bfgs}
    \end{subfigure}%
    \begin{subfigure}[c]{0.33\textwidth}
    \includegraphics[width=\linewidth, trim=1cm 0cm 1cm 1cm, clip]{./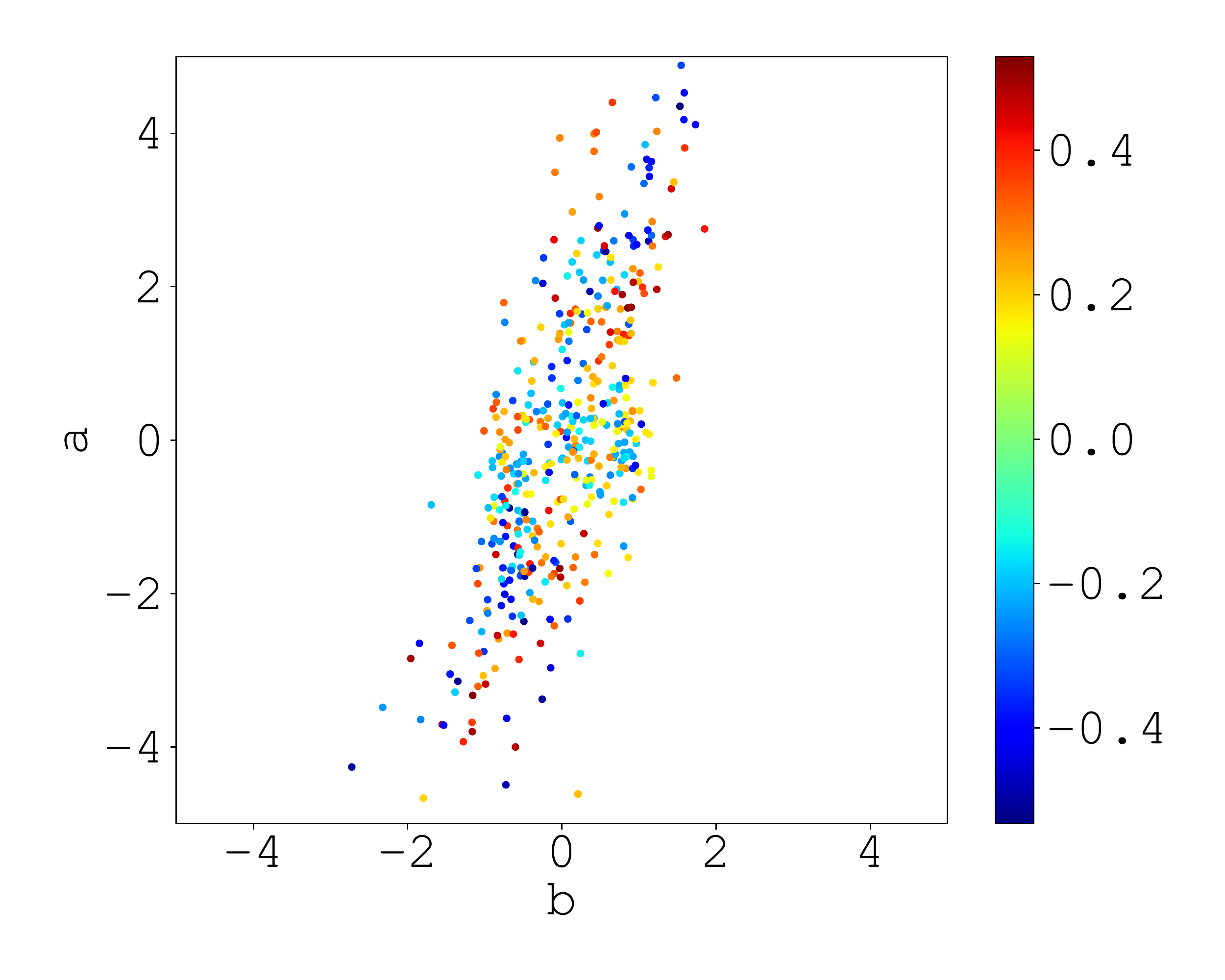}
    \caption{tanh, adam}
    \end{subfigure}%
    \begin{subfigure}[c]{0.33\textwidth}
    \includegraphics[width=\linewidth, trim=1cm 0cm 1cm 1cm, clip]{./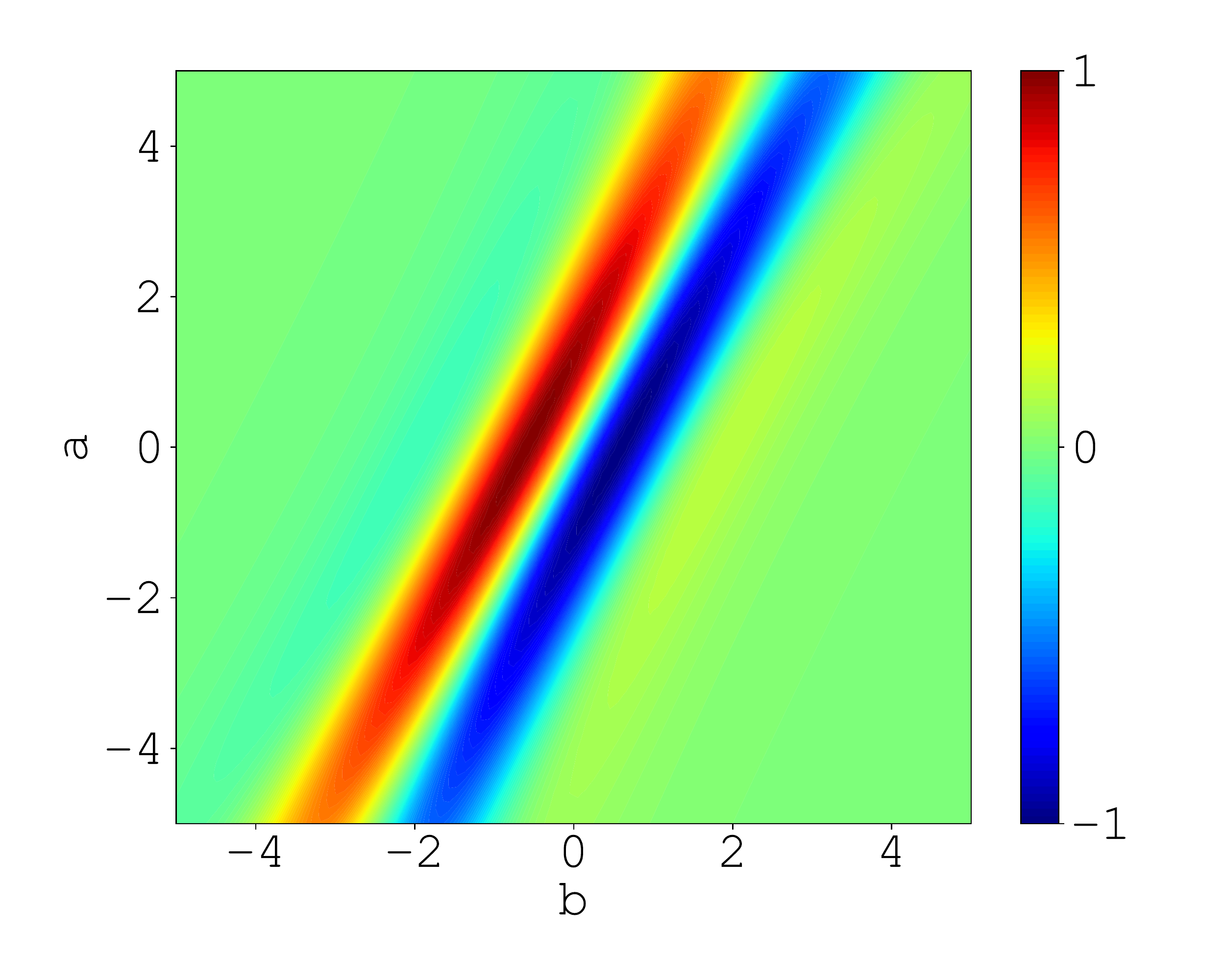}
    \caption{tanh}
    \end{subfigure}\\
    \begin{subfigure}[c]{0.33\textwidth}
    \includegraphics[width=\linewidth, trim=1cm 0cm 1cm 1cm, clip]{./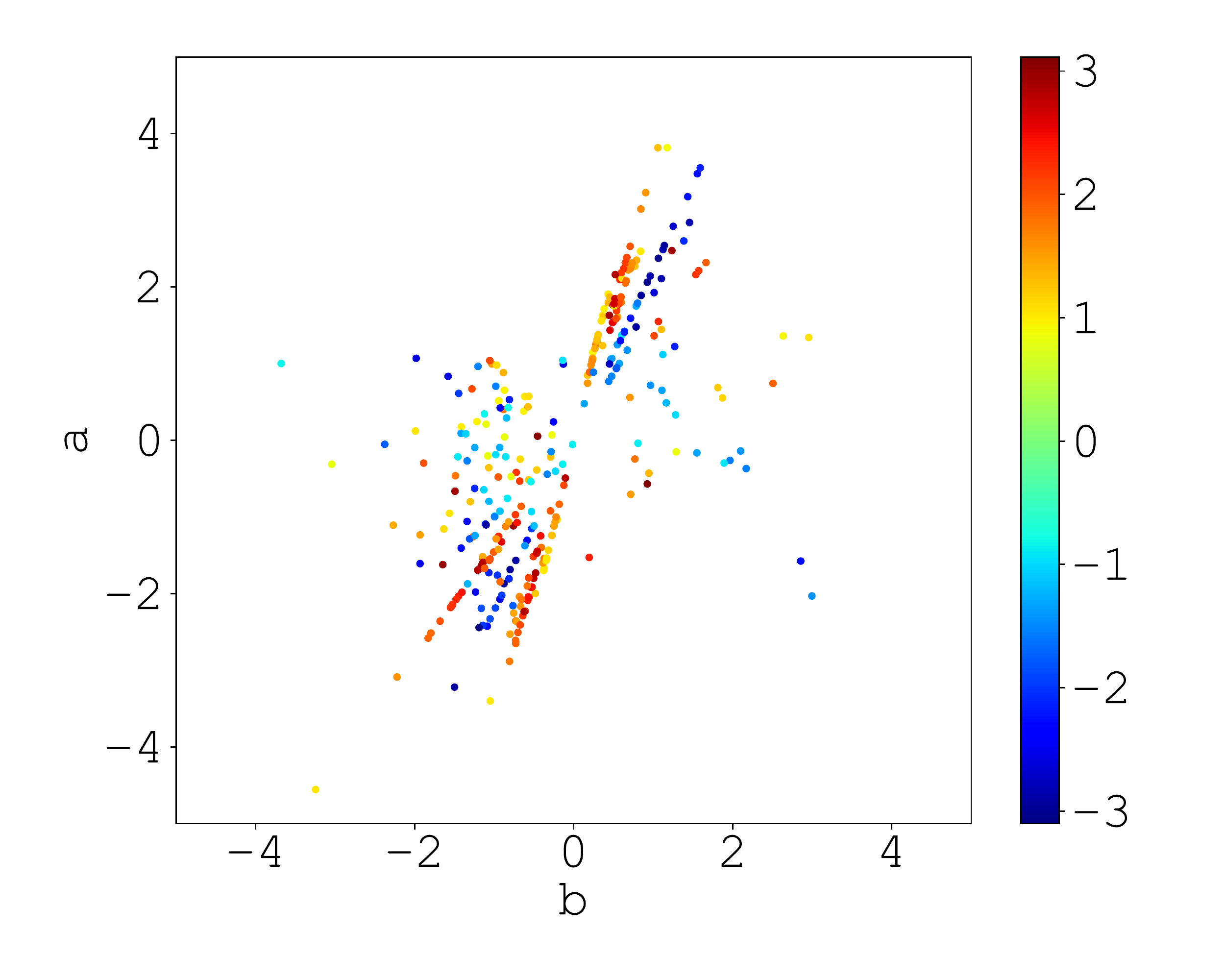}
    \caption{relu, bfgs}
    \end{subfigure}%
    \begin{subfigure}[c]{0.33\textwidth}
    \includegraphics[width=\linewidth, trim=1cm 0cm 1cm 1cm, clip]{./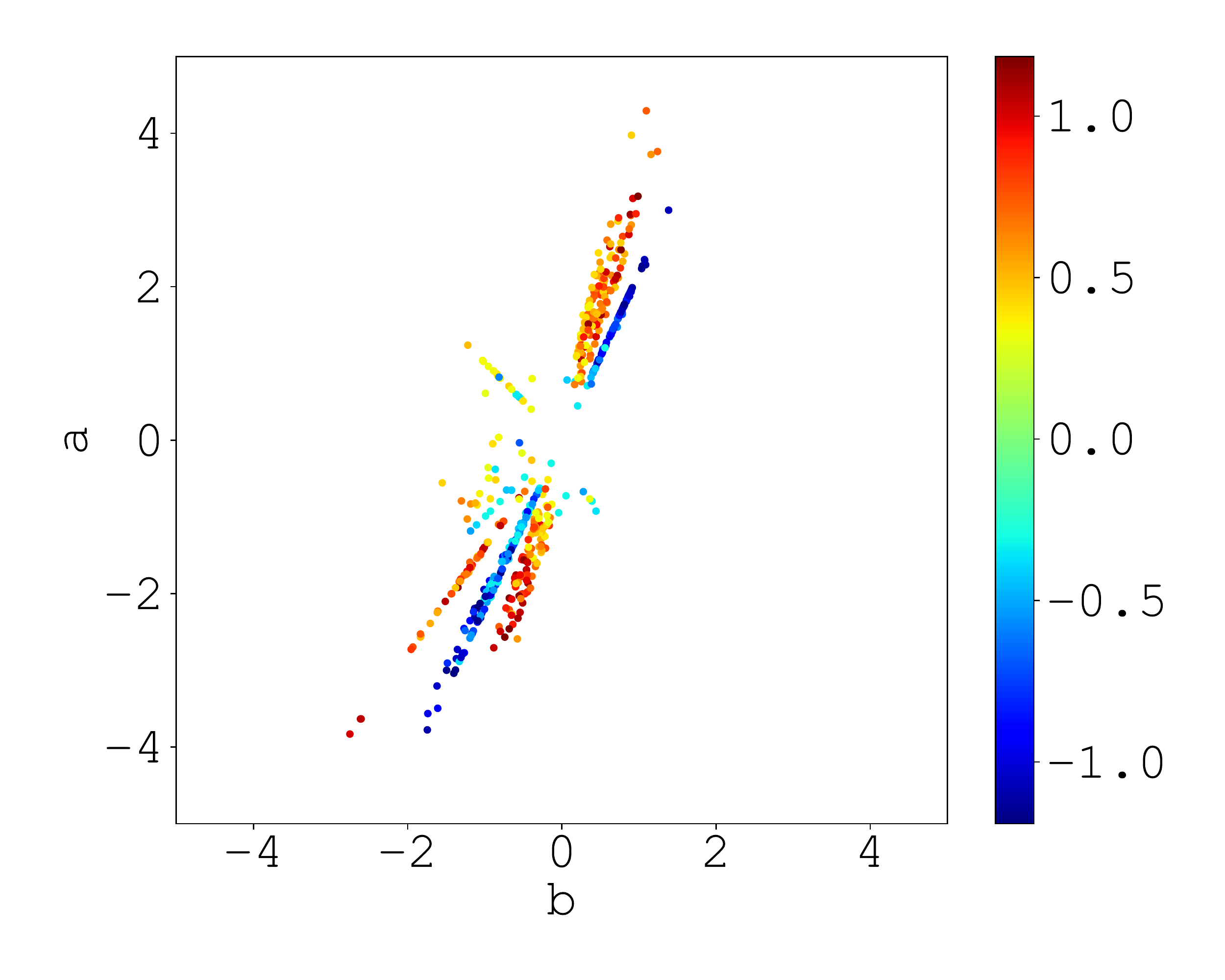}
    \caption{relu, adam}
    \end{subfigure}%
    \begin{subfigure}[c]{0.33\textwidth}
    \includegraphics[width=\linewidth, trim=1cm 0cm 1cm 1cm, clip]{./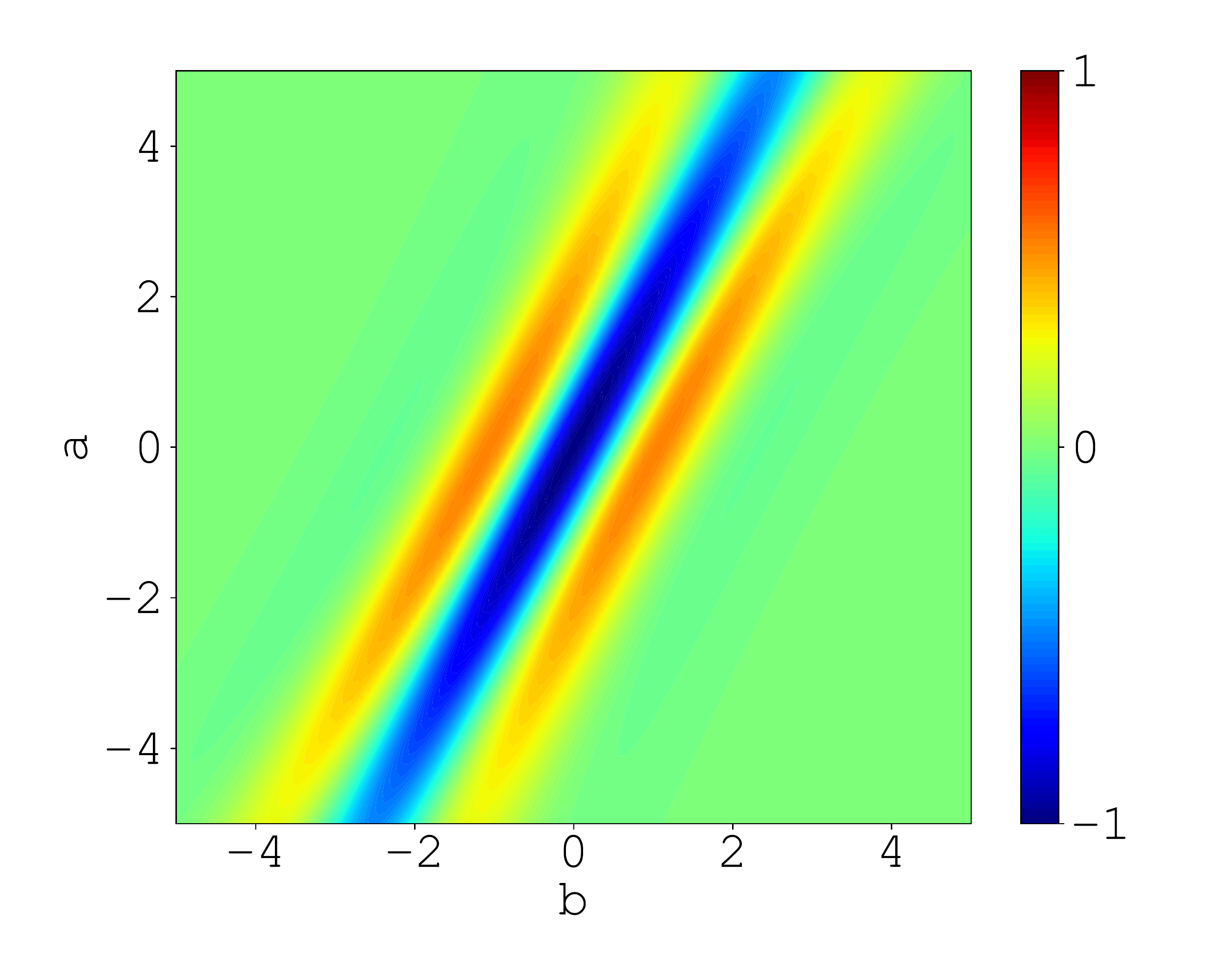}
    \caption{relu}
    \end{subfigure}\\
\caption{Gaussian Kernel  $\mu = 0.5$}
\label{fig:rbf.pos0050.n0000}
\end{figure}

\begin{figure}[h]
    \begin{center}
    \begin{subfigure}[c]{0.66\textwidth}
    \includegraphics[width=\linewidth, trim=0cm 0cm 0cm 0cm, clip]{./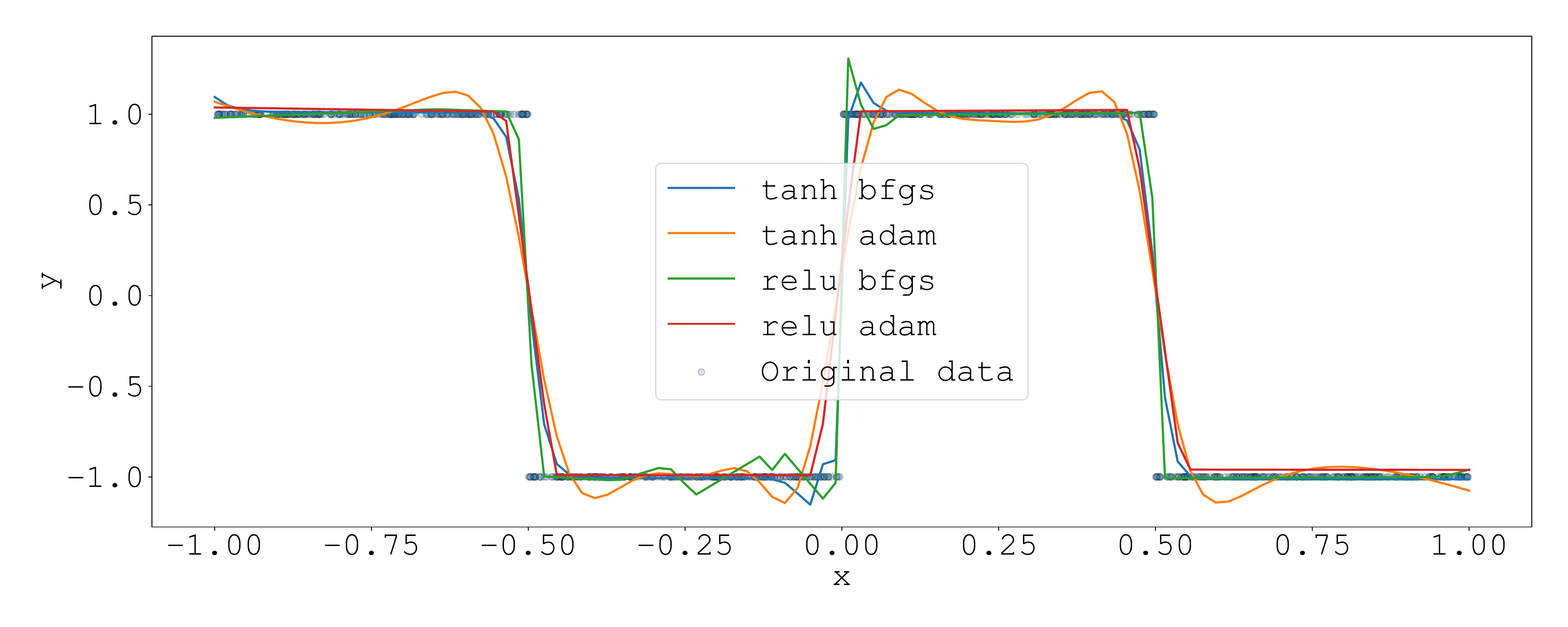}
    \caption{dataset}
    \end{subfigure}
    \end{center}
    \begin{subfigure}[c]{0.33\textwidth}
    \includegraphics[width=\linewidth, trim=1cm 0cm 1cm 1cm, clip]{./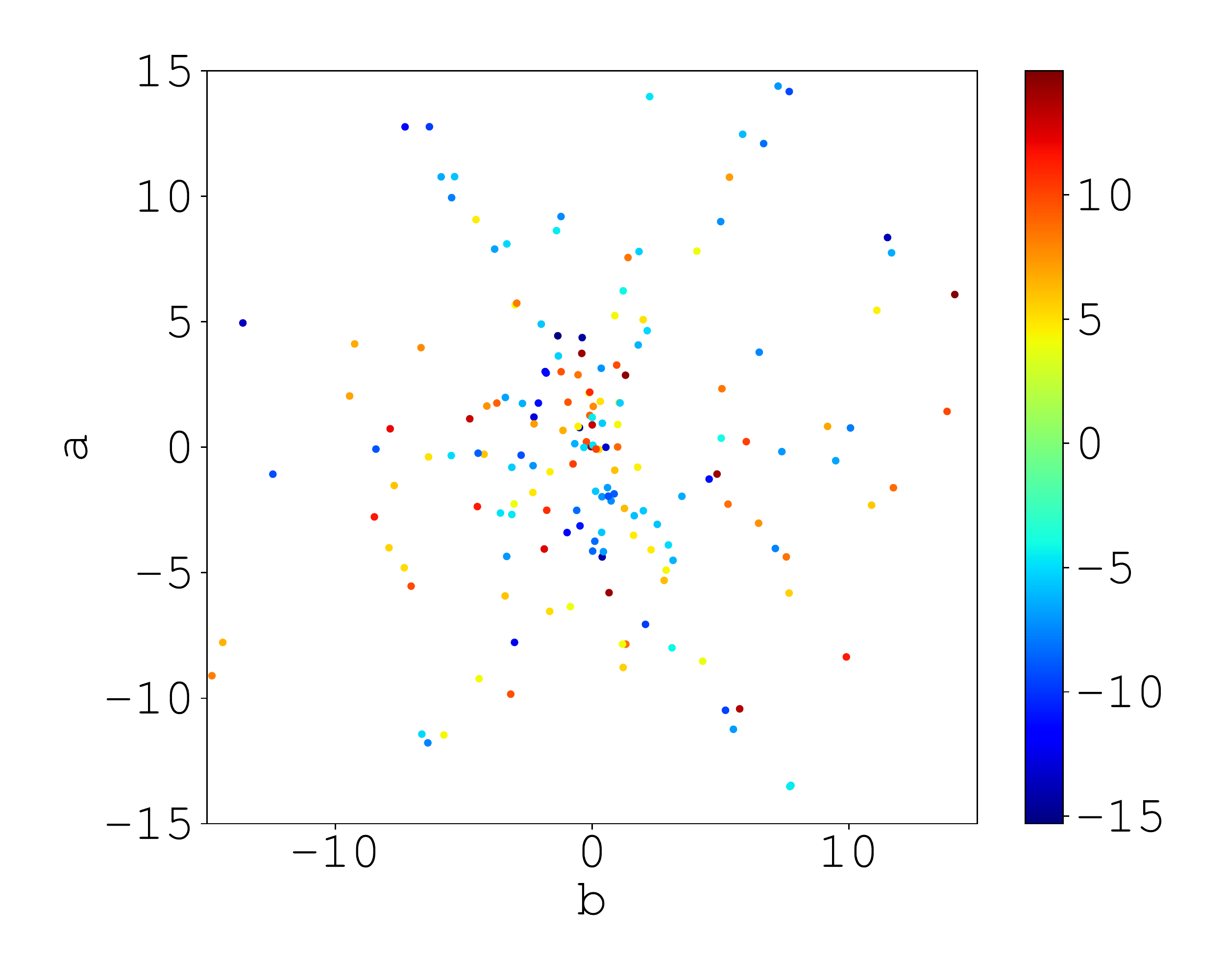}
    \caption{tanh, bfgs}
    \end{subfigure}%
    \begin{subfigure}[c]{0.33\textwidth}
    \includegraphics[width=\linewidth, trim=1cm 0cm 1cm 1cm, clip]{./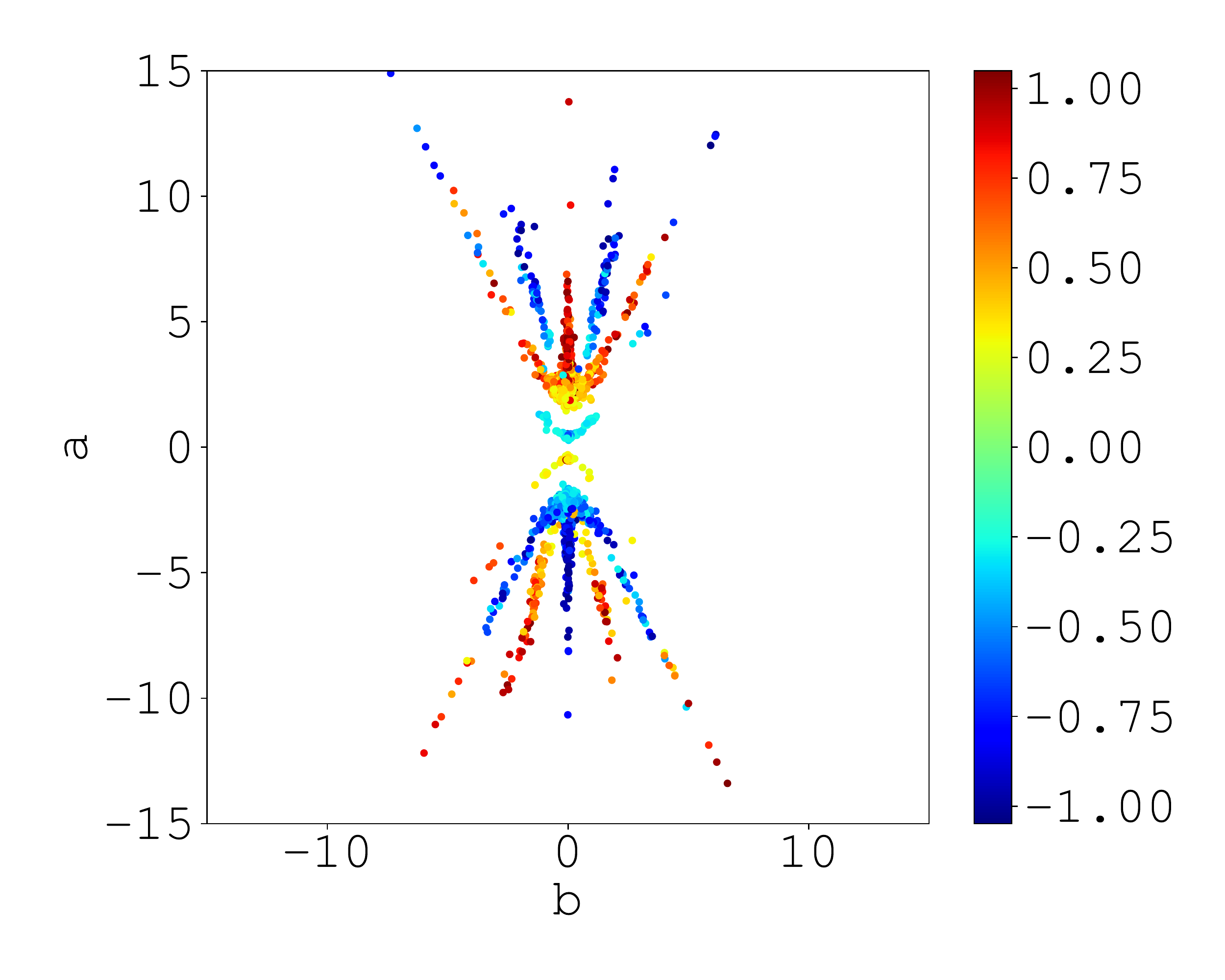}
    \caption{tanh, adam}
    \end{subfigure}%
    \begin{subfigure}[c]{0.33\textwidth}
    \includegraphics[width=\linewidth, trim=1cm 0cm 1cm 1cm, clip]{./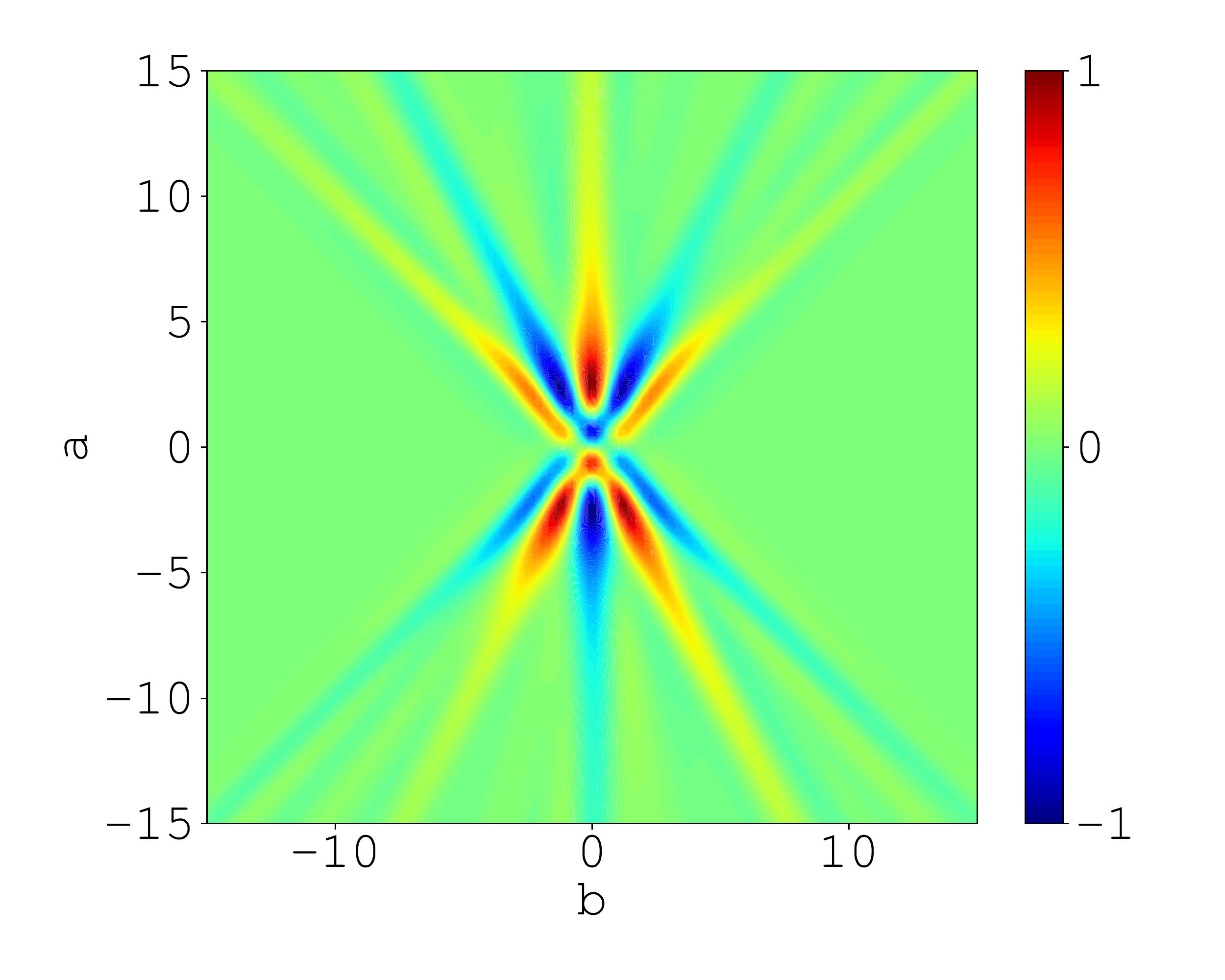}
    \caption{tanh}
    \end{subfigure}\\
    \begin{subfigure}[c]{0.33\textwidth}
    \includegraphics[width=\linewidth, trim=1cm 0cm 1cm 1cm, clip]{./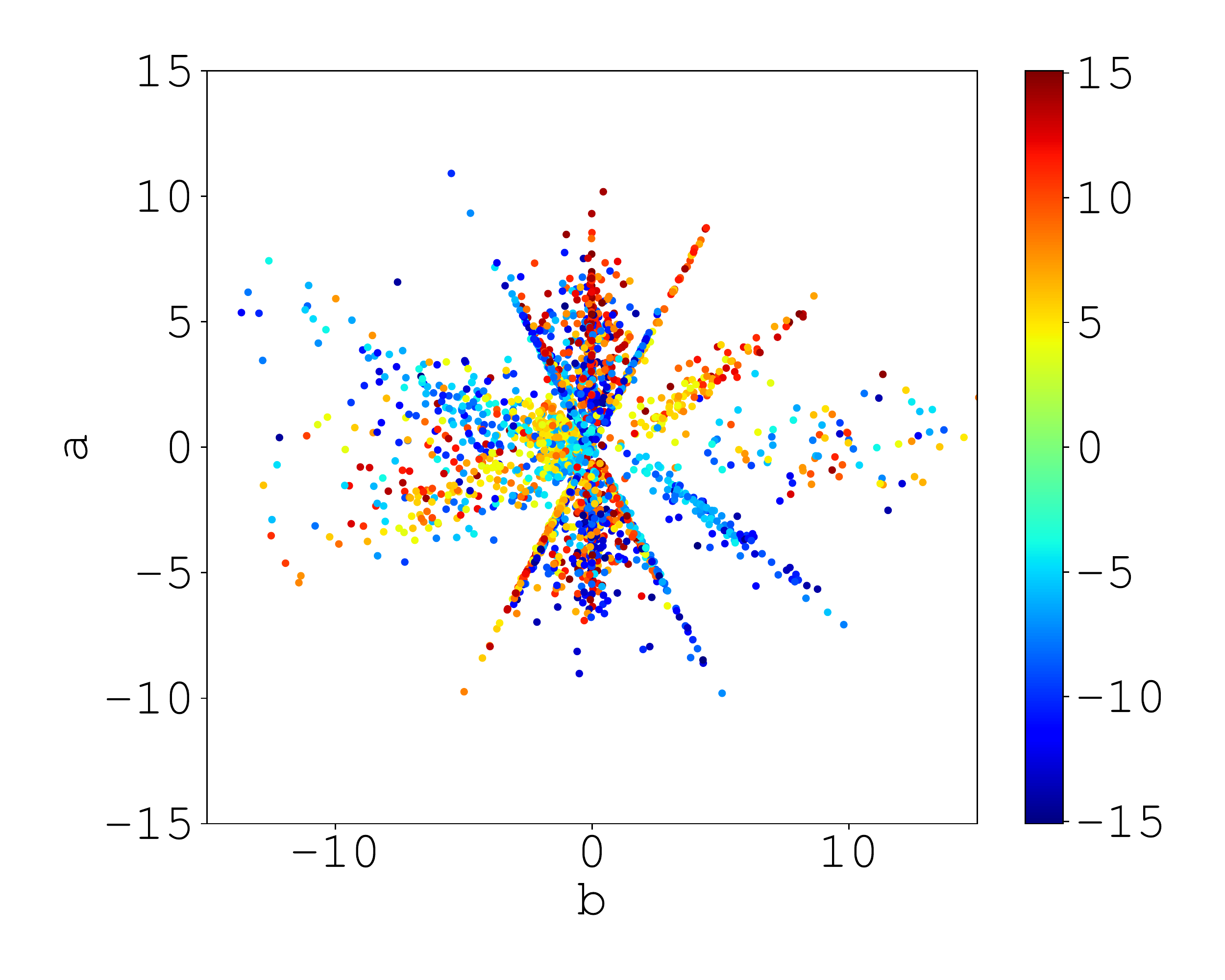}
    \caption{relu, bfgs}
    \end{subfigure}%
    \begin{subfigure}[c]{0.33\textwidth}
    \includegraphics[width=\linewidth, trim=1cm 0cm 1cm 1cm, clip]{./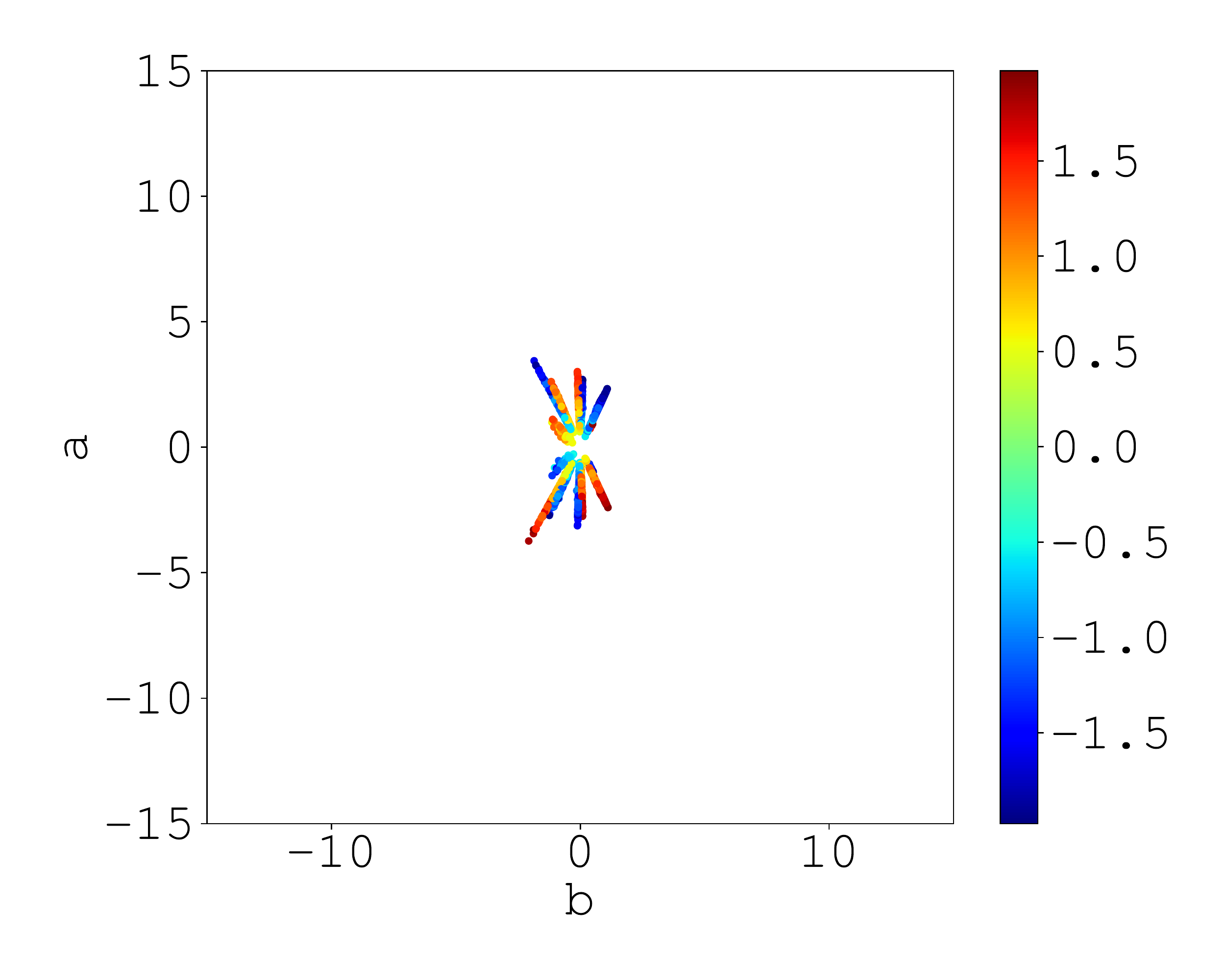}
    \caption{relu, adam}
    \end{subfigure}%
    \begin{subfigure}[c]{0.33\textwidth}
    \includegraphics[width=\linewidth, trim=1cm 0cm 1cm 1cm, clip]{./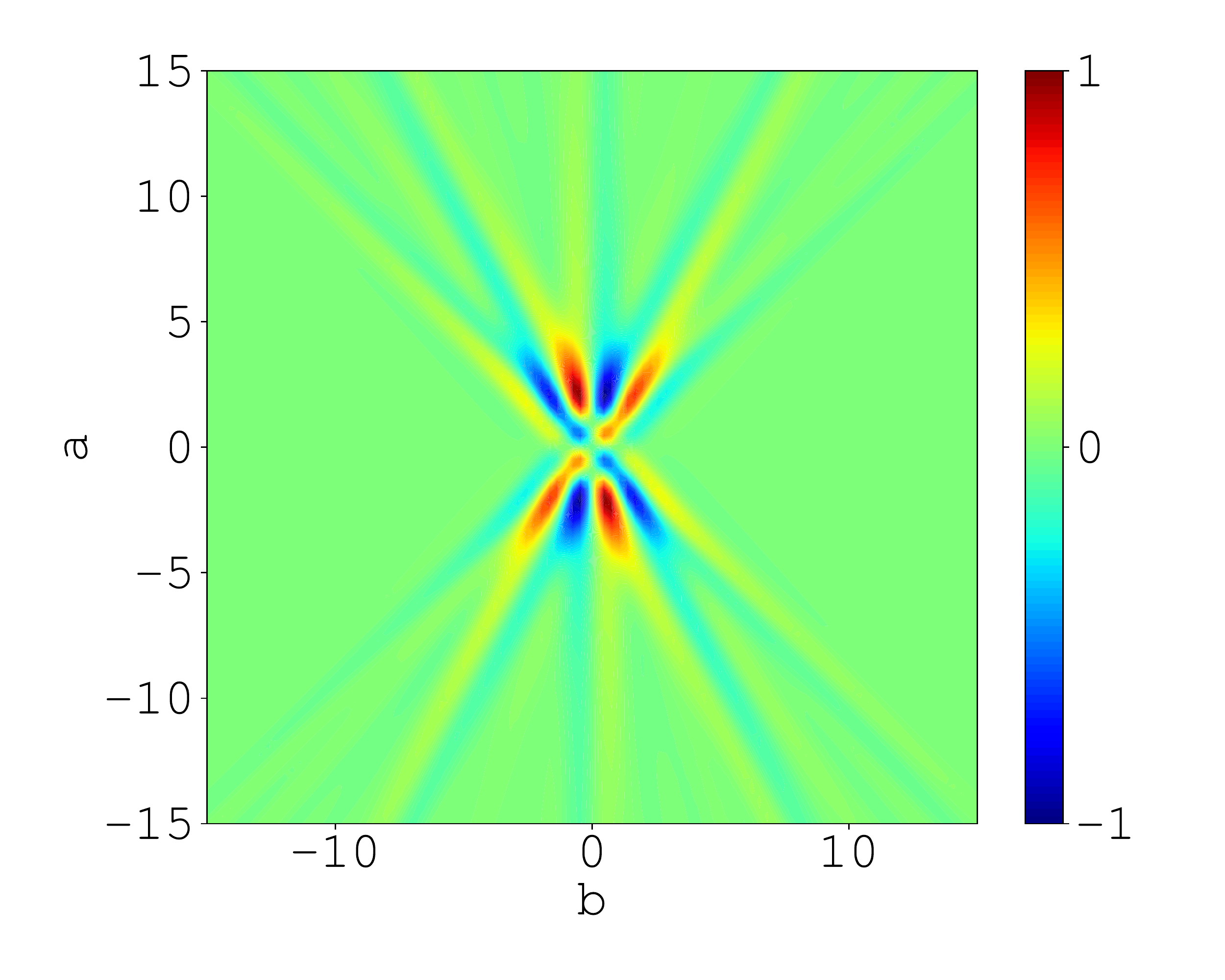}
    \caption{relu}
    \end{subfigure}\\
\caption{Square Wave}
\label{fig:sq02pt.n0000}
\end{figure}


\end{document}